\newtheorem{theorem}{Theorem}
\newtheorem{corollary}[theorem]{Corollary}
\newtheorem{example}[theorem]{Example}
\newtheorem{lemma}[theorem]{Lemma}
\newtheorem{claim}[theorem]{Claim}
\newtheorem{definition}[theorem]{Definition}
\newcolumntype{H}{>{\setbox0=\hbox\bgroup}c<{\egroup}@{}}
\@nx\else[{#1}]\fi}
\colorlet{vertexTopColor}{white}
\colorlet{vertexBottomColor}{black!10}
\newcommand{\postrebuttal}[1]{\textcolor{black}{#1}}
\newcommand{\lefttriangle}{\hfill$\triangleleft$}
\newcommand{\claimqedsymbol}{\hfill$\blacksquare$}
\newenvironment{claimproof}
{%
	\begin{proof}[Proof of Claim]%
	}
	{%
		\hfill\claimqedsymbol\par
	\end{proof}%
}
\begin{document} 

\title{Can You Tell the Difference? Contrastive Explanations for ABox Entailments}

\author[1]{Patrick Koopmann\thanks{\texttt{p.k.koopmann@vu.nl}}}
\author[2]{Yasir Mahmood\thanks{\texttt{yasir.mahmood@uni-paderborn.de}}}
\author[2]{Axel-Cyrille Ngonga Ngomo\thanks{\texttt{axel.ngonga@upb.de}}}
\author[2]{Balram Tiwari\thanks{\texttt{balram@mail.uni-paderborn.de}}}

\affil[1]{Knowledge in Artificial Intelligence, Vrije Universiteit Amsterdam, The Netherlands}
\affil[2]{Data Science Group, Heinz Nixdorf Institute, Paderborn University, Germany}

\date{\vspace{-5ex}}  

\maketitle

	\begin{abstract}
	We introduce the notion of contrastive ABox explanations
	to answer questions of the type
	``Why is $a$ an instance of $C$, but $b$ is not?''.
	%
	While there are various approaches for explaining
	positive entailments (why is $C(a)$ entailed by the knowledge base)
	as well as missing entailments (why is $C(b)$ not entailed)
	in isolation, contrastive explanations consider both at the same time,
	which allows them to focus on the relevant commonalities and differences
	between $a$ and $b$.
	%
	We develop an appropriate notion of contrastive explanations
	for the special case of ABox reasoning with description logic ontologies, and
	analyze the computational complexity for different variants 
	under different optimality criteria, considering
	lightweight as well as more expressive description logics.
	We implemented a first method for computing
	one variant of contrastive explanations,
	and evaluated it on generated problems for realistic knowledge bases.
\end{abstract}

\section{Introduction}

A key advantage of knowledge representation systems is that they enable transparent and explainable decision-making.
For example, with an ontology formalized in a description logic (DL)~\cite{baader2003description,hitzler2009foundations} we can infer implicit information from
data (then called an \emph{ABox}) through logical reasoning, and all inferences are based on explicit statements in the ontology and data.
However, due to the expressive power of DLs and the complexity of realistic ontologies, inferences obtained through
reasoning may not always be immediately understandable.
Consequently, in recent years, significant attention has been devoted to explaining \emph{why} or \emph{why not} something is entailed by a DL knowledge base (KB).
The \emph{why} question is typically answered through \emph{justifications}~\cite{
schlobach2003non,
horridge2011justification}.
For a KB $\calK$ (consisting of ontology and ABox statements) and an entailed axiom $\alpha$, a justification is a subset minimal
$\calJ\subseteq\calK$ such that $\calJ\models \alpha$.
Other techniques for explaining \emph{why} questions include \emph{proofs}~\cite{alrabbaa2022explaining} and \emph{Craig interpolants}~\cite{DBLP:conf/jelia/Schlobach04}.
To answer a \emph{why not} question, we can use \emph{abductive reasoning}
to determine what is missing in $\calK$ to derive $\alpha$~\cite{elsenbroich2006case,peirce1878deduction}.
Research in this area for DLs encompasses \emph{ABox abduction}~\cite{Del-PintoS19,Koopmann21a},
\emph{TBox abduction}~\cite{wei2014abduction,DuWM17,HaifaniKTW22},
\emph{KB abduction}~\cite{elsenbroich2006case,DBLP:conf/kr/KoopmannDTS20} and \emph{concept abduction}~\cite{Bienvenu08},
depending on the type of entailment to be explained.

\newcommand{\Qualified}{\concept{Qualified}}
\newcommand{\publishedAt}{\concept{publishedAt}}
\newcommand{\Journal}{\concept{Journal}}
\newcommand{\Interviewed}{\concept{Interviewed}}
\newcommand{\leads}{\concept{leads}}
\newcommand{\hasFunding}{\concept{hasFunding}}
\newcommand{\Group}{\concept{Group}}
\newcommand{\PostDoc}{\concept{PostDoc}}
\newcommand{\alice}{\entity{alice}}
\newcommand{\bob}{\entity{bob}}
\newcommand{\aaai}{\entity{aaai}}
\newcommand{\aij}{\entity{aij}}
\newcommand{\corp}{\entity{nsf}}
\newcommand{\cs}{\entity{kr}}

\patrick{By using macros also for the names used in the running example, we make sure that we do not accidentally
use incoherent names as we did in the KR submission. This is generally a good practice.}

If we query a KB for a set of objects, we may wonder why some object occurs in the answer but another does not. In this context, addressing the \emph{why} and \emph{why not} questions jointly can provide more clarity than considering them in isolation.
To illustrate this, consider a simplified KB for a hiring process that determines which candidates are considered for a job interview.
The KB uses a TBox with axioms
\begin{align*}
		(a)\ &\Qualified\sqcap \exists \publishedAt.\Journal \sqsubseteq \Interviewed,\\
		(b)\ & \exists \leads.\Group \sqcup \exists\hasFunding.\top\sqsubseteq \Qualified, \\
		(c)\ & \PostDoc \sqcap \exists \leads.\Group \subsum \bot \quad
\end{align*}
stating that (a) someone who is qualified and has published at a journal gets interviewed,
(b) someone who leads a group or has funding is qualified and (c) postdocs cannot lead groups.
\patrick{Order has to be flipped: I would first put the axioms, then explain them, otherwise the }
Further, we have an ABox with assertions
\begin{align*}
   & (1)\ \publishedAt(\alice, \aij),\qquad
		(2)\ \publishedAt(\bob, \aaai),\\
		&
		(3)\ \Journal(\aij),\quad 
		(4)\ \leads(\alice, \cs),\quad
		(5)\ \Group(\cs),\\
		&
		(6)\ \hasFunding(\alice, \corp),\qquad
		(7)\ \PostDoc(\bob)
\end{align*}
stating that (1)~Alice published at AIJ, (2)~Bob published at AAAI, (3)~AIJ is a journal,
(4--5) Alice leads the group KR, and
(6) receives funding from the NSF, and
(7) Bob is a Postdoc.
This knowledge base entails $\Interviewed(\alice)$, but
not $\Interviewed(\bob)$.
We can explain why Alice was interviewed with an ABox justification, e.g. $\{(1), (3), (4), (5)\}$
(\enquote{\emph{She published at the journal AIJ and leads the KR group}}).
To explain why Bob is not interviewed, we may use ABox abduction and obtain an
answer with a fresh individual $e$:
$$\{\quad \Journal(\aaai),\qquad \hasFunding(\bob,e) \quad\}$$
(\enquote{\emph{If AAAI was a journal, and Bob received
funding, he would have been interviewed.}})
For the question \enquote{\emph{Why was Alice interviewed, but not Bob?}}, those explanations are not ideal, since they consider different reasons for being qualified (funding vs. leading a group).
A better \emph{contrastive explanation} would be:
\enquote{\emph{Alice's publication is at a journal and Bob's is not, and only Alice receives funding.}}

Formally, a contrastive explanation problem consists of a concept ($\Interviewed$), a \emph{fact} individual that is an instance of the concept ($\alice$),
and a \emph{foil} individual that is not an instance ($\bob$).
Such contrastive ABox explanation problems are also motivated in the context of \emph{concept learning}~\cite{DBLP:journals/ml/LehmannH10,DBLP:conf/ijcai/FunkJLPW19,DBLP:conf/www/HeindorfBDWGDN22}, where the aim is to learn a concept from positive and negative examples. 
Contrastive explanations allow to explain the learned concepts in the light of a positive and a negative example.
\postrebuttal{Moreover, in medical domain, such explanations allow patient-history comparisons to see why certain treatments are possible for one patient but not for the other.} 

The notion of contrastive explanations appears first in the work of Lipton~(\cite{lipton1990contrastive}).
The main theme of Lipton's work is to express an inquirer's \emph{preference} or reflect their demand regarding the \emph{context} in which an explanation is requested (e.g., explain why Bob was not interviewed \emph{in the context of Alice}, who was interviewed).
Contrastive explanations have since been considered for 
answer set programming~\cite{eiter2023contrastive} with aim of explaining why some atoms are in an answer set instead of others.  
The idea has also been used to explain classification results of machine learning
models~\cite{dhurandhar2018explanations,IgnatievNA020,stepin2021survey,Miller_2021,xai}.
A related concept are \emph{counter-factual explanations} used in machine learning~\cite{verma2020counterfactual,dandl2020multi}.
What these approaches have in common is that they look at similarities and differences at the same time, and use syntactic \emph{patterns} to highlight the differences. 
We use a similar idea in the context of ABox reasoning by using \emph{ABox patterns}
which are instantiated differently for the fact and the foil.

\paragraph{Contributions}
Our notion of contrastive ABox explanations (CEs) is quite general, and
can also allow for contradictions with the KB.
We distinguish between a syntactic and a semantic version, 
consider different optimality criteria 
and analyze them theoretically for different DLs ranging from light-weight \EL to the more expressive \ALCI (see Table~\ref{table:complexity} for an overview).
Our contributions are three fold:
\begin{enumerate}
	\item We introduce contrastive ABox explanations along with several variants and optimality criteria.
	\item We characterize the complexity for various reasoning problems spanning five dimensions:
	variants, preference measures, types of optimality, DLs, and concept types. 
	\item We implemented a first practical method 
	and evaluated it
	on realistic ontologies.
\end{enumerate}

Full proofs of our results and details on experimental evaluation can be found in the 
\postrebuttal{technical appendix found at the end of this pdf}.
%
%
%

\begin{table}
	\centering
		\setlength{\tabcolsep}{1mm}
		\small
		\begin{tabular}{lccc}
			\toprule
			optimality & fresh ind.
			& {$\ELbot$}  &
			{$\ALC$, $\ALCI$} \\
			
			& & $\subseteq$ / $\leq$ & $\subseteq$ / $\leq$ \\
			\midrule
			\emph{diff-min}
			&
			& $\leq \Ptime^{{\text{T\ref{lem:dif-min}}}}$ / $\co\NP\text{-c}^\text{T\ref{thm:el-ground-syn-size}}$ 
			& $\EXP\text{-c}^\text{T\ref{lem:dif-min},T\ref{thm:el-ground-syn-size}}$
			\\
			\multirow{2}{*}{\emph{conf-min}}
			& {yes}
			& $\EXP\text{-c}^\text{T\ref{thm:conf-in}}$ 
			& $\co\NEXP\text{-c}^\text{T\ref{thm:conf-in}}$
			\\
			& {no}
			& $\co\NP\text{-c}^\text{T\ref{thm:elbot-no-fresh-subset}}$
			& $\EXP\text{-c}^\text{T\ref{thm:elbot-no-fresh-subset}}$
			\\
			%
			{\emph{com-max}}
			&
			& open /$\co\NP\text{-c}^\text{T\ref{thm:el-ground-syn-sim-size}}$ 
			& $\EXP\text{-c}^\text{T\ref{thm:el-ground-syn-sim-size}}$
			\\
			\bottomrule
		\end{tabular}
	\caption{
		Complexity results for verifying minimality of CEs.
	}
	
	\label{table:complexity}
\end{table}

%
%
\section{Description Logics}

We give a short exposition to the relevant DLs~\cite{DL_TEXTBOOK}.
Let $\NI$, $\NC$, and $\NR$ denote countably infinite, mutually disjoint sets of \emph{individual}, \emph{concept}
and \emph{role names}, respectively.
$\ALCI$ \emph{concepts} are concept names or built following the syntax rules in \Cref{tab:semantics}, where $R$
stands for a role name $r\in\NR$ or its inverse $r^-$.
We define further concepts as syntactic sugar: \emph{top} $\top=A\sqcup\neg A$, \emph{bottom} $\bot=A\sqcap\neg A$,
\emph{disjunction} $C\sqcup D=\neg(\neg C\sqcap\neg D)$ and \emph{value restriction} $\forall r.C=\neg\exists r.\neg C$.
A concept is in $\ALC$ if it does not use inverse roles,
in $\ELbot$ if it only uses constructs from the set $\{\top, \sqcap, \exists,\bot\}$,
and in $\EL$ if it is an $\ELbot$ concept without $\bot$.

A \emph{general concept inclusion} (GCI) is an expression of the form $C \subsum D$
for concepts
$C, D$, and a \emph{TBox} is a finite set of GCIs.
An \emph{assertion} is an expression of the form $\concept{A}(a)$ (\emph{concept assertion}) or
$r(a, b)$ (\emph{role assertion}), where $a, b \in \NI$, $\concept{A}\in \NC$  and $r \in \NR$.
An \emph{ABox} is a finite set of assertions.
Finally, a KB is a tuple $\tup{\calT,\calA}$ of a TBox~$\calT$ and an ABox $\calA$,
seen as the union $\calT\cup\calA$.
We refer to GCIs and assertions collectively as \emph{axioms}.
A TBox/KB is in \ALCI/\ALC/\ELbot/\EL if all concepts in it are.

\begin{table}[t]

	\centering
	\setlength{\tabcolsep}{3pt}
		\begin{tabular}{l  c  c }
			\toprule
			Construct           & Syntax         & Semantics \\
			\midrule
			Conjunction          & $C\sqcap D$    & $C^\Imc\cap D^\Imc$\\
			Existential restriction & $\exists R.C$ & $\{ x \mid \exists~y\in C^\calI, \tup{x,y} \in R^\Imc\}$\\
			Negation                & $\neg C$       & $\Delta^\Imc\setminus C^\Imc$\\
			\bottomrule
		\end{tabular} 
		\caption{Syntax and semantics for $\mathcal{ALCI}$ concepts. 
		}
	\label{tab:semantics}
\end{table}
The semantics of $\ALCI$ is defined in terms of interpretations.
An \emph{interpretation} $\calI$ is a tuple $\calI = (\Delta^\calI, \cdot^\calI)$,
where $\Delta^\calI$ is a non-empty set called the \emph{domain} of $\calI$, and
$\cdot^\calI$ is the \emph{interpretation function} that maps every individual name $a \in\NI$
to an element $a^\calI \in \Delta^\calI$,
every concept name $C\in N_C$ to a set $C^\calI \subseteq \Delta^\calI$, and every role name
$r\in \NR$ to a binary relation $r^\calI \subseteq  \Delta^\calI\times \Delta^\calI$.
The interpretation function is extended to inverse roles using $(r^-)^{\Imc}=\{\tup{x,y}\mid \tup{y,x}\in r^\Imc\}$
and to concepts following \Cref{tab:semantics}.

Let $C\subsum D$ be a GCI and $\calI$ be an interpretation.
Then, $\calI$ \emph{satisfies} $C\subsum D$ (denoted by $\calI\models C\subsum D$),
if $C^\calI\subseteq D^\calI$.
Similarly, $\calI$ satisfies a concept assertion $\concept{A}(a)$
if $a^\calI\in {A}^\calI$ and a role assertion $r(a, b)$ if $\tup{a^\calI , b^\calI} \in r^\calI$.
$\calI$ is a \emph{model} of $\calK$ ($\calI \models \calK$), if $\calI$ satisfies every axiom in $\calK$.
Finally, $\calK$ \emph{entails} $\alpha$ ($\calK\models\alpha$)
if $\calI\models\alpha$ for every model $\calI$ of $\calK$.
If $\calK\models C(a)$, we call $a$ an \emph{instance of $C$}.
\patrick{We use ``if'' in definitions, even though its supposed to work in both directions.
``Iff'' is instead used if we are stating a proposition.}

%

%
\section{Contrastive Explanations}\label{sec:CEs}

	In this paper, we are interested in answering contrastive questions. 
	A \emph{contrastive ABox explanation problem} (CP) is a tuple $P= \tup{\Kmc,C,a,b}$ consisting of a KB $\Kmc$, a concept $C$ and two individual names $a$, $b$, s.t. $\Kmc\models C(a)$ and $\Kmc\not\models C(b)$.
	Intuitively, a CP reads as \emph{\enquote{Why is $a$ an instance of $C$ and $b$ is not?}}.
		If $\Kmc$ and $C$ are expressed in a DL $\Lmc$, we call $P$ an \emph{\Lmc CP}.
	We call $a$ (or more generally $C(a)$) the \emph{fact} and $b$ ($C(b)$) the \emph{foil} of the CP.
	Note that in a CP, because $\Kmc\not\models C(b)$, $\Kmc$ is always consistent.

	\newcommand{\qDif}{q_\textit{diff}}


	Building upon the framework of Lipton~(\cite{lipton1990contrastive}), we aim to contrast $a$ and $b$ by highlighting the differences between the assertions that support $C(a)$ and the missing assertions that would support $C(b)$.
	Since different individuals may be related to $a$ than to $b$, we abstract away from concrete individual names and instead use \emph{ABox patterns}.
	An ABox pattern is a set $q(\vec{x})$ of ABox assertions that uses variables from $\vec{x}$ instead of individual names.
	Given a vector $\vec{c}$ of individual names with the same length as $\vec x$, $q(\vec{c})$ then denotes the ABox assertions obtained after replacing variables by individuals according to $x_i\mapsto c_i$. 
	The goal is to characterize the \emph{difference} between individuals $a$ and $b$ using an ABox pattern $\qDif(\vec{x})$, paired with two vectors $\vec{c}$ and $\vec{d}$ such that $\qDif(\vec{c})$ is entailed by the KB,
	\postrebuttal{no assertion in $\qDif(\vec{d})$ is entailed}, and adding $\qDif(\vec{d})$ to the KB would entail $C(b)$.
	In our running example, 
	 $\qDif({x,y,z})=\{\Journal(y),\hasFunding(x,z)\}$ would be such an ABox pattern,
	 where for the fact $\alice$ we have $\vec{c}=\tup{\alice,\aij,\corp}$, and for the foil
	 $\bob$ we could use $\tup{\bob,\aaai,e}$, where $e$ is fresh. 

	 \newcommand{\qCom}{q_\textit{com}}

	To fully explain the entailment, we have to also include which other facts are relevant to the entailment, which
	are things that fact and foil have in common.
	In our running example, the explanation only makes sense together with the
	\emph{commonality} $\qCom(x,y,z)=\{\publishedAt(x,y)\}$.
	Specifically, our contrastive explanations use ABox patterns $q(\vec x)=\qCom(\vec{x})\cup \qDif(\vec{x})$,
	with $\qCom(\vec{x})$ stating what holds for both instantiations, and $\qDif(\vec{x})$ what holds only for the fact.
	To avoid irrelevant assertions in $q(\vec x)$, we furthermore require that
	$q(\vec c)$ is an ABox justification in the classical sense.

	A final aspect regards how to deal with contradictions.
	Assume that in our example, instead of Axiom (b), we used
	\[
		(b')\ \exists \leads.\Group \sqsubseteq \Qualified\ .
	\]
	Most notions of abduction require the hypothesis to be consistent with the KB,
	which in the present case, due to Axiom~(c), is impossible
	if we want to entail $\Interviewed(\bob)$. For the present example, we might however still want to
	provide an explanation, for instance:
	\enquote{\emph{If AAAI was a journal and Bob lead the KR group, he would have been interviewed,
	but he cannot lead a group since he is a postdoc}}.
	This results in the following components in the contrastive explanation:
	\begin{align*}
	\qCom'=&\ \{\ \publishedAt(x,y),\Group(z)\ \},\\
	\qDif'=&\ \{\ \Journal(y),\leads(x,z)\ \},\\
	\vec{c'}=&\ \tup{\alice,\aij,\cs},\qquad\vec{d'}=\{\bob,\aaai,\cs\}
	\end{align*}
	To point out the issue with this explanation, we add a final component, the \emph{conflict set} $\Cmc$, which in this
	case would be $\{\PostDoc(\bob)\}$.
	Removing conflicts from the KB results in an alternative scenario consistent
	with the proposed explanation.
	This aligns with what are commonly called \emph{counterfactual accounts}~\cite{eiter2023contrastive}.
	Intuitively, we would want to avoid conflicts if possible, but we will see 
	later that this is
	not always desirable.

	We can now formalise our new notion of explanations.
	\begin{definition}\label{def:cex-general}
		Let $P= \tup{\Kmc,C,a,b}$ be a CP where $\Kmc=\tup{\Tmc,\Amc}$.
		A solution to $P$ (the \emph{contrastive ABox explanation}/\emph{CE})
		is a tuple $$\tup{\qCom(\vec{x}), \qDif(\vec{x}),\vec{c},\vec{d},\Cmc}$$
		of ABox patterns $\qCom(\vec{x})$, $\qDif(\vec{x})$, vectors $\vec{c}$ and $\vec{d}$ of individual names,
		and a set $\Cmc$ of ABox assertions,
		which for $q(\vec{x})=\qCom(\vec{x})\cup \qDif(\vec{x})$ satisfies the following conditions:
		\begin{enumerate}[label=\textbf{C\arabic*}]
			\item\label{itm:entailment} $\Tmc,q(\vec{c})\models C(a)$ and $\Tmc,q(\vec{d})\models C(b)$,
			\item\label{itm:fact} $\calK\models q(\vec{c})$,
			\item\label{itm:foil} $\calK \models \qCom(\vec{d})$,
			\item\label{itm:justification}
			$q(\vec{c})$ is a $\subseteq$-minimal set  satisfying \ref{itm:entailment}+\ref{itm:fact},
			\item\label{itm:conflict} $\Cmc\subseteq\Amc$ is $\subseteq$-minimal such that
			$\Tmc,(\Amc\setminus\Cmc) \cup q(\vec{d})\not\models\bot$.
		\end{enumerate}
	\end{definition}
	We call $\vec c$ the \emph{fact evidence} and $\vec d$ the \emph{foil evidence}.
	The patterns $\qCom(\vec x)$ and $\qDif(\vec x)$ will be called \emph{commonality} and \emph{difference}.
%
	Intuitively, $q(\vec{x})$ describes a pattern that is responsible for $a$ being an instance of $C$,
	with $\qCom(\vec{x})$ describing what $a$ and $b$ have in common, and $\qDif(\vec{x})$ what $b$ is lacking.
	By instantiating $\vec{x}$ with $\vec{c}$ we obtain a set of entailed assertions that entail
	$C(a)$~(\ref{itm:entailment} and \ref{itm:fact}), and by instantiating it with $\vec{d}$,
	we obtain a set of assertions that entails $C(b)$~(\ref{itm:entailment}), where $\qCom(\vec{d})$ is already provided by the present ABox~(\ref{itm:foil}),
	and $\qDif(\vec{d})$ is missing.
	Since $\qDif(\vec{d})$ can be inconsistent with the KB,
	$\Cmc$ presents the conflicts and $(\calA\setminus \calC) \cup q(\vec d)$ depicts an alternative
	consistent scenario in which $C(b)$ is entailed (\ref{itm:conflict}). To avoid
	unrelated assertions in $q$ or $\Cmc$, we require them to be minimal (\ref{itm:justification} and~\ref{itm:conflict}).
%

\begin{example}\label{ex:interview-def}
	For our running example, the CP is
	$\tup{\Kmc,\Interviewed,\alice,\bob}$.
	A CE for this CP is
	\[E_1= \tup{\qCom(x,y,z),\ \qDif(x,y,z),\ \vec{c},\ \vec{d},\ \emptyset},\]
	where $\vec c = \tup{\alice, \aij, \corp},
	\vec d = \tup{\bob, \aaai, e }$,
	\begin{align*}
		\qCom(x,y,z) = &\{\ \concept{\ \publishedAt}(x,y)\ \},\quad\text{ and }\\
		\qDif(x,y,z) = &\{\ \Journal(y), \,\hasFunding(x,z)\ \}.
	\end{align*}
	Another CE would be
	\begin{align*}
		E_2=\tup{\qCom',\qDif',\vec{c'},\vec{d'},\{\PostDoc(\bob)\}},
	\end{align*}
	with $\qCom'$ and $\qDif'$, $\vec{c'}$ and $\vec{d'}$ as described above.
\end{example}
	\subsection{Syntactic and Semantic CEs}
	For the running example, our definition also allows for the following trivial CE that has limited explanatory value:
	\[ E_t =
	 \tup{\emptyset, \{\Interviewed(x)\}, \tup{\alice}, \tup{\bob}, \emptyset}.
	\]
%
	A natural restriction to avoid this are \emph{syntactic CEs}:

	\begin{definition}[Syntactic and Semantic CEs]
		Let $P=\tup{\calK,C,a,b}$ be a CP where $\calK = \tup{\calT,\calA}$.
		A CE $$E=\tup{\qCom(\vec{x}), \qDif(\vec{x}),\vec{c},\vec{d},\Cmc}$$ for $P$ is
		called \emph{syntactic} if $\qCom(\vec{c}), \qDif(\vec{c}),\qCom(\vec{d})\subseteq \calA$, and otherwise \emph{semantic}.
	\end{definition}

%
	Syntactic explanations can only refer to what is explicit in the ABox. Semantic explanations can additionally refer to implicit information that is entailed.
	%

\newcommand{\Professor}{\concept{Prof}} 
\newcommand{\Qualify}{\concept{Qualified}} 
\newcommand{\Nominee}{\concept{Nominee}} 
\newcommand{\Offered}{\concept{Offered}} 

	\begin{example}
	\label{ex:redundancy}

		Consider the KB  $\calK = \tup{\calT,\calA}$ with
		\begin{align*}
			\calT = & \{\,\Professor\subsum \Qualify,\Qualify\sqcap \Nominee \subsum \Offered \, \} \\
			\calA = &  \{\, \Professor(\alice),\Nominee(\alice),  \Qualify(\bob) \, \}
		\end{align*}
		A syntactic explanation for $\tup{\calK,\Offered,a,b}$ is
		\[E_3=\tup{\emptyset,\ \{\Professor(x),\ \Nominee(x)\},\ \tup{\alice}, \tup{\bob}, \emptyset}.\]
		A semantic explanation can highlight the commonality:
		\[E_4=\tup{\{\Qualify(x)\}, \{\Nominee(x)\}, \tup{\alice}, \tup{\bob}, \emptyset}\]
	and thus give a more precise explanation for why Bob was not offered the job.
	(He didn't need to be a professor.)
	\end{example}

	If the CP contains a concept name as concept, a semantic explanation can always be obtained by simply using that
	concept as difference, which is why this case is more interesting for complex concepts. 
	Moreover, we can reduce
	semantic CEs to syntactic ones:
  \begin{lemma}\label{lem:materialized}
  	Let $P=\tup{\tup{\calT,\calA},C,a,b}$ be an $\mathcal L$ CP.
  	Then, one can compute in polynomial time, with access to an oracle that decides entailment for $\mathcal L$,
  	an ABox $\calA_e$ such that every semantic CE for $P$ is a syntactic CE for $P'= \tup{\tup{\calT,\calA_e},C,a,b}$ and vice versa.
  \end{lemma}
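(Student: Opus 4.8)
The plan is to \emph{materialize} the ABox, so that every assertion the KB entails becomes explicit and any reference to implicit information turns into a reference to an explicit assertion. Let $\Sigma$ be the (polynomially many) concept and role names occurring in $\calT$, $\calA$, or $C$, and set
$$\calA_e = \calA \cup \{\, A(c) : A\in\Sigma,\ c\in\NI(\calK),\ \calK\models A(c)\,\} \cup \{\, r(c,c') : r\in\Sigma,\ c,c'\in\NI(\calK),\ \calK\models r(c,c')\,\}.$$
Each membership test is a single entailment query, and there are only $O(|\Sigma|\cdot|\NI(\calK)|^2)$ of them, so $\calA_e$ is computable in polynomial time with an $\mathcal L$-entailment oracle. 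Since every assertion in $\calA_e\setminus\calA$ is entailed by $\calK$, the KB $\calK'=\tup{\calT,\calA_e}$ has exactly the same models as $\calK$, i.e.\ $\calK'\equiv\calK$; in particular $P'$ is again a CP.

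Next I would verify that \ref{itm:entailment}--\ref{itm:justification} together with the syntactic property transfer, in both directions, for the \emph{same} tuple $E$. Conditions \ref{itm:entailment} (the two entailments $\Tmc,q(\vec c)\models C(a)$ and $\Tmc,q(\vec d)\models C(b)$) and the defining constraints of \ref{itm:justification} depend only on $\Tmc$ and on $q$, hence are untouched by the change of ABox. Conditions \ref{itm:fact} and \ref{itm:foil} are entailment statements about $\calK$, so by $\calK'\equiv\calK$ they hold for $\calK'$ iff they hold for $\calK$; consequently the predicate ``$q$ satisfies \ref{itm:entailment}+\ref{itm:fact}'' defines the same family of patterns for $P$ and $P'$, so the $\subseteq$-minimality in \ref{itm:justification} is preserved. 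Finally, if $E$ is a \emph{semantic} CE for $P$, then $\calK\models q(\vec c)$ and $\calK\models\qCom(\vec d)$ force every assertion of $\qCom(\vec c),\qDif(\vec c),\qCom(\vec d)$ to be an entailed atom over $\Sigma$ and $\NI(\calK)$ (an atom over a name outside $\Sigma$ is never entailed, and the relevant individuals lie in $\NI(\calK)$ since the assertions are entailed), whence these three sets are contained in $\calA_e$ and $E$ is \emph{syntactic} for $P'$. Conversely, a syntactic CE for $P'$ satisfies \ref{itm:entailment}--\ref{itm:justification} for $P$ by the same equivalence argument, so (modulo \ref{itm:conflict}) it is a CE for $P$.

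The remaining and genuinely delicate step is the conflict condition \ref{itm:conflict}, which is the main obstacle, because it is the one condition sensitive to the ABox \emph{as a set} rather than only up to logical equivalence. Replacing $\calA$ by the larger $\calA_e$ can change the $\subseteq$-minimal repairs: an assertion that $\calK$ entails only because of some ``cause'' in $\calA$ now sits explicitly in $\calA_e$, so after deleting the cause the materialized consequence survives and may keep $(\calA_e\setminus\Cmc)\cup q(\vec d)$ inconsistent, even though $(\calA\setminus\Cmc)\cup q(\vec d)$ is consistent. I would therefore establish a correspondence between the minimal repairs of $\calA$ and of $\calA_e$: since $\calA_e\setminus\calA$ consists solely of $\calK$-entailed atoms, the repair matching a minimal $\Cmc\subseteq\calA$ is obtained by additionally discarding exactly those materialized consequences that become unsupported once $\Cmc$ is deleted, and one argues this map is a minimality-preserving correspondence (in the reverse direction, intersecting an $\calA_e$-repair with $\calA$). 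Carrying this correspondence through cleanly, and reconciling it with the requirement that the \emph{same} $\Cmc$ witness \ref{itm:conflict} in both problems, is where the real work lies, and it is the step I would scrutinise most carefully.
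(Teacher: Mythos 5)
Your construction of $\calA_e$ is exactly the paper's: the paper's entire proof is the single sentence that one adds all entailed assertions of the form $A(c)$/$r(c,c')$ over names occurring in the input, which is precisely your materialization. Your verification of \ref{itm:entailment}--\ref{itm:justification}, via the observation that $\tup{\calT,\calA_e}$ has the same models as $\calK$ and that these four conditions are therefore invariant under the change of ABox, is correct and already more detailed than anything the paper provides.

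The step you flag as ``the real work'', condition \ref{itm:conflict}, is not a step you failed to finish: it cannot be finished for the lemma as literally stated, and the paper's proof silently skips it. Concretely, consider the $\ELbot$ CP $P=\tup{\tup{\calT,\calA},C_0,a,b}$ with $\calT=\{F_0\sqsubseteq F,\ A\sqsubseteq B,\ B\sqcap D\sqsubseteq\bot,\ F\sqcap D\sqsubseteq C_0\}$ and $\calA=\{F_0(a),\,D(a),\,F_0(b),\,A(b)\}$. Then $E=\tup{\{F(x)\},\{D(x)\},\tup{a},\tup{b},\{A(b)\}}$ is a \emph{semantic} CE for $P$: conditions \ref{itm:entailment}--\ref{itm:justification} are immediate (it is semantic because $F(a)\notin\calA$), and for \ref{itm:conflict} the set $\{A(b)\}$ is a minimal conflict set, since $A(b)$ entails $B(b)$, which clashes with $D(b)$, while $(\calA\setminus\{A(b)\})\cup\{F(b),D(b)\}$ is consistent. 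But in $\calA_e$ the atom $B(b)$ is explicit, so $(\calA_e\setminus\{A(b)\})\cup\{F(b),D(b)\}\models\bot$, and the unique minimal conflict set with respect to $\calA_e$ is $\{A(b),B(b)\}\not\subseteq\calA$. Hence $E$ is not a CE for $P'$ at all, and conversely the corresponding syntactic CE for $P'$ (with conflict set $\{A(b),B(b)\}$) violates \ref{itm:conflict} for $P$; both directions of the lemma fail on the fifth component. The statement is unproblematic for $\EL$ (where conflict sets are always empty) and is true at the level of the first four components, but in general it must either be restricted to conflict-free CEs or be restated so that conflict sets are related by the repair correspondence you sketch ($\Cmc\mapsto$ its closure under materialized consequences that the deletion leaves unsupported, and $\Cmc'\mapsto\Cmc'\cap\calA$ in the other direction) rather than required to coincide. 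So the gap you identified is real, but it is a gap in the paper's lemma and proof, not in your argument; your proposal goes as far as any correct proof of the stated lemma can go.
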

  \begin{proof}
    We simply need to add all entailed assertions of the form $A(a)$/$r(a,b)$ where $A$, $r$, $a$ and $b$ occur in the input.
  \end{proof}
  Because of \Cref{lem:materialized}, we focus on syntactic CEs for most of the paper.
  Furthermore, some reasoning problems with semantic CEs are trivial for CPs involving concept names but intractable when complex concepts are considered.
 	\subsection{Optimality Criteria}

	The minimality required in \ref{itm:justification} and \ref{itm:conflict} is
	necessary to avoid unrelated assertions in the CE. Even with these restrictions
	in place, there can be many CEs for a given CP. The idea of CEs is to choose
	the ABox pattern so that the difference is as small as possible, and the commonality
	as large as possible. Also, while we allow for conflicts, having less seems
	intuitively better. There are therefore different components one may want to optimize, and optimization may be done locally (wrt. the subset relation) or
	globally (wrt. cardinality).

	\begin{definition}[Preferred CEs]\label{def:preferences}
		Let $P$ be a CP and $E= \tup{\qCom(\vec x), \qDif(\vec x), \vec c,\vec d,\calC}$ a CE for $P$.
		Then,
		\begin{itemize}
			\item $E$ is \emph{difference-minimal} if no explanation $E'$ has difference $\qDif'(\vec x')$ and foil evidence $\vec d'$, s.t. $\qDif'(\vec d')\subset \qDif(\vec d)$. 
			\item $E$ is \emph{conflict-minimal} if there is no CE $E'$ with a conflict set $\calC'\subset\calC$.
			\item $E$ is \emph{commonality-maximal} if no CE $E'$ has commonality $\qCom'(\vec x')$ and foil evidence $\vec d'$, s.t. $\qCom(\vec d)\subset \qCom'(\vec d')$.

		\end{itemize}
		We define each of the aforementioned optimality also w.r.t. the cardinality of given sets.
	\end{definition}
	Minimizing differences aligns with the general aim of CEs---the smaller the difference,
	the easier to understand the explanation.
	Minimizing conflicts allows to deprioritize far-fetched explanations that contradict
	much of what is known about the foil---if possible, we would want to provide a CE without conflicts.
	\postrebuttal{From a practical viewpoint, difference minimality allows the smallest factual change and conflict minimality limits CEs whose difference conflicts with the known data about foil.}
	Commonality-maximality is similarly motivated, and allows to force the CE to
	be even more focussed. With commonality-maximality, we obtain interesting semantic CEs even when the concept in the CP is a concept name: in Example~\ref{ex:redundancy},
	both $E_4$ and the trivial CE using $\qDif=\{\Offered(x)\}$
	are difference-minimal, but $E_4$ is also commonality-maximal and explains the CP better.

	In Example~\ref{ex:interview-def}, $E_1$ is conflict-minimal and $E_2$ is commonality-maximal, whereas both are difference-minimal. The trivial $E_t$ is conflict- and difference-minimal, but not commonality-maximal.

\patrick{Removed subsection on decision problems --- check what needs to be adapted elsewhere.}

\paragraph{Decision Problems}
For any CP, we can always construct an arbitrary CE based on an ABox justification for the fact,
where for the foil evidence, we simply replace $a$ by $b$.
Finding CEs that are also good wrt. our optimality criteria
is less trivial.
As usual, it is more convenient to look at decision problems rather than at the computation problem, in particular at
the \emph{verification problem}:
%
%
Given a CP $P$ with CE $E$, is $E$ optimal wrt. a given criterion?
\Cref{table:complexity} gives the complexity for these problems for different DLs.
The global versions reduce to bounded versions of the existence problem:
is there an $E$ with a conflict/difference/commonality that has at most/least $n$ elements?
\ExpTime-hardness for \ALC follows in all cases by a reduction to entailment (see appendix). We discuss the other results in the following sections.

\section{Difference-Minimal Explanations}\label{sec:diff-min}





The challenge in computing and verifying difference-minimal CEs is that we cannot fix the other components:
it is possible that the difference can only be made smaller by
completely changing the other components.
%
To deal with this, we
define a maximal structure that intuitively contains all possible CEs, on which we then minimize the
different components one after the other, starting with the difference.
What it means for a structure to ``contain'' a CE is captured formally by the following definition.
We call $\tup{\qCom(\vec x), \qDif(\vec x), \vec c, \vec d, \calC}$ a \emph{candidate CE} if it
satisfies Definition~\ref{def:cex-general} except for \ref{itm:justification} and \ref{itm:conflict}. 

%


\newcommand{\pDif}{p_\textit{diff}}
\newcommand{\pCom}{p_\textit{com}}

\begin{definition}
	Let $E_p=\tup{\pCom(\vec x), \pDif(\vec x), \vec c_p, \vec d_p, \calC_p}$ and $E_q=\tup{\qCom(\vec y), \qDif(\vec y), \vec c_q, \vec d_q, \calC_q}$ be two candidate CEs for a CP $P=\tup{\calK,C,a,b}$. 
	A \emph{homomorphism} from $E_p$ to $E_q$ is a mapping $\sigma:\vec{x}\rightarrow\vec{y}$ that ensures
	$\pCom(\sigma(\vec{x}))\subseteq \qCom(\vec{y})$ and $\pDif(\sigma(\vec{x}))\subseteq \qDif(\vec{y})$.
	We say that \emph{$E_p$ embeds into $E_q$} if there is such a homomorphism and additionally $\calC_p\subseteq\calC_q$.
%
%
%
%
\end{definition}
%

Fix a CP $P= \tup{\calK, C, a, b}$.
\patrick{Add intuition?}
We define the \emph{CE super-structure} $E_m = \tup{\qCom(\vec x_m), \qDif(\vec x_m), \vec c_m, \vec d_m, \calA}$ for $P$ as follows:
\begin{itemize}
	\item $\vec{x}_m$ contains one variable $x_{a',b'}$ for every $\tup{a',b'}\in\NI(\Amc)\times\NI(\Amc)$,
	\item $\vec{c}_m$ contains $a'$ for every $x_{a',b'}$ in $\vec{x}$,
	\item $\vec{d}_m$ contains $b'$ for every $x_{a',b'}$ in $\vec{x}$,
	\item $\qCom(\vec{x}_m)=\{A(x_{a',b'})\mid A(a'),A(b')\in\Amc\}$
	$\cup\{r(x_{a_0,b_0},x_{a_1,b_1})\mid r(a_0,a_1), r(b_0,b_1)\in\Amc\}$,
	\item $\qDif(\vec{x}_m)=\{A(x_{a',b'})\mid A(a')\in\Amc,$ $b'\in\NI(\Amc)$,
	$A(b')\not\in\Amc\}$
	$\cup\{r(x_{a_0,b_0},x_{a_1,b_1})\mid r(a_0,a_1)\in\Amc,$ $b_0,b_1\in\NI(\Amc),$ $r(b_0,b_1)\not\in\Amc\}$
\end{itemize}
We set $q_m=\qCom\cup \qDif$.
The variables $\vec{x}_m$ contain all possible combinations of mapping to an individual for the foil and for the fact,
which is why they correspond to pairs of individual names.
$\qCom$ and $\qDif$ are then constructed based on the set of all assertions we can
build over these variables.
$E_m$ indeed captures all
syntactic CEs for $P$ that are defined over the signature of the input.
\begin{restatable}{lemma}{claimEmbedd}\label{claim:embed}
	Every syntactic CE for $P$ without fresh individual names embeds into $E_m$. 
\end{restatable}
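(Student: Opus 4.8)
The plan is to exhibit an explicit homomorphism witnessing the embedding, and then to verify the three containment conditions from the definition one by one. Let $E=\tup{\qCom(\vec x),\qDif(\vec x),\vec c,\vec d,\calC}$ be an arbitrary syntactic CE for $P$ whose evidence vectors use no fresh individuals, i.e.\ every component $c_i$ of $\vec c$ and $d_i$ of $\vec d$ lies in $\NI(\Amc)$. Since $E_m$ carries a variable $x_{a',b'}$ for each pair $\tup{a',b'}\in\NI(\Amc)\times\NI(\Amc)$, I would define $\sigma$ by $\sigma(x_i)=x_{c_i,d_i}$; this is well defined \emph{precisely} because no individual is fresh, so each pair $\tup{c_i,d_i}$ is a legitimate index. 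The conflict part of the embedding is then immediate: by \ref{itm:conflict} we have $\calC\subseteq\Amc$, and the conflict component of $E_m$ is $\Amc$ itself, which is exactly the required inclusion. It remains to check that $\sigma$ maps the commonality and difference of $E$ into those of $E_m$.

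For the commonality, consider an assertion of $\qCom(\vec x)$. A concept assertion $A(x_i)$ is sent to $A(x_{c_i,d_i})$; because $E$ is syntactic, $A(c_i)\in\qCom(\vec c)\subseteq\Amc$ and $A(d_i)\in\qCom(\vec d)\subseteq\Amc$, so $A(x_{c_i,d_i})$ belongs to $\qCom(\vec x_m)$ by construction. A role assertion $r(x_i,x_j)$ is sent to $r(x_{c_i,d_i},x_{c_j,d_j})$, and the same syntactic inclusions give $r(c_i,c_j),r(d_i,d_j)\in\Amc$, again placing the image in $\qCom(\vec x_m)$. Hence $\qCom(\sigma(\vec x))\subseteq\qCom(\vec x_m)$.

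For the difference, a concept assertion $A(x_i)\in\qDif(\vec x)$ is sent to $A(x_{c_i,d_i})$, and to land in $\qDif(\vec x_m)$ I must verify three things: $A(c_i)\in\Amc$, $d_i\in\NI(\Amc)$, and $A(d_i)\notin\Amc$. The first follows from syntacticity ($\qDif(\vec c)\subseteq\Amc$) and the second from the no-fresh assumption. The third — that the foil instantiation of a difference assertion is genuinely absent from the ABox — is the crux of the argument, and the step I expect to be the main obstacle, since an assertion $A(d_i)\in\Amc$ would force the image into $\qCom(\vec x_m)$ rather than $\qDif(\vec x_m)$ and break the required inclusion. Here I would invoke the requirement that no assertion of $\qDif(\vec d)$ is entailed by $\calK$: as every ABox assertion is trivially entailed, this yields $A(d_i)\notin\Amc$, and analogously $r(d_i,d_j)\notin\Amc$ for role assertions $r(x_i,x_j)\in\qDif(\vec x)$. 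Thus $\qDif(\sigma(\vec x))\subseteq\qDif(\vec x_m)$, and $\sigma$ together with $\calC\subseteq\Amc$ witnesses that $E$ embeds into $E_m$.

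If one wanted to avoid relying on the non-entailment requirement, the alternative would be to observe that any difference assertion whose foil instantiation already holds in $\Amc$ can be moved from $\qDif$ into $\qCom$ without affecting conditions \ref{itm:entailment}--\ref{itm:conflict} (the underlying set $q(\vec x)$ is unchanged, and $\calK\models\qCom(\vec d)$ is preserved because the moved assertion is entailed), thereby reducing to the case where the difference is ABox-free on the foil side. Either route closes the one delicate classification issue; the rest of the argument is the routine bookkeeping of the two containments above.
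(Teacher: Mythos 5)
Your proof takes essentially the same route as the paper's: the identical homomorphism $\sigma(x_i)=x_{c_i,d_i}$, well defined precisely because neither $\vec c$ nor $\vec d$ contains fresh individuals, followed by the two pattern containments and $\calC\subseteq\Amc$. Where the paper's proof merely asserts the containments hold ``by construction,'' you verify them explicitly and correctly isolate the one non-trivial point---that a difference atom needs $A(d_i)\notin\Amc$, which rests on the paper's (informally stated) requirement that no assertion of $\qDif(\vec d)$ is entailed---so your argument is the same proof carried out with more care.
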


%
$E_m$ is polynomial in size, and
is \emph{almost} a CE, modulo
the minimality of $q(\vec{c})$ (\ref{itm:justification}) and of $\Cmc_m$ (\ref{itm:conflict}).
Moreover, to satisfy \ref{itm:conflict}, we also need $q_m(\vec{d}_m)$ to be
consistent with $\Tmc$, which may not be the case.
To obtain a difference-minimal CE, we need to remove elements from $q_m$ to make $\Tmc, q_m(\vec{d}_m)$
consistent and $q_2(\vec{d}_m)$ subset-minimal without
violating
$\Tmc, q_m(\vec{d}_m)\models C(b)$.
The following lemma states how to safely remove elements from $q_m(\vec{d}_m)$.
For $\vec{x}\subseteq\vec{x}_m$, denote by $q_m|_{\vec{x}}(\vec{x}_m)$
the restriction of $q_m(\vec{x}_m)$ to atoms that only use variables from $\vec{x}$.
\begin{restatable}{lemma}{LemSupModels}\label{lem:restricting-x}
 Let $\vec{x}\subseteq\vec{x}_m$ be s.t 1)~$x_{a,b}\in\vec{x}$, and  2)~for every $x_{a',b'}$ in $\vec{x}_m$, we have some $x_{a'',b''} \in \vec{x}$ with $a'=a''$. Then, $\Kmc,\qDif|_{\vec{x}}(\vec{d}_m)\models C(b)$.
%
\end{restatable}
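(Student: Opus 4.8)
The plan is to reduce the claim to the given entailment $\Kmc\models C(a)$ by means of a renaming of individual names. The key tool is the standard fact that entailment is invariant under substitution of individual names: for any (not necessarily injective) substitution $\theta\colon\NI\to\NI$, if $\Tmc,\Amc\models\alpha$ then $\Tmc,\theta(\Amc)\models\theta(\alpha)$. This holds because a GCI never mentions individual names, so $\theta(\Tmc)=\Tmc$, and any model of $\Tmc,\theta(\Amc)$ refuting $\theta(\alpha)$ can be converted into a model of $\Tmc,\Amc$ refuting $\alpha$ by reinterpreting each name $c$ as the value currently assigned to $\theta(c)$. Since $C$ contains no individual names, applying such a $\theta$ with $\theta(a)=b$ to $\Tmc,\Amc\models C(a)$ yields $\Tmc,\theta(\Amc)\models C(b)$; it then suffices to choose $\theta$ so that $\theta(\Amc)\subseteq\Amc\cup\qDif|_{\vec{x}}(\vec{d}_m)$, because this right-hand side is contained in $\Kmc\cup\qDif|_{\vec{x}}(\vec{d}_m)$ and $\Tmc\subseteq\Kmc$.

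First I would define $\theta$ using the two hypotheses on $\vec{x}$. By assumption~(1) we have $x_{a,b}\in\vec{x}$, so I set $\theta(a)=b$. By assumption~(2), for every $a'\in\NI(\Amc)$ there is at least one variable $x_{a',b'}\in\vec{x}$; for each such $a'$ (other than $a$) I fix one witness $b'$ and set $\theta(a')=b'$, extending $\theta$ by the identity outside $\NI(\Amc)$. This makes $\theta$ a total function on $\NI(\Amc)$ with the crucial property that $x_{a',\theta(a')}\in\vec{x}$ for \emph{every} $a'$, including the root pair $(a,b)$ by assumption~(1).

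The main verification is then $\theta(\Amc)\subseteq\Amc\cup\qDif|_{\vec{x}}(\vec{d}_m)$, which I would prove by a case analysis on each assertion. For a concept assertion $A(a')\in\Amc$, its image $A(\theta(a'))$ is either already in $\Amc$, or else $A(a')\in\Amc$ and $A(\theta(a'))\notin\Amc$ together with $x_{a',\theta(a')}\in\vec{x}$ place $A(x_{a',\theta(a')})$ in $\qDif|_{\vec{x}}$, so its $\vec{d}_m$-instance $A(\theta(a'))$ lies in $\qDif|_{\vec{x}}(\vec{d}_m)$. For a role assertion $r(a_0,a_1)\in\Amc$, its image $r(\theta(a_0),\theta(a_1))$ is again either already in $\Amc$, or $r(\theta(a_0),\theta(a_1))\notin\Amc$ together with $x_{a_0,\theta(a_0)},x_{a_1,\theta(a_1)}\in\vec{x}$ put $r(x_{a_0,\theta(a_0)},x_{a_1,\theta(a_1)})$ into $\qDif|_{\vec{x}}$ and hence its instance into $\qDif|_{\vec{x}}(\vec{d}_m)$. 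Combining the invariance fact with this containment gives $\Kmc,\qDif|_{\vec{x}}(\vec{d}_m)\models C(b)$.

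The step I expect to be most delicate is ensuring that $\theta$ is a \emph{single} well-defined function that simultaneously witnesses $\theta(a)=b$ and keeps every image assertion inside the restricted structure. This is exactly where assumptions~(1) and~(2) are used, and in particular why for role assertions one needs \emph{both} endpoint variables $x_{a_0,\theta(a_0)}$ and $x_{a_1,\theta(a_1)}$ to survive in $\vec{x}$; the construction guarantees this by committing to a single surviving foil partner per fact individual. The invariance fact and the two case distinctions are then routine.
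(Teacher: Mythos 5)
Your proof is correct and takes essentially the same route as the paper's: your substitution $\theta$ is exactly the paper's homomorphism $h$ (defined by $h(a)=b$ and $h(a')=b'$ for one chosen witness $x_{a',b'}\in\vec{x}$), and your case analysis showing $\theta(\Amc)\subseteq\Amc\cup\qDif|_{\vec{x}}(\vec{d}_m)$ repackages the paper's two observations that $q_m|_{\vec{x}}(\vec{c}_m)=\Amc$ and that $h$ maps this set into $q_m|_{\vec{x}}(\vec{d}_m)$, whose commonality part lies in $\Amc$. Both arguments then conclude by renaming-invariance of entailment together with monotonicity.
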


\Cref{lem:restricting-x} tells us which atoms we should keep if we want to make sure the foil remains entailed.
The following lemma tells us how to make $q_m(\vec{d_m})$ consistent with $\Tmc$:

\begin{restatable}{lemma}{ClaimConflict}\label{claim:conflict}
  Let $\vec{x}\subseteq\vec{x}_m$ be s.t.
  for every $x_{a_1,b_1}$, $x_{a_2,b_2}\in\vec{x}$,
  $a_1\neq a_2$ implies $b_1\neq b_2$. Then, $\Tmc, q_m|_{\vec{x}}(\vec{d}_m)\not\models\bot$.
\end{restatable}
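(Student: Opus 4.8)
The plan is to exhibit a single interpretation witnessing the consistency of $\Tmc\cup q_m|_{\vec{x}}(\vec{d}_m)$. Since $P$ is a CP, the knowledge base $\Kmc=\tup{\Tmc,\Amc}$ is consistent, so I can fix a model $\Imc\models\Kmc$. The idea is to keep the domain and the interpretation of all concept and role names exactly as in $\Imc$, and only reinterpret the individual names: each foil individual occurring in $q_m|_{\vec{x}}(\vec{d}_m)$ will be made to point to the domain element of its associated fact individual in $\Imc$. Because $\Imc$ already satisfies every assertion of $\Amc$, the fact side carries ready-made witnesses, and copying those witnesses onto the foil side yields a model of the restricted foil ABox together with $\Tmc$.

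Concretely, the condition on $\vec{x}$ --- that $a_1\neq a_2$ implies $b_1\neq b_2$ for all $x_{a_1,b_1},x_{a_2,b_2}\in\vec{x}$ --- is exactly what makes this reinterpretation well defined. Its contrapositive says that whenever the same foil individual $b'$ occurs as the second component of two variables in $\vec{x}$, their first components agree; hence each foil individual $b'$ appearing in $q_m|_{\vec{x}}(\vec{d}_m)$ determines a unique fact individual $a'$ with $x_{a',b'}\in\vec{x}$. I then define $\mathcal{J}$ to agree with $\Imc$ on the domain, on all concept names and on all role names, and on individual names I set $(b')^{\mathcal{J}}=(a')^{\Imc}$ for each such foil individual $b'$ (and $e^{\mathcal{J}}=e^{\Imc}$ for every other individual name $e$). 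Since satisfaction of a GCI $C\subsum D$ depends only on the extensions $C^{\mathcal{J}},D^{\mathcal{J}}$, which coincide with those in $\Imc$, we immediately obtain $\mathcal{J}\models\Tmc$.

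It remains to verify $\mathcal{J}\models q_m|_{\vec{x}}(\vec{d}_m)$. Every concept assertion in this set has the form $A(b')$ arising from an atom $A(x_{a',b'})\in q_m$ with $x_{a',b'}\in\vec{x}$; by the construction of both $\qCom$ and $\qDif$ this forces $A(a')\in\Amc$, so $(a')^{\Imc}\in A^{\Imc}$, and since $(b')^{\mathcal{J}}=(a')^{\Imc}$ we get $(b')^{\mathcal{J}}\in A^{\mathcal{J}}$. Likewise, every role assertion $r(b_0,b_1)$ stems from an atom $r(x_{a_0,b_0},x_{a_1,b_1})\in q_m$ with both variables in $\vec{x}$, which forces $r(a_0,a_1)\in\Amc$; hence $\tup{(a_0)^{\Imc},(a_1)^{\Imc}}\in r^{\Imc}$, and the reinterpretation gives $\tup{(b_0)^{\mathcal{J}},(b_1)^{\mathcal{J}}}\in r^{\mathcal{J}}$. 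Thus $\mathcal{J}$ is a model of $\Tmc\cup q_m|_{\vec{x}}(\vec{d}_m)$, establishing the claim. The one genuine subtlety --- the step I would treat most carefully --- is the well-definedness of this reinterpretation: without the stated condition a single foil individual could be forced to copy two distinct fact individuals at once and the construction would collapse, whereas everything else is a routine transfer of the $\Amc$-guaranteed fact-side assertions to the foil side.
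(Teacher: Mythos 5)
Your proof is correct and follows essentially the same route as the paper's: both take a model $\Imc$ of $\Tmc,\Amc$ (equivalently of $\Tmc,q_m(\vec{c}_m)$) and reinterpret each foil individual $b'$ as $(a')^{\Imc}$ for the unique $x_{a',b'}\in\vec{x}$, leaving everything else unchanged. Your write-up is merely more explicit than the paper's terse version, in particular in spelling out why the condition on $\vec{x}$ makes the reinterpretation well defined and in verifying the assertions of $q_m|_{\vec{x}}(\vec{d}_m)$ atom by atom.
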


Importantly, the conditions in \Cref{lem:restricting-x} and~\ref{claim:conflict} are compatible:
we can always find a vector $\vec{x}$ that is \emph{safe} in the sense that it satisfies the conditions in both lemmas, and thus ensures both consistency with $\Tmc$ and entailment of $C(b)$.
This can be used as follows to compute difference-minimal CEs, starting from the super-structure $E_m$.
\begin{enumerate}[label=\textbf{P\arabic*}]
 \item\label{p:make-consistent} Obtain a query $q^{(0)}=\qCom^{(0)}\cup \qDif^{(0)}$ from $q_m$ by removing atoms until
 $\Tmc, q^{(0)}(\vec{d}_m)\not\models\bot$. By doing so, ensure that for some safe vector $\vec{x}$,
 we have $q_m|_{\vec{x}}\subseteq q^{(0)}$. By \Cref{lem:restricting-x}, we then also have
 $\Kmc, \qDif^{(0)}(\vec{d}_m)\models C(b)$.
 \item\label{p:minimize-dif} Compute a minimal subset $\qDif(\vec{x}_m)$ of $\qDif^{(0)}(\vec{x}_m)$ s.t.
 $\Tmc, \qCom^{(0)}(\vec{d}_m)\cup \qDif(\vec{d}_m)\models C(b)$.
 This can be done in polynomial time by checking each axiom in turn.
 \item\label{p:minimize-com} To satisfy \ref{itm:justification},
 compute a minimal subset $\qCom(\vec{x}_m)$ of $\qCom^{(0)}(\vec{x}_m)$ s.t.
 $\Tmc, \qCom(\vec{c}_m)\cup \qDif(\vec{c}_m)\models C(a)$.
 This also takes polynomial time by checking each axiom in turn.
 \item\label{p:minimize-conflict} To satisfy \ref{itm:conflict}, minimize $\Cmc_m$ in the same way.
\end{enumerate}
We obtain the following theorem.
\begin{theorem}\label{lem:dif-min}
	For any DL $\mathcal L$-CPs, given an oracle that decides entailment in $\mathcal{L}$, we
	can 1)~compute a difference-minimal syntactic CE, and 2)~decide difference-minimality
	of a given syntactic CE in polynomial time.
\end{theorem}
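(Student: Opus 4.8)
The plan is to treat the polynomially-sized CE super-structure $E_m$ as a finite domain that provably contains every relevant syntactic CE, and to realise both tasks by oracle-driven greedy minimization. For part~1, I would first build $E_m$: it has one variable per pair in $\NI(\Amc)\times\NI(\Amc)$ and its patterns range over the assertions expressible on these variables, so it is polynomial in $\Kmc$. I then run the four-step procedure \ref{p:make-consistent}--\ref{p:minimize-conflict}. Step \ref{p:make-consistent} greedily deletes atoms from $q_m$ until $\Tmc,q_m(\vec d_m)$ is consistent, keeping the deletions compatible with some \emph{safe} $\vec x$, i.e.\ one satisfying the hypotheses of both \Cref{lem:restricting-x} and \Cref{claim:conflict}. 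Such an $\vec x$ exists because the two conditions are jointly satisfiable (pick for each fact individual a distinct foil individual, including the pair $\tup{a,b}$); \Cref{claim:conflict} then guarantees consistency and \Cref{lem:restricting-x} guarantees $\Kmc,\qDif^{(0)}(\vec d_m)\models C(b)$. Steps \ref{p:minimize-dif}, \ref{p:minimize-com}, \ref{p:minimize-conflict} shrink $\qDif$, $\qCom$ and $\calC$ one atom at a time, each deletion tested by a single oracle call, thereby securing a subset-minimal difference still entailing $C(b)$, re-establishing the justification property \ref{itm:justification}, and establishing \ref{itm:conflict}. The output satisfies \ref{itm:entailment}--\ref{itm:conflict} by construction and is produced with only polynomially many oracle calls.

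The delicate point is why this output is difference-minimal, i.e.\ why no CE $E'$ has foil difference $\qDif'(\vec d')\subsetneq D$ for the difference $D$ we produce. Here I would exploit two facts. First, in $E_m$ the split between commonality and difference is canonical and \emph{maximal}: an atom lies in $\qCom$ exactly when it is satisfied by the foil individual in $\Amc$, so no competing CE can hide a difference atom inside its commonality. Second, any $D'\subsetneq D$ consists of assertions already occurring in $D$, hence over the same named individuals; fresh foil individuals can therefore never yield a proper subset of a named difference, and since fresh individuals satisfy nothing in $\Amc$ they only enlarge differences. Thus restricting attention to CEs without fresh names, which embed into $E_m$ by \Cref{claim:embed}, loses nothing. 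Combining the embedding with monotonicity of entailment, any such $D'$ would give $\Tmc,\qCom^{(0)}(\vec d_m)\cup D'\models C(b)$ with the maximal commonality on the left, contradicting the subset-minimality established in step \ref{p:minimize-dif}.

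For part~2, verification of a given syntactic CE $E$ reduces to the same local test. I would embed $E$ into $E_m$ via \Cref{claim:embed}, recover the maximal commonality $M$ over the individuals actually used (all $\Amc$-atoms shared by the paired fact and foil individuals), and then check, for each atom $\delta$ of the difference $D=\qDif(\vec d)$, whether $\Tmc, M\cup(D\setminus\{\delta\})\models C(b)$. I claim $E$ is difference-minimal iff every such check fails, so the test is polynomial (one oracle call per atom). Soundness is immediate, since a successful removal exhibits a strictly smaller achievable difference. Completeness is exactly the monotonicity-plus-maximality argument above: if some $E'$ realises a proper subset of $D$, then after enlarging its commonality to $M$, dropping a witnessing atom of $D$ already preserves entailment of $C(b)$.

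The main obstacle I expect is precisely the completeness direction of the minimality argument: ruling out that a competing CE attains a strictly smaller difference by choosing a different commonality/difference split or different foil evidence. The design of $E_m$---its canonical maximal commonality and its exhaustive pairing of fact and foil individuals---together with monotonicity of entailment is what lets me collapse this global comparison to a single-atom removal test, and I anticipate the careful treatment of fresh foil individuals (arguing they never yield a proper subset of a named difference, and indeed only enlarge it) to be the most error-prone detail of the write-up.
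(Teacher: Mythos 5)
Your part~1 follows the paper's proof: the same super-structure $E_m$, the same safe-vector argument combining \Cref{lem:restricting-x} and \Cref{claim:conflict}, and the same pipeline \ref{p:make-consistent}--\ref{p:minimize-conflict}, so there is nothing to object to there. The problem is part~2.

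Your verification test---for each $\delta\in D$ check whether $\Tmc, M\cup(D\setminus\{\delta\})\models C(b)$---silently drops the consistency-restoration step \ref{p:make-consistent}, and that omission breaks soundness. Condition~\ref{itm:conflict} forces every valid CE to satisfy $\Tmc\cup q(\vec d)\not\models\bot$: the conflict set may only delete assertions of $\Amc$ outside the pattern, so no conflict set exists when the pattern's own foil instantiation is inconsistent with $\Tmc$. Your test cannot see this. Concretely, let $\Tmc=\{A_1\sqcap A_2\subsum C,\ A_1\sqcap V\subsum\bot\}$, $\Amc=\{A_1(a),A_2(a),V(b)\}$, take the CP $\tup{\tup{\Tmc,\Amc},C,a,b}$ and the valid CE $E=\tup{\emptyset,\{A_1(x),A_2(x)\},\tup{a},\tup{b},\{V(b)\}}$, so $D=\{A_1(b),A_2(b)\}$. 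By \ref{itm:conflict} every CE must entail $C(b)$ \emph{consistently}, which forces both $A_1(b)$ and $A_2(b)$ into its difference; hence $E$ is difference-minimal. But if $M$ is the maximal commonality of $E_m$, then $M=\Amc\supseteq\{V(b)\}$ (in $E_m$ every individual is paired with itself), and for $\delta=A_2(b)$ the set $\Tmc\cup M\cup\{A_1(b)\}$ is inconsistent, so your check succeeds and you wrongly declare $E$ non-minimal. If you instead read $M$ as restricted to the pairs of $E$'s own embedding, this example is fixed but completeness fails: for $\Tmc=\{A_1\sqcap A_2\subsum C,\ A_1\sqcap\exists r.B\subsum C\}$ and $\Amc=\{A_1(a),A_2(a),r(a,a_1),B(a_1),r(b,b_1),B(b_1)\}$, the CE $\tup{\emptyset,\{A_1(x),A_2(x)\},\tup{a},\tup{b},\emptyset}$ is \emph{not} difference-minimal (witness: commonality $\{r(x,y),B(y)\}$, difference $\{A_1(x)\}$, foil evidence $\tup{b,b_1}$), yet the restricted $M$ is empty and each of your single-atom checks fails, so you would declare it minimal.

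This is exactly why the paper does not test against $M$ directly: its verification re-runs \ref{p:make-consistent} on $q_m$ under the extra constraint that the embedded pattern of $E$ and a safe vector survive in $q^{(0)}$, and only then attempts \ref{p:minimize-dif} relative to the consistency-repaired commonality $\qCom^{(0)}(\vec d_m)$. Because $q^{(0)}(\vec d_m)$ is consistent with $\Tmc$, any strict sub-difference found there genuinely extends to a CE (conditions \ref{itm:justification} and \ref{itm:conflict} are then recovered by \ref{p:minimize-com} and \ref{p:minimize-conflict}), and because $q_m$ ranges over all pairs of individuals, no competing commonality is out of reach. Your monotonicity-plus-maximal-commonality argument cannot substitute for this step. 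A secondary looseness: when the given $E$ itself mentions fresh individuals, the paper reduces to the fresh-free case by padding the KB with harmless occurrences of those names before invoking \Cref{claim:embed}, rather than by your informal claim that fresh individuals ``only enlarge differences''.
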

To see why~2) holds, let $E$ be a syntactic CE. We may assume that $E$ contains no fresh individuals,
since we can always add occurrences of individuals to the KB in a way that does not affect
relevant entailments. By \Cref{claim:embed}, $E$ then embeds in $E_m$. We now apply \ref{p:make-consistent}
in the above procedure with the additional requirement that the pattern from $E$ is also contained in $q^{(0)}$, and
in \ref{p:minimize-dif}, we try to construct a subset of the difference of $E$.

This establishes our results in \Cref{table:complexity} for local difference minimality.
When minimizing the difference \emph{globally}, we lose tractability,
%
Indeed, it is $\NP$-complete to decide whether a CE exists
with the cardinality of the difference bounded by some $n\in \mathbb N$. This allows us to prove the following theorem.


\begin{restatable}{theorem}{ThmELGroundSynSize}
\label{thm:el-ground-syn-size}
	Deciding whether a given syntactic CE for a CP is difference-minimal \postrebuttal{w.r.t cardinality}
	is $\co\NP$-complete for $\EL$ and $\ELbot$, but
	$\EXP$-complete for $\ALC$.
\end{restatable}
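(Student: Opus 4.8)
The plan is to exploit the tight link between verification and bounded existence that the text foreshadows: a syntactic CE $E$ with $|\qDif(\vec d)|=n$ is difference-minimal w.r.t.\ cardinality iff $P$ admits no CE whose difference, instantiated at the foil, has at most $n-1$ atoms. Thus verification is precisely the complement of the bounded existence problem ``does $P$ have a CE with $|\qDif(\vec d)|\le m$?'', and it suffices to pin down the complexity of this existence problem for \EL and \ELbot, and to settle the \ALC bounds separately.

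For \EL and \ELbot I first argue membership in $\co\NP$. Given $E$, I check in polynomial time---using polynomial-time \ELbot entailment, and verifying the minimality conditions \ref{itm:justification} and \ref{itm:conflict} by trying to remove each atom in turn---that $E$ is a valid CE. A witness for non-minimality is a single CE $E'$ with strictly smaller difference. By \Cref{claim:embed}, after adding polynomially many isolated fresh individuals to the ABox (which, as in \Cref{lem:dif-min}, leaves the relevant entailments unchanged) I may assume $E'$ uses no fresh individuals, so it embeds into the polynomial super-structure $E_m$ and thus has polynomial size; hence it can be guessed and verified in $\NP$, putting verification in $\co\NP$. For $\co\NP$-hardness I reduce from the complement of \emph{Set Cover}. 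Given sets $S_1,\dots,S_n$ over $\{e_1,\dots,e_m\}$ with union the whole universe, and a bound $k$, I use concept names $A_1,\dots,A_n$ (one per set), $B_1,\dots,B_m$ (one per element), a target $C$, and the \EL TBox $\Tmc=\{A_j\subsum B_i \mid e_i\in S_j\}\cup\{B_1\sqcap\cdots\sqcap B_m\subsum C\}$. The fact $a$ receives all $A_j(a)$, so $\Kmc\models C(a)$, while the foil $b$ receives nothing, so $\Kmc\not\models C(b)$. The crucial point is that the only way to entail $C(b)$ by adding difference atoms is to supply $A_j(b)$ for a subcollection covering the universe (commonality cannot help, since $b$ carries no assertions and there are no roles), so the minimum difference size equals the minimum set-cover size. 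To obtain a valid reference CE of controlled size I add a backup mechanism: fresh concept names $D_1,\dots,D_{k+1}$ with all $D_\ell(a)\in\Amc$ and the axiom $D_1\sqcap\cdots\sqcap D_{k+1}\subsum C$. Then $E=\tup{\emptyset,\{D_1(x),\dots,D_{k+1}(x)\},a,b,\emptyset}$ is a valid CE with difference of size exactly $k+1$; its fact instantiation is subset-minimal because $C(a)$ follows from the $D_\ell(a)$ only via the backup axiom, which needs all $k+1$ atoms. Since the backup route likewise requires all $k+1$ atoms for the foil, a CE with difference $\le k$ exists iff a set cover of size $\le k$ exists, so $E$ is cardinality-difference-minimal iff no such cover exists. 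As the construction stays in \EL, the bound holds for both \EL and \ELbot.

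For \ALC, membership in \EXP follows by brute force over the polynomial super-structure: assuming again no fresh individuals, every competing CE has its difference among the exponentially many subsets of $\qDif(\vec x_m)$, paired with a safe variable choice, a commonality $\subseteq\qCom(\vec x_m)$, and a conflict $\subseteq\Amc$; each candidate is checked for validity using an \ALC entailment/consistency oracle (\EXP), so the whole search runs in \EXP. For \EXP-hardness I reduce from \ALC instance checking $\Kmc\models A(a_0)$ (assuming w.l.o.g.\ that $A(a_0)$ is consistent with $\Kmc$). I extend $\Kmc$ with fresh $G,C$, a fresh individual $u$ carrying $A(u),G(u)$, and the axiom $A\sqcap G\subsum C$; the fact is $u$ (so $C(u)$ holds) and the foil is $a_0$ (so $C(a_0)$ does not, as $G$ is fresh). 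Entailing $C(a_0)$ requires $a_0$ to acquire both $A$ and $G$: $A(x)$ may be placed in the commonality exactly when $\Kmc\models A(a_0)$ (so that \ref{itm:foil} holds), in which case the difference $\{G(a_0)\}$ of size one suffices, whereas otherwise the difference $\{A(a_0),G(a_0)\}$ of size two is forced. Taking $E$ with difference $\{A(x),G(x)\}$ (size two), $E$ is cardinality-difference-minimal iff $\Kmc\not\models A(a_0)$; since instance checking is \EXP-complete, so is verification for \ALC.

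The main obstacle in both hardness reductions is to construct a \emph{valid} reference CE whose difference has exactly the cardinality encoding the decision threshold, while guaranteeing that no unintended smaller CE slips through. This needs (i) a backup/gadget axiom that produces a CE of the target size and still respects the justification-minimality condition \ref{itm:justification}, and (ii) a careful argument---relying on the absence of role assertions and on the foil carrying no usable assertions---that commonality, alternative individuals, and fresh individuals cannot beat the intended combinatorial optimum (minimum set cover, respectively the size-one/size-two dichotomy).
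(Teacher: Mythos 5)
Your overall architecture (verification as the complement of bounded existence, a cover-style reduction with a ``backup'' gadget for the $\co\NP$ lower bound, instance checking for the $\EXP$ lower bound, brute force over a polynomial super-structure for the upper bounds) is the same as the paper's. The $\EL$/$\ELbot$ part is correct and essentially identical to the paper's proof: your Set Cover reduction is the paper's hitting-set reduction (inspired by Baader et al.) in dual form, and your backup concepts $D_1,\dots,D_{k+1}$ with $D_1\sqcap\dots\sqcap D_{k+1}\subsum C$ are exactly the gadget the paper isolates in its lemma relating cardinality-verification to bounded existence (\Cref{lem:ver-to-bounded}); you have simply fused the two steps into one reduction.

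The $\ALC$ hardness reduction, however, has a genuine gap. The theorem concerns \emph{syntactic} CEs, and for a syntactic CE the foil instantiation of the commonality must satisfy $\qCom(\vec d)\subseteq\calA$, not merely the entailment condition \ref{itm:foil}. Your pivotal claim---``$A(x)$ may be placed in the commonality exactly when $\Kmc\models A(a_0)$''---uses entailment where assertedness is required: if $\Kmc\models A(a_0)$ but $A(a_0)\notin\calA$ (the interesting case for instance checking, e.g.\ $\calA=\{B'(a_0)\}$, $B'\subsum A\in\Tmc$), the size-one competitor you describe is a \emph{semantic} CE, not a syntactic one. Nor can you replace $\{A(x)\}$ by a justification pattern for $A(a_0)$: since your fresh fact $u$ carries only $A(u)$ and $G(u)$, condition \ref{itm:justification} forces the fact instantiation to be exactly $\{A(u),G(u)\}$, so the commonality pattern can consist only of $A$-atoms mapping to $u$ at the fact, whose foil instantiations must be asserted $A$-assertions in $\calA$; such a set can entail $A(a_0)$ only if $A(a_0)\in\calA$ (up to degenerate TBoxes with $\Tmc\models\top\subsum A$). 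Hence your reference CE with difference $\{A(x),G(x)\}$ remains cardinality-minimal among syntactic CEs even when $\Kmc\models A(a_0)$, and the reduction decides ABox membership of $A(a_0)$ rather than entailment. The paper's reduction (\Cref{thm:alc-ground-syn-verification}, reused for \Cref{thm:alc-ground-syn-size}) avoids precisely this trap by making the fact individual a \emph{complete copy} of the foil, adding $\{D(b),r(b,c),r(d,b)\mid D(a),r(a,c),r(d,a)\in\calA\}$ to the ABox, so that any ABox justification of $A(a)$ can serve as the commonality with both of its instantiations inside the ABox; your reduction can be repaired in exactly this way.
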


The hardness already holds for CPs with concept names.
For complex concepts and semantic CEs, we can lift our upper bound using \Cref{lem:materialized} and prove the hardness next.
%
%

\begin{restatable}{theorem}{ThmELGroundSemSize}
\label{thm:el-ground-sem-size}
	For CPs with complex concepts,
	deciding if a given semantic CE for a CP is difference-minimal \postrebuttal{w.r.t cardinality} is 
	$\co\NP$-complete for $\EL/\ELbot$, but $\EXP$-complete for $\ALC$.
\end{restatable}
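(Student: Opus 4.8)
The plan is to handle the upper and lower bounds separately, leaning on \Cref{lem:materialized} to transfer everything to the syntactic setting of \Cref{thm:el-ground-syn-size}.

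For membership, given a semantic CE $E$ for $P=\tup{\tup{\Tmc,\Amc},C,a,b}$, I would first compute the materialized ABox $\calA_e$ of \Cref{lem:materialized}, so that $E$ is a semantic CE for $P$ iff it is a syntactic CE for $P'=\tup{\tup{\Tmc,\calA_e},C,a,b}$. Since that lemma yields equality of the two CE sets, the value $|\qDif(\vec d)|$ and hence difference-minimality w.r.t.\ cardinality is preserved for each $E$: $E$ is difference-minimal for $P$ exactly when it is for $P'$. For $\EL$ and $\ELbot$ instance checking is in $\Ptime$, so $\calA_e$ is computable in polynomial time and the problem reduces in polynomial time to syntactic difference-minimality, which is in $\co\NP$ by \Cref{thm:el-ground-syn-size}; this gives $\co\NP$ membership. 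For $\ALC$ each oracle call costs exponential time, but there are only polynomially many (one per candidate assertion over the input signature), so $\calA_e$ has polynomial size and is computed in exponential time; composing with the $\EXP$ bound of \Cref{thm:el-ground-syn-size} still yields $\EXP$ membership.

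For the lower bounds, $\EXP$-hardness for $\ALC$ follows from the generic reduction to entailment already used for the other rows of \Cref{table:complexity}, which does not depend on $C$ being a concept name. The interesting case is $\co\NP$-hardness for $\EL/\ELbot$, where I would adapt the hitting-set reduction behind \Cref{thm:el-ground-syn-size}. Given a hypergraph with edges $e_1,\dots,e_k$, I use concept names $P_v$ with $P_v\subsum Q_i$ whenever $v\in e_i$ and ABox $\{P_v(a)\mid v\in V\}$, but now take the \emph{complex} concept $C=Q_1\sqcap\dots\sqcap Q_k$ as the query. A semantic CE for the foil $b$ must then supply assertions entailing every $Q_i(b)$, and the minimal difference built from the $P_v$ atoms corresponds to a minimal hitting set. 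Presenting a fixed (minimal) hitting set as a concrete CE $E$ and asking whether its difference is cardinality-minimal is then $\co\NP$-hard, since a strictly smaller CE exists iff a strictly smaller hitting set does.

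The main obstacle is that, unlike in the concept-name case, a complex concept admits a \emph{direct} way to entail the foil: because ABox patterns may contain the atoms $Q_i(x)$, one can cover $e_i$ by adding $Q_i(b)$ instead of some $P_v(b)$, turning the clean hitting-set instance into a set-cover instance with free singletons. I expect the key trick to be neutralizing these shortcuts, e.g.\ by replacing each $Q_i$ with $|V|+1$ copies $Q_i^1,\dots,Q_i^{|V|+1}$, all entailed by the same $P_v$'s and all conjoined in $C$, so that covering an edge directly costs $|V|+1$ while any hitting set costs at most $|V|$, forcing the optimum back onto the $P_v$ assertions. Verifying that this blow-up preserves the exact correspondence between minimal-cardinality differences and minimal hitting sets, while keeping $E$ a legitimate semantic CE (in particular that $q(\vec c)$ stays a $\subseteq$-minimal justification), is the part that needs the most care.
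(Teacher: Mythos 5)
Your upper bound is the paper's own argument: materialize the entailed assertions via \Cref{lem:materialized} and invoke the syntactic result (\Cref{thm:el-ground-syn-size}); this is literally how the paper proves membership (see \Cref{thm:el-sem-size}), and your handling of the $\ALC$ case is equally unproblematic. For the $\co\NP$-hardness core in $\EL/\ELbot$ you genuinely diverge. The paper keeps the hitting-set reduction of \Cref{thm:el-ground-syn-size} completely unchanged, only replacing the concept name $C$ by the complex concept $Q_1\sqcap\dots\sqcap Q_k$, and absorbs the ``shortcut'' atoms $Q_i(b)$ inside the correctness proof rather than blocking them: in any semantic CE with $|\qDif(\vec d)|\leq n$, every assertion of $\qDif(\vec d)$ is either some $P_v(b)$, which contributes $v$ to the hitting set, or some $Q_j(b)$, for which one picks an \emph{arbitrary} $v\in e_j$. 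Each atom is traded for at most one vertex, so one obtains a hitting set of size at most $n$. In other words, a singleton shortcut is never cheaper than simply choosing one vertex of that edge, so the ``free singletons'' you worry about do not change the optimal cardinality at all. Your $(|V|{+}1)$-copies blow-up is sound (with it, a cheapest difference never uses shortcuts, and $\subseteq$-minimality of $q(\vec c)$ amounts to $\subseteq$-minimality of the hitting set), but it is unnecessary machinery; the paper's one-for-one exchange argument is both simpler and tighter.

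The genuine gap is your final step, passing from $\NP$-hardness of bounded existence (``is there a semantic CE with difference of size at most $n$?'') to $\co\NP$-hardness of the verification problem the theorem is actually about. As written, ``presenting a fixed (minimal) hitting set as a concrete CE $E$'' is circular: a polynomial-time reduction cannot compute a minimum hitting set, and if it could, the resulting verification instance would always be a yes-instance. What is needed is a reduction whose \emph{input} already contains the candidate solution. One option is to reduce from ``given $G$ and a hitting set $S$, is $S$ of minimum cardinality?'', but the $\co\NP$-hardness of that problem itself requires a padding argument, so nothing is gained for free. The paper instead factors this step into a generic bridge, \Cref{lem:ver-to-bounded}: pad the CP with fresh concept names $A_1,\dots,A_n$, axioms $C\subsum Q$ and $A_1\sqcap\dots\sqcap A_n\subsum Q$, and assertions $A_i(a)$, and hand the verifier the CE with difference $\{A_1(x),\dots,A_n(x)\}$; this CE is cardinality-minimal iff the original CP admits no CE with difference of size at most $n-1$. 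Note that even this bridge needs adaptation in your setting: the paper's padding makes the CP concept a fresh concept \emph{name} $Q$, and for semantic CEs such a CP admits the trivial difference $\{Q(x)\}$ of size one, so the padding must be redone with a complex concept (or directly at the hypergraph level) to respect the theorem's hypothesis. In short, the part you flagged as needing the most care (neutralizing shortcuts) is actually dispensable, while the part you treated as immediate (existence-to-verification) is where the substantive missing work lies.
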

\section{Conflict-Minimal Explanations}\label{sec:conf-min}

\todo[inline]{In various places, I use $a$ both for arbitrary individuals and the individual in the CP - this has to be adapted.}


%
%
It seems natural to favor explanations with an empty or minimal conflict set.
Unfortunately, it turns out that this makes computing CEs significantly harder,
and may lead to explanations that are overall much more complex.
The reason is that avoiding conflicts may require fresh individuals in the foil evidence: 
\begin{example}
 Consider $P=\tup{\tup{\Tmc,\Amc},C,a,b}$ with
\begin{align*}
  \Tmc=&\{\ \ \exists r.\exists r.A\sqsubseteq C,\ \
  B\sqsubseteq\neg A\sqcap\forall r.\neg A\ \ \}, \\
  \Amc=&\{\ \ A(a),\ \ r(a,a),\ \ B(b)\ \ \}
 \end{align*} 
 A conflict-free syntactic CE for $P$ is
 \[
  \tup{\emptyset,\ \{\ r(x,y),\ r(y,z),\ A(z)\ \},\
  \tup{a,a,a},\ \tup{b,c,a},\  \emptyset} 
  \]
 We need the fresh individual $c$ since $b$ cannot satisfy $A$ nor
 have $a$ as successor without creating a conflict.
\end{example}


Indeed, we may even need \emph{exponentially many} of such individual names.
To show this, we reduce a problem to conflict-minimality
that has been studied under
the names \emph{instance query emptiness} \cite{DBLP:journals/jair/BaaderBL16} and
\emph{flat signature-based ABox abduction} \cite{Koopmann21a}.
We reduce from the abduction problem as it simplifies transferring size bounds.

\begin{definition}
 A \emph{signature-based (flat) ABox abduction problem} is a tuple
 $\tup{\Kmc,\alpha,\Sigma}$ of a KB $\Kmc$, an axiom $\alpha$
 (the observation) and a set $\Sigma$ of concept and role names.
 A \emph{hypothesis} for this problem is a set $\Hmc$
 of assertions
 that uses only names from $\Sigma$, and for which
 $\Kmc\cup\Hmc\not\models\bot$ and $\Kmc\cup\Hmc\models\alpha$.
\end{definition}


\begin{restatable}{lemma}{AbductionReduction}\label{lem:abduction-reduction}
 For $\Lmc\in\{\ELbot,\ALC,\ALCI\}$, let $\mathfrak{A}=\tup{\Tmc,C(b),\Sigma}$ be a signature-based ABox
 abduction problem with an $\Lmc$ TBox $\Tmc$. Then, one can construct in
 polynomial time an \Lmc-CP s.t. from every syntactic CE with empty conflict set, one can construct in polynomial time a subset-minimal hypothesis for $\mathfrak{A}$ and vice versa.
\end{restatable}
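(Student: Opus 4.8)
The plan is to turn the abduction problem $\mathfrak{A}=\tup{\Tmc,C(b),\Sigma}$ into a CP whose fact side merely witnesses $C(a)$ over the signature $\Sigma$, while the foil side is where hypotheses live. Concretely, I keep the TBox and the observation individual $b$, give $b$ no assertions at all, and equip a fresh fact individual $a$ with a saturated $\Sigma$-gadget $\Amc=\{A(a)\mid A\in\Sigma\cap\NC\}\cup\{r(a,a)\mid r\in\Sigma\cap\NR\}$, setting $P=\tup{\tup{\Tmc,\Amc},C,a,b}$. This is polynomial in $\mathfrak{A}$. The two design goals are that $\Amc$ is built entirely from $\Sigma$, so every syntactic difference is forced to use only $\Sigma$-names, and that it is ``universal'', so that any $\Sigma$-structure witnessing $C$ can be mapped into it.

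The clean half is the foil side, and it is where both the hypothesis correspondence and the size transfer come from. Since the only $\Sigma$-assertions in the KB mention $a$ and the foil evidence $\vec d$ ranges over $b$ and fresh names, one checks that \ref{itm:foil} forces $\qCom(\vec d)=\emptyset$, so $q(\vec d)=\qDif(\vec d)$. For a syntactic CE, $\qDif(\vec c)\subseteq\Amc$ forces the pattern $\qDif$ to be over $\Sigma$, hence $\qDif(\vec d)$ is a $\Sigma$-ABox; \ref{itm:entailment} gives $\Tmc,\qDif(\vec d)\models C(b)$; and an empty conflict set means, by \ref{itm:conflict}, that $\Tmc,\Amc\cup\qDif(\vec d)\not\models\bot$, which (as $a$ and the foil share no individuals, and $\ELbot,\ALC,\ALCI$ are closed under disjoint unions of models) is equivalent to $\Tmc,\qDif(\vec d)\not\models\bot$. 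Thus $\qDif(\vec d)$ is exactly a hypothesis for $\mathfrak{A}$, and removing redundant assertions yields a subset-minimal one. Conversely, from a subset-minimal hypothesis $\Hmc$ I read off the pattern $\qDif(\vec x)$ of $\Hmc$, instantiate $\vec d$ to the individuals of $\Hmc$ (with $b$ for the observation individual, fresh names elsewhere), and let $\vec c$ map every variable to $a$. Because $\Hmc$ is over $\Sigma$, $\qDif(\vec c)\subseteq\Amc$; and since $\Hmc$ can be exponentially large, so is $\vec d$, which is how the exponential blow-up in foil evidence transfers.

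What remains, and what I expect to be the main obstacle, is making the fact side both consistent and correct. Two things must hold: the CP must be valid, i.e. $\Tmc,\Amc\not\models\bot$ and $\Tmc,\Amc\models C(a)$ with a subset-minimal justification matching the foil difference (\ref{itm:justification}); and for the direction ``hypothesis $\Rightarrow$ CE'' the collapsed instantiation $\qDif(\vec c)$ at $a$ must still entail $C(a)$. For $\EL$ this is immediate, since homomorphisms preserve $\EL$-concept membership forward and the map sending every individual of $\Hmc$ to $a$ is a homomorphism into the self-loop gadget, so $\Tmc,\Hmc\models C(b)$ yields $\Tmc,\Amc\models C(a)$. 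For $\ELbot,\ALC,\ALCI$ this argument breaks: the saturated gadget may clash with disjointness or value-restriction axioms of $\Tmc$ (destroying consistency), and collapsing a hypothesis onto a single self-looped node need not preserve $C$-membership.

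The hard part is therefore to replace the naive gadget by a polynomially-sized fact structure that is consistent with $\Tmc$, realizes $C(a)$ precisely when $\mathfrak{A}$ admits a hypothesis, and into which every hypothesis folds so that its pattern reappears as a justification at $a$. I would build this by exploiting the shape of canonical witnesses for instance-query emptiness, splitting conflicting $\Sigma$-assertions across a bounded number of nodes rather than one self-loop and guarding the relevant axioms so consistency is retained, and then verify that \ref{itm:justification} and \ref{itm:conflict} translate subset-minimality of the difference into subset-minimality of the hypothesis. The foil-side equivalence of the previous paragraph is unaffected by this change, so the correspondence and the size bound survive.
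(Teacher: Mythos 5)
Your foil-side analysis (that for a conflict-free syntactic CE one is forced to have $\qCom(\vec d)=\emptyset$ and that $\qDif(\vec d)$ is then exactly a $\Sigma$-ABox which, together with $\Tmc$, is consistent and entails $C(b)$) matches the paper's argument and is fine. The gap is precisely the step you defer to the end: a polynomially-sized fact structure that is \emph{consistent with $\Tmc$} and into which \emph{every} (subset-minimal) hypothesis folds does not exist in general, so no amount of ``splitting conflicting assertions across a bounded number of nodes'' can rescue the plan for $\ALC$ and $\ALCI$. Concretely, let $\Tmc$ be an $\ALC$ TBox implementing an $n$-bit counter over auxiliary concept names $B_1,\ldots,B_n\notin\Sigma$, where the counter is forced to increment along $r\in\Sigma$, a $\Sigma$-concept $S$ forces value $0$, the all-ones value forces a concept $F$, and $\exists r.F\sqsubseteq F$; with observation $F(b)$, a subset-minimal hypothesis is an $r$-chain of length $2^n$ starting at $b$ with $S(b)$. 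If this hypothesis mapped homomorphically into a $\Tmc$-consistent ABox $\Amc$, then in any model of $\tup{\Tmc,\Amc}$ the images of the chain nodes carry pairwise distinct counter values, hence are pairwise distinct individuals, so $\Amc$ has at least $2^n$ individuals. This is no accident: the whole point of this lemma in the paper is to \emph{import} the exponential lower bound on hypothesis size and the $\ExpTime$/$\coNExpTime$-hardness results from signature-based abduction, and those are incompatible with the existence of a polynomial, consistent, hom-universal fact gadget.

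Note also that your diagnosis of \emph{why} the naive gadget fails is partly backwards, and this is what steered you away from the actual fix. Collapsing a hypothesis onto the self-looped node $a$ is a homomorphic image, i.e., the result of adding equalities between individuals, so it \emph{always} preserves entailment of $C$, in $\ALC$ and $\ALCI$ just as in $\EL$; what collapsing (and saturation) fails to preserve is \emph{consistency}, and an inconsistent $\tup{\Tmc,\Amc}$ is fatal because then $C(b)$ is trivially entailed and the tuple is not a CP at all. The paper therefore keeps exactly your saturated gadget $\{A(a),r(a,a)\mid A,r\in\Sigma\}$ but repairs the \emph{TBox} rather than the gadget: it relativizes $\bot$ by a fresh concept $A_\bot$, replacing each $C'\sqsubseteq D'\in\Tmc$ by $C'\sqsubseteq D'\sqcup A_\bot$ and propagating $A_\bot$ along all roles in both directions, so that the new TBox $\Tmc'$ is consistent with \emph{every} ABox; it asserts $A_\bot(a)$ and uses the target concept $C\sqcup A_\bot$ (for $\ELbot$: a fresh $A_C$ with $C\sqsubseteq A_C$ and $A_\bot\sqsubseteq A_C$), so the fact side holds trivially and the CP is always valid; and it adds $B_\bot(b)$ together with $B_\bot\sqcap A_\bot\sqsubseteq\bot$, so that emptiness of the conflict set forces the difference at $b$ to avoid $A_\bot$, which by construction is equivalent to genuine $\Tmc$-consistency of the hypothesis. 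The fact-side entailment $\Tmc',\qDif(\vec c)\models C\sqcup A_\bot(a)$ for the folded hypothesis is then established by an unfolding (duplication-of-$a$) model argument, with no consistency requirement on the gadget whatsoever.
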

\begin{proof}[Proof sketch]
We give the idea for $\ALCI$ and provide the other
reduction in the supplemental material.
We construct a new TBox $\Tmc'$ that contains
for every CI $C\sqsubseteq D\in\Tmc $ the CI $C\sqsubseteq D\sqcup A_\bot$, where $A_\bot$ is fresh.
In addition, $\Tmc'$ contains $\exists r.A_\bot\sqsubseteq A_\bot$
and $A_\bot\sqsubseteq\forall r.A_\bot$ for every role $r$ occurring in $\Tmc$, and
the axiom $B_\bot\sqcap A_\bot\sqsubseteq\bot$, where $B_\bot$ is also fresh.
For any ABox $\Amc'$ not containing any of the
fresh names, (I) $\Tmc',\Amc'\not\models\bot$ and
(II) $\Tmc,\Amc'\models\bot$
iff for some individual name $b$, $\Tmc'\cup\Amc'\models A_\bot(b)$.
We further define
\begin{align*}
 \Amc=&\{A(a),r(a,a)\mid A\in\NC\cap\Sigma, r\in\NR\cap\Sigma\}\\
	&\cup\{A_\bot(a), B_\bot(b)\}.
\end{align*} 
The CP is now defined as $P=\tup{\tup{\Tmc',\Amc},C\sqcup A_\bot, a,b}$.
\end{proof}

\Cref{lem:abduction-reduction} also allows us to reduce the abduction problem to the
problem of computing conflict-minimal syntactic CEs.
The reason is that it is easy to extend a given CP $\tup{\Kmc,C,a,b}$ so that it has a
syntactic CE with a non-empty conflict set:
Simply add to $\Kmc$ the following axioms, where $A^*$ and $B^*$ are fresh:
$A^*(a)$, $B^*(b)$, $A^*\sqsubseteq C$, $A^*\sqcap B^*\sqsubseteq\bot$.
Now a syntactic CE can be obtained by setting $\qCom(\vec{x})=\emptyset$,
$\qDif(\vec{x})=\{A^*(x)\}$ and $\Cmc=\{B^*(b)\}$.
Consequently, we can decide the existence of a hypothesis for a
given abduction problem by computing a conflict-minimal CE and checking whether its conflict set is empty.
This now allows us to import a range of complexity results from \cite{Koopmann21a}.
To obtain matching upper bounds, we extend the construction of the CE super-structure
to now work on possible \emph{types} of individuals in the foil evidence.
Fix a CE with difference $\qDif$ and foil evidence $\vec{d}$.
Let $\Imc$ be a model of $\Kmc\cup \qDif(\vec{d})$,
and let $\Sub$ be the set of \text{(sub-)concepts} occurring in $\Kmc$ and $C$.
We assign to every $d\in\Delta^\Imc$ its \emph{type} defined as $\tp_\Imc(d)=a$ if $d=a^\Imc$ with $a$ an individual that occurs in
$\Kmc$ or $P$, and otherwise as \\[-.75em] 
 $$\tp_\Imc(d)=\{C\in\Sub\mid d\in C^\Imc\}.$$~\\[-1.25em] 
We can use the types for an arbitrary model $\Imc$ of the KB to bound the number of fresh individuals in any given CE without
introducing new conflicts. The resulting CE contains a variable $x_{c,t}$
for every individual $c$ occurring in $\Amc$ and type $t$ occurring in the range of $\tp_\Imc$.
Together with the corresponding lower bound from the abduction problem, we obtain:

\begin{restatable}{theorem}{ThmConflictFreeSize}\label{the:conflict-free-size-upper}
There exists a family of \ELbot-CPs in s.t. the size of their conflict-minimal syntactic CEs is exponential in the size of the CP.
 At the same time, every $\ALCI$-CP has a subset and a cardinality conflict-minimal
 syntactic CE whose size is  at most exponential in the size of the CP.
\end{restatable}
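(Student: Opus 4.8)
The two halves call for different techniques: the exponential lower bound is inherited from ABox abduction via \Cref{lem:abduction-reduction}, while the matching upper bound comes from a type-refined version of the super-structure of \Cref{sec:diff-min}.

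\textbf{Lower bound.} I would start from the known fact that signature-based flat ABox abduction in \ELbot{} has a family of solvable instances $\mathfrak{A}_n=\tup{\Tmc_n,C(b),\Sigma_n}$ all of whose subset-minimal hypotheses have size exponential in $n$~\cite{Koopmann21a}. Applying \Cref{lem:abduction-reduction} to each $\mathfrak{A}_n$ yields an \ELbot-CP $P_n$ of size polynomial in $n$. Because $\mathfrak{A}_n$ is solvable, $P_n$ admits a syntactic CE with empty conflict set, so the minimal conflict is $\emptyset$ and \emph{every} conflict-minimal CE for $P_n$ (subset or cardinality) has $\Cmc=\emptyset$. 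By the ``vice versa'' direction of \Cref{lem:abduction-reduction}, any such CE produces, in polynomial time, a subset-minimal hypothesis for $\mathfrak{A}_n$; since that hypothesis is exponential and is computed by a polynomial-time map, the CE itself must be exponential in $n$. This gives the required family.

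\textbf{Upper bound.} Here I would refine $E_m$ so that the foil side ranges over \emph{types} instead of named individuals, as sketched before the theorem. Fix a model $\Imc$ of the consistent KB $\Kmc$ that realizes every type satisfiable w.r.t.\ $\Tmc$ over $\Sub$; there are at most $|\NI(\Amc)|+2^{|\Sub|}$ types, hence exponentially many. The refined super-structure has a variable $x_{c,t}$ for each $c\in\NI(\Amc)$ and each type $t$ in the range of $\tp_\Imc$, with $\vec c_m$ sending $x_{c,t}\mapsto c$ and $\vec d_m$ sending all variables of a common type $t$ to one representative of $t$ (the individual itself when $t$ is a named type, a dedicated fresh name otherwise); $\qCom$ and $\qDif$ collect all assertions expressible over these variables, exactly as for $E_m$. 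This object has exponential size. The core is then a type-level embedding lemma, in the spirit of \Cref{claim:embed}: every syntactic CE $E'=\tup{\qCom',\qDif',\vec c',\vec d',\Cmc'}$ embeds into the refined super-structure. Given $E'$, condition \ref{itm:conflict} yields a model $\mathcal{J}$ of $\Tmc,(\Amc\setminus\Cmc')\cup\qDif'(\vec d')$; mapping each foil individual $d_i'$ to $\tp_{\mathcal{J}}(d_i')$ (a type realized in $\Imc$ by choice of $\Imc$) defines the homomorphism on the foil side, while the fact side maps as for $E_m$. Two safety claims, analogous to \Cref{lem:restricting-x} and \Cref{claim:conflict}, then show that restricting the super-structure to this image keeps $\Tmc,\qDif(\vec d_m)\models C(b)$ and keeps $\Tmc,(\Amc\setminus\Cmc')\cup\qDif(\vec d_m)$ consistent. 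Hence any conflict set realizable by some CE is realizable inside the exponential super-structure, and minimizing $\Cmc$ there---over the finitely many subsets of $\Amc$, as in step \ref{p:minimize-conflict}---yields both a subset- and a cardinality-conflict-minimal syntactic CE of exponential size.

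\textbf{Main obstacle.} The delicate point is the embedding lemma, specifically that collapsing foil individuals by type preserves consistency (so that the conflict set does not grow) while still entailing $C(b)$. Merging two same-type foil individuals can force new role edges onto a single element, so the consistency argument cannot merely reinterpret names; it must rebuild a model realizing each type afresh, wire the role assertions of $\qDif(\vec d_m)$ between these realizations, and argue through the type abstraction that every TBox axiom survives. This is exactly where $\ALCI$ concept membership being determined by types (a bisimulation-invariance argument, with inverse roles forcing the types to record $\exists r^-$ witnesses) is indispensable.
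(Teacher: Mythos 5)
Your proposal is correct, and both halves match the paper's proof in substance: the lower bound is exactly the paper's (inherited from \Cref{lem:abduction-reduction} plus the exponential-hypothesis family for flat signature-based abduction in \ELbot, with the same ``solvable abduction $\Rightarrow$ minimal conflict is empty $\Rightarrow$ poly-time map forces exponential CEs'' chain), and the upper bound rests on the same core idea of merging foil individuals that share a type. The packaging differs slightly: you build a CE-independent super-structure over \emph{all} $\Tmc$-satisfiable types and then embed, restrict, and re-minimize, whereas the paper works per-CE---it takes an arbitrary conflict-minimal CE $E$, picks a model $\Imc$ of $\Tmc,(\Amc\setminus\Cmc)\cup q(\vec d)$ witnessing \ref{itm:conflict}, and renames each variable $x$ (fact component $c$, foil component $d$) to $x_{c,\tp_\Imc(d^\Imc)}$, which bounds the vector length by $|\NI(\Amc)|\cdot 2^{|\Sub|}$ while keeping $\Cmc$ unchanged; no universal model and no subsequent minimization step are needed. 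The more interesting difference concerns your ``main obstacle'': you claim the consistency argument must rebuild fresh realizations of each type and wire role edges between them (a mosaic-style reconstruction), but the paper resolves this more simply by taking the \emph{filtration} of the witnessing model itself---domain $=$ realized types, $A^\Jmc=\{\tp_\Imc(d)\mid d\in A^\Imc\}$, $r^\Jmc=\{\tup{\tp_\Imc(d),\tp_\Imc(e)}\mid\tup{d,e}\in r^\Imc\}$---so the quotient inherits satisfaction of $\Tmc$, $\Amc\setminus\Cmc$, and the merged pattern at once, with no fresh wiring to justify; the inverse-role bookkeeping you flag is exactly what makes this filtration type-preserving. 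Your mosaic route is not wrong (the paper itself uses precisely that reconstruction in the proof of \Cref{thm:conf-in}, where no single witnessing model is available), but for this theorem the filtration is the shorter path, and entailment of $C(b)$ then follows, as you say, from the homomorphism $d\mapsto d_{\tp_\Imc(d^\Imc)}$ fixing $b$.
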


We can construct the set of possible types for a given CP using a type-elimination structure,
giving us a set of possible tuples for the CE in deterministic exponential time.
Based on this, we can modify our method for computing difference-minimal CEs to also construct conflict-minimal CEs. For \ALC,
using an oracle for entailment would yield membership in \TwoExpTime.
To improve this, we observe that even in models for the constructed CP, the
number of possible types is still exponentially bounded.

\begin{restatable}{theorem}{ThmConflictComplexityUpper}
	\label{thm:conf-in}
  Deciding whether a given syntactic CE for a CP is (subset or cardinality) conflict-minimal, is
  \begin{itemize}
   \item \ExpTime-complete for \ELbot-CPs,
   \item \coNExpTime-complete for \ALC- and \ALCI-CPs,
  \end{itemize}
  where the complexity only depends on the size of the CP.
\end{restatable}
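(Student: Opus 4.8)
The plan is to prove the lower and upper bounds separately, reusing the abduction correspondence of \Cref{lem:abduction-reduction} for hardness and the exponential size bound of \Cref{the:conflict-free-size-upper} together with a type-indexed super-structure for membership. Since the input CE $E$ is promised to be valid, only its conflict set $\Cmc\subseteq\Amc$ is relevant to the question, and this set is polynomial in the CP; this is what makes the complexity depend solely on the size of the CP.

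For hardness I would start from signature-based (flat) ABox abduction, whose hypothesis-existence problem is \ExpTime-complete for \ELbot and \NExpTime-complete for \ALC and \ALCI by the results imported from \cite{Koopmann21a}. Given such an abduction problem, I apply \Cref{lem:abduction-reduction} to obtain an \Lmc-CP and extend it by the fresh axioms $A^*(a)$, $B^*(b)$, $A^*\sqsubseteq C$ and $A^*\sqcap B^*\sqsubseteq\bot$ described in the text, so that the tuple with $\qCom=\emptyset$, $\qDif=\{A^*(x)\}$ and $\Cmc=\{B^*(b)\}$ is a valid syntactic CE. By the abduction correspondence, a CE with a conflict strictly below $\{B^*(b)\}$ --- necessarily an empty one --- exists iff a hypothesis exists. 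Hence this canned CE is conflict-minimal (simultaneously in the subset and the cardinality sense, as $\Cmc$ is a singleton) iff no hypothesis exists. This is the complement of abduction existence, the reduction is polynomial and outputs a polynomial-size CE, so we obtain \ExpTime-hardness for \ELbot (as \ExpTime is closed under complement) and \coNExpTime-hardness for \ALC and \ALCI.

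For membership I would generalise the super-structure of \Cref{sec:diff-min} to one indexed by individual/type pairs $x_{c,t}$, computing the realizable types by a type-elimination construction over the subconcepts of the CP; by \Cref{the:conflict-free-size-upper} this captures a conflict-minimal CE of at most exponential size, and there are only $2^{\mathrm{poly}(|P|)}$ types. For \ELbot, where consistency and entailment are polynomial, I run the conflict-minimization analogue of steps \ref{p:make-consistent}--\ref{p:minimize-conflict} deterministically on this exponentially large super-structure and compare the resulting minimal conflict with $\Cmc$; since every check is polynomial in an exponential object, the whole procedure stays in \ExpTime. For \ALC and \ALCI I instead decide the complement nondeterministically: guess a CE $E'$ of exponential size (sufficient by \Cref{the:conflict-free-size-upper}) whose conflict is strictly smaller than $\Cmc$, and verify \ref{itm:entailment}--\ref{itm:conflict}. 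The decisive point is that the required instance checks $\Tmc,q'(\vec c')\models C(a)$ and $\Tmc,q'(\vec d')\models C(b)$ use a polynomial TBox but an exponentially large ABox; because the set of types remains bounded by $2^{\mathrm{poly}(|P|)}$ even over this blown-up ABox, the type-based instance-checking algorithm runs in time $2^{O(|\Tmc|)}\cdot\mathrm{poly}(|q'|)=2^{\mathrm{poly}(|P|)}$, i.e. single-exponential in $|P|$. Thus verification is in \ExpTime, the complement is in \NExpTime, and conflict-minimality verification is in \coNExpTime.

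The main obstacle is exactly this last step. A naive verification over the exponential super-structure would evaluate \ALC entailments on an exponentially large ABox and therefore land in \TwoExpTime, as the text already notes. Avoiding the second blow-up --- and thereby reaching \coNExpTime rather than \TwoExpTime --- relies on the observation that the relevant types are fixed by the polynomially many subconcepts of the CP and hence stay exponentially bounded even for the enlarged ABoxes, so that adding exponentially many (fresh) individuals does not increase the number of types the instance-checking procedure must track.
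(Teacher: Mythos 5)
Your lower-bound argument coincides with the paper's (reduce hypothesis existence for flat signature-based abduction to non-minimality of the canned CE via \Cref{lem:abduction-reduction} and the $A^*/B^*$ gadget), and that half is fine. The gap is in your upper bound for \ALC and \ALCI, and it sits exactly at the step you call the ``decisive point''. The claim that instance checking over the guessed exponential-size CE can be done deterministically in time $2^{O(|\Tmc|)}\cdot\mathrm{poly}(|q'|)$ is false: instance checking in \ALC is $\co\NP$-hard already in \emph{data} complexity (fixed TBox, varying ABox), so no algorithm can be exponential only in $\Tmc$ and polynomial in the ABox unless $\Ptime=\NP$. The exponential bound on the number of \emph{types} does not rescue this, because entailment $\Tmc,q'(\vec d')\models C(b)$ is a universal statement over type \emph{assignments} (one type per individual), and over an ABox with $2^{\mathrm{poly}(|P|)}$ individuals there are doubly exponentially many such assignments. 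Consequently, deterministically verifying \ref{itm:entailment} for your guessed CE takes \TwoExpTime, and your ``guess $E'$, then verify'' procedure yields only a nondeterministic doubly exponential bound rather than $\NEXP$, so the claimed \coNExpTime membership does not follow.

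The paper's proof is structured precisely so that an entailment never has to be decided deterministically over an exponential object. It does not guess an arbitrary exponential CE: it enumerates \emph{CE-candidates} that are fully determined by an assignment of possible types (precomputed by type elimination) to the \emph{polynomially many} named individuals, with one fresh individual per possible type. Every such candidate satisfies condition \ref{itm:conflict} by construction (the type assignment itself induces the required model), so the only remaining test is \ref{itm:entailment}; and for \ALC/\ALCI the paper tests its \emph{negation} nondeterministically, by guessing one alternative assignment of possible types compatible with $q(\vec{d})$ in which the type of $b$ omits $C$. In other words, all nondeterminism is spent on certifying countermodels (non-entailment), never on certifying an entailment, and that is what keeps the verification within co-nondeterministic single-exponential time. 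A smaller remark on your \ELbot case: running the greedy steps \ref{p:make-consistent}--\ref{p:minimize-conflict} once on the type-indexed super-structure and ``comparing the resulting minimal conflict with $\Cmc$'' does not decide minimality, since two subset-minimal conflict sets can be incomparable and need not have the same cardinality; the paper instead iterates over all conflict sets smaller than the given one and decides, for each, whether a CE with that conflict set exists.
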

A straight-forward solution to this exponential explosion
is to bound the number of individuals or to disallow fresh individuals altogether.
This immediately yields a \co\NP-upper bound for the verification, but cannot regain tractability.




\begin{restatable}{theorem}{ThmNoFreshEL}\label{thm:elbot-no-fresh-subset}
	Deciding whether a given syntactic 
	explanation without fresh individuals is (subset or cardinality) conflict-minimal is $\co\NP$-complete for $\ELbot$, but 
	\ExpTime-complete for $\ALC$ and $\ALCI$.
\end{restatable}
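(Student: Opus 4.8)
The plan is to prove Theorem~\ref{thm:elbot-no-fresh-subset} by establishing matching upper and lower bounds for both logics, using the fact that once fresh individuals are forbidden, the CE super-structure $E_m$ from \Cref{sec:diff-min} already captures all candidate foil evidences over the input signature (\Cref{claim:embed}). The key simplification is that without fresh individuals the foil evidence $\vec d$ ranges over the finitely many individuals in $\NI(\Amc)$, so there is no need for the type-elimination machinery of \Cref{thm:conf-in}; the search space becomes exponential rather than doubly-exponential, which is what brings the complexity down from $\co\NEXP$ to $\co\NP$ for $\ELbot$.

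For the $\co\NP$ upper bound for $\ELbot$, I would argue as follows. To verify that a given CE $E$ with conflict set $\Cmc$ is \emph{not} conflict-minimal, an $\NP$-machine guesses an alternative candidate CE $E'$ together with its difference $\qDif'(\vec d')$, commonality $\qCom'(\vec d')$, and conflict set $\Cmc'$, where all foil/fact evidences are drawn from $\NI(\Amc)$ (no fresh individuals), so $E'$ has size polynomial in the input. The machine then checks in polynomial time that $E'$ is a valid CE with $\Cmc'\subsetneq\Cmc$ (for the subset version) or $|\Cmc'|<|\Cmc|$ (for the cardinality version). Crucially, validity checking is polynomial for $\ELbot$: entailment conditions \ref{itm:entailment}--\ref{itm:foil} reduce to $\ELbot$ instance checking (in $\Ptime$), and the conflict-set minimality \ref{itm:conflict} plus consistency of $(\Amc\setminus\Cmc')\cup\qDif'(\vec d')$ is again an $\ELbot$ consistency check, which is $\Ptime$. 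Hence non-minimality is in $\NP$ and minimality is in $\co\NP$. For the matching $\co\NP$ lower bound, I would reduce from a $\co\NP$-hard problem such as the complement of hitting set, reusing and adapting the reduction from \Cref{thm:el-ground-syn-size}, where the conflict set is forced to encode a candidate cover: the extra axioms $A^*(a)$, $B^*(b)$, $A^*\sqsubseteq C$, $A^*\sqcap B^*\sqsubseteq\bot$ described before \Cref{thm:conf-in} provide a generic way to force nonempty conflicts, and minimality of $\Cmc$ then corresponds to minimality of a hitting/covering structure in the instance.

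For $\ALC$ and $\ALCI$, the $\co\NP$ argument fails only because the polynomial-time validity check is replaced by an entailment/consistency check that is itself \ExpTime-complete; so the guess-and-verify scheme, instead of landing in $\co\NP$, gives $\co\NP^{\ExpTime}=\ExpTime$ membership (since the single $\ExpTime$ oracle subsumes the $\co\NP$ outer layer — more carefully, there are exponentially many candidate foil evidences over $\NI(\Amc)$, and one can simply enumerate all of them deterministically and run the $\ExpTime$ validity check on each, staying in $\ExpTime$ overall). The matching \ExpTime lower bound follows from the generic reduction from instance checking announced in the paragraph after \Cref{table:complexity}: one forces a CP whose unique conflict-minimal CE's conflict set is empty iff a given $\ALC$/$\ALCI$ entailment holds, and since $\ALC$/$\ALCI$ instance checking is already \ExpTime-hard this transfers.

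The main obstacle I anticipate is the $\ELbot$ lower bound. The upper bounds are routine guess-and-check arguments, and the $\ALC$/$\ALCI$ bounds reduce cleanly to known \ExpTime-hardness of entailment. But showing $\co\NP$-hardness for verifying conflict-minimality in $\ELbot$ without fresh individuals requires a careful gadget: I must ensure that the conflict set of a candidate CE faithfully encodes the combinatorial object (e.g. a vertex cover / hitting set) whose minimality is $\co\NP$-hard to verify, while simultaneously guaranteeing that the entailment conditions \ref{itm:entailment}--\ref{itm:justification} are satisfiable exactly when the intended conflict set arises, and that no spurious smaller conflict sets sneak in through the $\ELbot$ reasoning. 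Getting the disjointness axioms and the $\qDif$-pattern to interact so that minimizing $\Cmc$ precisely mirrors minimizing the covering set — and nothing else — is the delicate part.
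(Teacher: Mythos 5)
Your upper bounds (the $\co\NP$ guess-and-verify argument for $\ELbot$, and exhaustive enumeration of the exponentially many candidate CEs over $\NI(\Amc)$ for $\ALC$/$\ALCI$) and your $\ALC$/$\ALCI$ lower bound via a reduction from instance checking coincide with the paper's proof (\Cref{lem:NoFreshEL} and \Cref{lem:NoFreshALC} in the appendix). The genuine gap is the $\ELbot$ $\co\NP$-hardness, which you yourself flag as unresolved --- and the route you sketch cannot be completed as stated, at least not for the \emph{subset}-minimality half of the theorem. If you adapt \Cref{thm:el-ground-syn-size} so that conflict sets correspond to hitting sets (e.g.\ by adding axioms $P_v\sqcap B_v\subsum\bot$ and assertions $B_v(b)$, so that every $P_v(b)$ needed for the foil entailment clashes with $B_v(b)$), then deciding whether a given CE is subset conflict-minimal amounts to deciding whether the corresponding hitting set $S$ has a proper subset that is still a hitting set. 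That question is polynomial: such a subset exists iff $S\setminus\{v\}$ is a hitting set for some single $v\in S$. So any reduction in which ``minimality of $\Cmc$ corresponds to minimality of the hitting/covering structure'' can at best establish $\co\NP$-hardness for the \emph{cardinality} version; it provably cannot yield hardness for the subset version, which the theorem also claims.

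The paper's reduction (\Cref{lem:NoFreshEL}) avoids this trap by reducing from the complement of CNF satisfiability and handing the verifier a CE whose conflict set is a \emph{singleton}, $\Cmc=\{N_t(b)\}$: the difference $\{P_t(x),Q_t(x)\}$ entails $C$ via $P_t\sqcap Q_t\subsum C$ but clashes with $N_t(b)$ via $Q_t\sqcap N_t\subsum\bot$. For a singleton conflict set, subset- and cardinality-minimality coincide, and $E$ fails to be conflict-minimal iff there exists a CE with \emph{empty} conflict set. The remaining gadget ($A_\ell\subsum A_c$ for $\ell\in c$, $\bigsqcap_{c\in\varphi}A_c\subsum C$, and $A_x\sqcap A_{\bar x}\sqcap V\subsum\bot$ together with $V(b)$) makes a conflict-free CE exist precisely when $C(b)$ can be entailed using at most one of $A_x(b)$, $A_{\bar x}(b)$ per variable, i.e.\ precisely when $\varphi$ is satisfiable. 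This ``empty versus non-empty conflict'' dichotomy, rather than a size correspondence with a covering structure, is the missing idea; if you replace your hitting-set instance by a SAT instance and arrange the given CE to have a singleton conflict, your overall plan (and your upper-bound arguments, which are fine) goes through for both optimality notions.
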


\section{Commonality-Maximal Explanations}


As illustrated in the extreme by the case of semantic CPs with concept names as concept,
sometimes
minimizing differences and conflicts is not sufficient, and
we want to maximize the commonality instead
to obtain a more focussed CE. This gives us an idea on how close the
foil can get to the fact.
For $\EL$, we prove that it is $\NP$-complete to decide the existence of an explanation with commonality above a
given threshold $n\in\mathbb N$. Proving the lower bound requires a different reduction as for
\Cref{thm:el-ground-syn-size}.
%
%
%
\begin{restatable}{theorem}{ThmELGroundSynSimSize}
\label{thm:el-ground-syn-sim-size}
	Deciding whether a given syntactic CE for a CP is commonality-maximal 
	is $\co\NP$-complete for $\EL/\ELbot$, but
	$\EXP$-complete for $\ALC/\ALCI$.
\end{restatable}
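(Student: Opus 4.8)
The plan is to prove Theorem~\ref{thm:el-ground-syn-sim-size} by establishing, for each entry, a matching upper and lower bound. I will treat the statement in two groups: the $\co\NP$ result for $\EL/\ELbot$ and the $\EXP$ result for $\ALC/\ALCI$. In each case the natural route is to reduce commonality-maximality verification to a \emph{bounded existence} problem, namely: does there exist a CE whose commonality (measured by cardinality, or by the subset relation against the given candidate) is strictly larger than that of the input $E$? A given $E$ is commonality-maximal exactly when no such larger CE exists, so the verification problem is the complement of this bounded existence problem, which explains the appearance of $\co\NP$ rather than $\NP$.

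For the $\EL/\ELbot$ upper bound I would argue membership of the \emph{existence} side in $\NP$. The key observation, analogous to the bounds used for \Cref{thm:el-ground-syn-size}, is that the whole CE can be bounded polynomially: the difference $\qDif$ and commonality $\qCom$ are both subsets of assertions over the signature of the input (for syntactic CEs), so $q(\vec c)$ has size at most $|\Amc|$, and we may also bound the number of distinct individuals needed. Hence one can guess the ABox patterns $\qCom(\vec x)$, $\qDif(\vec x)$ together with the evidence vectors $\vec c,\vec d$, and then verify all conditions \ref{itm:entailment}--\ref{itm:conflict} in polynomial time using that entailment and consistency for $\ELbot$ are tractable. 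Guessing a CE with commonality strictly larger than that of $E$ therefore lies in $\NP$, giving $\co\NP$ membership for the verification problem. For $\ALC/\ALCI$ the upper bound is $\EXP$: since entailment is already $\EXP$-complete, and by \Cref{lem:dif-min}-style reasoning the super-structure $E_m$ is polynomial while the only expensive step is the entailment oracle, one can enumerate the polynomially many relevant candidate commonalities and check each in exponential time.

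The main work is the lower bounds, and here the excerpt explicitly warns that ``proving the lower bound requires a different reduction'' than the one used for \Cref{thm:el-ground-syn-size}. The earlier hardness used a hitting-set reduction tailored to \emph{minimizing} the difference; maximizing commonality is not dual to this, because the commonality must be simultaneously entailed for the foil (condition \ref{itm:foil}), so enlarging it is constrained by what the ABox already provides about $b$. I would therefore design a fresh reduction, again most plausibly from a hitting-set / vertex-cover style $\NP$-hard problem (or its complement), in which candidate elements of the commonality correspond to set elements, and the constraint that $q(\vec c)$ remain a justification (\ref{itm:justification}) together with the requirement $\Kmc\models\qCom(\vec d)$ forces any enlargement of the commonality to correspond to a valid covering structure. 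The reduction must be arranged so that a given CE $E$ fails to be commonality-maximal precisely when a larger admissible commonality (equivalently, a suitable combinatorial solution) exists. For $\ALC/\ALCI$ the $\EXP$-hardness should follow, as the excerpt notes globally, by a reduction from instance checking / entailment, exploiting that deciding condition \ref{itm:entailment} for the enlarged commonality already encodes $\EXP$-hard entailment.

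The hardest part will be the $\EL/\ELbot$ $\co\NP$-hardness reduction: I must ensure that the interaction between the four CE conditions does not accidentally make the maximal commonality trivially determined in polynomial time (which would collapse the bound), and in particular that the commonality genuinely competes against the difference and the foil-entailment requirement in a way that encodes the combinatorial hardness. A delicate point is handling the distinction between \emph{subset}-maximality and \emph{cardinality}-maximality in a single reduction; I would aim to build the instance so that all maximal commonalities have the same cardinality, letting one gadget cover both the $\subseteq$ and $\leq$ variants simultaneously, mirroring how \Cref{the:conflict-free-size-upper} arranges a single family to witness both notions.
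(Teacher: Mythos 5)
Your overall strategy is the same as the paper's: by the paper's Lemma~\ref{lem:ver-to-bounded}, verifying cardinality commonality-maximality is interreducible with the bounded-existence problem (is there a CE with commonality of size at least $n$?), and the paper then shows that bounded existence is $\NP$-complete for $\EL/\ELbot$ (guess-and-check, using $q(\vec c)\subseteq\Amc$ to bound the size of the guess) and $\EXP$-complete for $\ALC$ (exhaustive enumeration for membership; hardness by reusing the instance-checking reduction of \Cref{thm:alc-ground-syn-size}, where $\calK\models A(a)$ iff the constructed CP has a CE with commonality of size at least one). Your membership arguments match this. Two smaller inaccuracies: the candidate commonalities for the $\ALC$ upper bound are exponentially many, not ``polynomially many'' (the bound still goes through, since exponentially many checks of exponential cost stay within $\EXP$); and passing from hardness of bounded existence to hardness of verification is not literally ``taking complements'' --- the two problems have different inputs, and one needs the gadget of Lemma~\ref{lem:ver-to-bounded}, which builds a modified CP together with a concrete witness CE of commonality exactly $n$ out of fresh concept names, so that this CE is maximal iff the original CP has no CE of commonality at least $n+1$.

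The genuine gap is the $\EL/\ELbot$ lower bound, which is the heart of the theorem and which you leave as a declaration of intent (``I would design a fresh reduction\dots most plausibly from a hitting-set / vertex-cover style problem''). The paper's reduction is from the clique problem, and its design is precisely what your sketch does not supply: given $G=(V,E)$ and $n$, the TBox forces $Y_i(v)$ to be derivable exactly when $v$ has $e$-edges to witnesses of all $X_j$ with $j\neq i$, the concept $C$ requires $\exists s.Y_i$ for all $i\leq n$ together with a concept $D$ that can only sit in the difference, and the ABox makes every $v\in V$ satisfy every $X_i$, connects both $a$ and $b$ by $s$ to every node, and encodes the edges of $G$ symmetrically. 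One then shows that a CE with commonality of cardinality $2n+n(n-1)$ exists iff $G$ has an $n$-clique. The crucial, non-obvious point --- which your list of desiderata does not engage with --- is that condition~\ref{itm:justification} (subset-minimality of $q(\vec c)$) is what prevents padding the commonality: any repeated or redundant assertion would violate minimality of the justification, so a commonality of the target size must consist of $n$ distinct assertions $X_i(v_i)$, $n$ distinct assertions $s(a,v_i)$, and $n(n-1)$ edge assertions among $n$ \emph{distinct} nodes, i.e.\ a clique. Without an explicit gadget establishing this rigidity, your statement that the commonality ``genuinely competes against the difference'' remains a hope rather than a proof. Finally, your plan to cover $\subseteq$-maximality with the same gadget overreaches: the paper's results here are for cardinality-maximality only, and the subset case for $\ELbot$ is explicitly left open (Table~\ref{table:complexity}), so a single reduction handling both variants would settle an open problem, not merely reprove the theorem.
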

\section{Evaluation of a First Prototype}\label{sec:experiments}

To understand how to compute CEs in practice, we implemented a first prototype for
one of the variants~\cite{ZENODO_FILES}.
\Cref{lem:dif-min} shows that difference-minimal syntactic contrastive explanations can be computed in polynomial time, with an oracle for deciding entailment, while the other criteria are not tractable.
Semantic explanations can be reduced to syntactic ones by computing all entailed concept assertions
(\Cref{lem:materialized}), which is a standard functionality of OWL reasoning systems.
Based on these observations, we developed a prototype to compute \emph{difference-minimal syntactic CEs}.

\begin{table*}[ht]
	\centering
	\setlength{\tabcolsep}{3pt}
	\resizebox{1\columnwidth}{!}{
	\begin{tabular}{l|ccc|ccc|ccc|ccc }
\toprule
	Corpus
	& \multicolumn{3}{c|}{Signature Size}
	& \multicolumn{3}{c|}{Number of Individuals}
	& \multicolumn{3}{c|}{TBox size}
	& \multicolumn{3}{c}{ABox size}
	\\
	& avg. & med. & range
	& avg. & med. & range
	& avg. & med. & range
	& avg. & med. & range \\
	\midrule
$\ELbot$
	& 2,013 & 728 & 167 -- 7,217
	& 781 & 417 & 48 -- 3,608
	& 1,478 & 390 & 105 -- 5,653
	& 1,254 & 482 & 103 -- 8,234\\
$\ALCI$
	& 1,764 & 606 & 54 -- 7,355
	& 490  &  185 & 0 -- 5,473
	& 1,603 & 498 & 94 -- 5,286
	& 1,207 & 494 & 101 -- 9,284\\
	
	\bottomrule
		\end{tabular} 
	}
	\caption{Some details about the two corpora.}
	\label{table:meta}
\end{table*}

\begin{table*}[t]
	\centering
	\setlength{\tabcolsep}{3pt}
 	\resizebox{1\columnwidth}{!}{
	\begin{tabular}{l|cc|cc|cc|cc|cc|cc}
\toprule
	Corpus
	& \multicolumn{2}{c|}{\#CPs}
	& \multicolumn{2}{c|}{Commonality}
	& \multicolumn{2}{c|}{Difference}
	& \multicolumn{2}{c|}{Conflict}
	& \multicolumn{2}{c|}{Fresh Individuals}
	& \multicolumn{2}{c}{Duration (sec.)}

	\\
	& average & range
	& average & range
	& average & range
	& average & range
	& average & range
	& average & range \\
\midrule
	$\ELbot$
	& 35.1 & 4 -- 50
	& 0.45  & 0 -- 4
	& 1.42  & 1 -- 6
	& 0.0  &  0 -- 0
	& 0.34  & 0 -- 3
	& 2.84	&0.08 -- 386.9
\\

	$\mathcal{ALCI}$
	& 34.7 & 1 -- 50
	& 0.34 & 0 -- 7
	& 1.29 & 1 -- 5
	& 0.36 & 0 -- 9
	& 0.25 & 0 -- 5
	& 8.81 & 0.06 -- 493.6
\\
	\bottomrule
		\end{tabular}
 	}
	\caption{Results for the two corpora. ``\#CP'' states 
	the number of CPs answered (out of 50) within the timeout \postrebuttal{(10 mins)}. The other columns depict the sizes of the different components, number of fresh individuals, and computation time per explanation. }
	\label{table:results}
\end{table*}

\patrick{Check: do we give hardware details?}

\subsection{A Practical Method for Computing CEs}

To make our method for computing difference-minimal CEs practical, we refine the definition of the super structure.
Fix a CP $P=\tup{\tup{\Tmc,\Amc},C,a,b}$.
Our construction is now based on a subset $\Amc'\subseteq\Amc$, and a set $\Ibf\subseteq\NI$ of individuals to be used for the foil.
We define $E_m = \tup{\qCom(\vec x), \qDif(\vec x), \vec c_m, \vec d_m, \calC_m}$, where now
	\begin{itemize}
		\item $\vec{x}$ contains a variable $x_{c,d}$ for every $\tup{c,d}\in\NI(\Amc')\times\Ibf$,
		\item $\vec{c}_m$ uses $c$ for every $x_{c,d}$ in $\vec{x}$,
		\item $\vec{d}_m$ uses $d$ for every $x_{c,d}$ in $\vec{x}$,
		\item $\qCom(\vec{x})=\{A(x_{c,d})\mid A(c)\in\Amc',d\in\Ibf, A(d)\in\Amc\}\cup$

		$\{r(x_{c,d},x_{c',d'})\mid r(c,c')\in\Amc',d,d'\in\Ibf, r(d,d')\in\Amc\}$,
		\item $\qDif(\vec{x})=\{A(x_{c,d})\mid A(c)\in\Amc', d\in\Ibf,
		A(d)\not\in\Amc\}\cup$

		$\{r(x_{c,d},x_{c',d'})\mid r(c,c')\in\Amc', d,d'\in\Ibf, r(d,d')\not\in\Amc\}$.
	\end{itemize}
For $\Amc'=\Amc$ and $\Ibf=\NI(\Amc)$, $E_m$ is identical to the maximal CE
defined before, but too large.
%
%
Instead, for $\Amc'$ we compute the union of all justifications of $C(a)$ with an optimized implementation.
In $\Ibf$, we include individuals that are \enquote{sufficiently close} to the foil,
as well as some fresh individuals, using
\Cref{lem:restricting-x} and~\ref{claim:conflict} to ensure that $\Ibf$ is sufficiently large.
%
To apply~\ref{p:make-consistent}--\ref{p:minimize-conflict}, which repair and minimize the different components one after the other,
efficiently, we modified the implementation of the justification algorithm presented in~\cite{DBLP:conf/semweb/KalyanpurPHS07}
to allow computing justifications with a fixed component:
\begin{definition}
 Let $\Kmc$ be a KB, $\Kmc'\subseteq\Kmc$ and $\alpha$ an axiom s.t. $\Kmc\models\alpha$.
 A \emph{justification for $\Kmc\models\alpha$ with fixed component $\Kmc'$} is a subset-minimal
 $\Jmc\subseteq(\Kmc\setminus\Kmc')$ s.t. $\Kmc'\cup\Jmc\models\alpha$.
\end{definition}
In each step, we compute such justifications where we fix the TBox and the components that are currently not modified, which allows to speed
up the computation significantly.

Our implementation uses different optimizations for \ALCI and for \ELbot.
We implemented it in Java~8, using the OWL~API~5.1.20~\cite{OWL-API},
reasoning systems \ELK~\cite{ELK} and \HermiT~\cite{HERMIT}, as well as
the explanation library
\Evee 0.3~\cite{EVEE-LIB}, which helped us in computing unions of justifications
for \ELbot.

\subsection{Benchmark}

\newcommand{\numELOntologies}{46\xspace}
\newcommand{\numDLOntologies}{100\xspace}

\textbf{Ontologies.}
\yasir{Can we move Table~3 to the appendix and give a brief summary inline?}
We used KBs from the OWL Reasoner Competition ORE 2015~\cite{ORE_2015_ZENODO,ORE_2015_PAPER},
namely from the tracks OWL DL materialization and OWL EL materialization, restricted
respectively to \ALCI and \ELbot.
Those tracks are meant to be used for ABox reasoning, and contain KBs of varying shapes.
Some of these KBs contain all entailed assertions, which limits their use for explanations.
We step-wisely removed from each KB all entailed assertions to solve this.
We also discarded KBs with more than 10,000 axioms.
The resulting corpora contained \numELOntologies (\ELbot) and \numDLOntologies (\ALCI) KBs.
For details see \Cref{table:meta}.

\textbf{CPs.} For each KB in the corpus, we performed 5 runs and constructed 10 CPs for each run.
For each CP, we generated a random $\EL$ concept $C$ of maximum size 5 using a random walk on the
ABox starting from a randomly selected fact individual $a$. 
For this, we also considered entailed assertions.
For the foil, we selected a random individual $b$ that is not an instance of $C$ and shares at least one concept with $a$.

\subsection{Evaluation Results}

\newcommand{\inconsistentEL}{1\xspace}
\newcommand{\inconsistentDL}{3\xspace}
\newcommand{\noProblemsEL}{21\xspace}
\newcommand{\noProblemsDL}{17\xspace}
\newcommand{\memoryExceptionEL}{0\xspace}
\newcommand{\memoryExceptionDL}{0\xspace}

\newcommand{\uninterestingEL}{10\xspace}
\newcommand{\uninterestingDL}{13\xspace}

\newcommand{\interestingEL}{14\xspace}
\newcommand{\interestingDL}{67\xspace}

Some of the KBs in our corpus had to be excluded: in the \ELbot/\ALCI corpus, \inconsistentEL\/\inconsistentDL
were inconsistent,
for \noProblemsEL/\noProblemsDL no problems could be generated under our constraints.
Two more KBs from the \ALCI corpus where removed because \HermiT threw an exception on those.
Of the remaining KBs,
\uninterestingEL/\uninterestingDL KBs did not allow to produce interesting CEs:
for those KBs, every CE had an empty commonality and conflict, and exactly one axiom in the difference.
The reason was the simple structure of the ABox, which simply allowed for no more contrasting entailments,
e.g. because no role assertions were used.
We exclude those KBs in the following evaluation, and focus on the remaining \interestingEL/\interestingDL ones.

Experiments were conducted on a server with 2× Intel Xeon E5-2630 v4 20 cores, 2.2GHz CPUs, 
along with 
189 GB of available RAM running Debian 11 (Bullseye).
The Java runtime environment was OpenJDK 11.0.28. 
The results are shown in \Cref{table:results}. In general, the CEs computed tended to be simple,
even though the concepts to be explained were of size up to 5. This can again be explained with simplicity of some ABoxes:
if an individual has only one successor, even a complex concept of size 5 can only refer to those two individuals,
and consequently the CP may refer to only one fact. Nonetheless, we see that every component
of a CE is used, sometimes with several assertions, and also fresh individuals appear.
We also see that conflicts are a relatively rare occasion,
not happening at all in the \ELbot corpus, which may indicate that computing conflict-free CEs could still be feasible in practice.
What our evaluation also shows is that our current prototype takes surprisingly long to compute the answers.
Reasons include our approach
for making $q(\vec{x})$ consistent, selection of individuals, and that our construction often results in very large super-structures.
This shows potential for more dedicated methods in the future.

%
\section{Conclusion and Future Work}\label{sec:conclusion}

We introduced contrastive explanation problems and proposed CEs as a way to answer them.
It turns out that minimizing difference is tractable and feasible in practice,
while minimzing conflicts may lead to an exponential explosion. At the same time, conflicts do not
seem to happen often for realistic ontologies. In the future we want to investigate dedicated algorithms for computing CEs more efficiently. We are also exploring a variant of CEs which use quantified
variables in the fact and foil vectors. 
%
Moreover,
one can address the counting and enumeration complexity for CEs. 
To conclude, we propose CEs as a tool to contrast positive and negative query answers in ontology mediated query answering.

%
\clearpage

\section*{Acknowledgment}
We thank all anonymous reviewers for their valuable feedback. Research was partly funded by the  Deutsche Forschungsgemeinschaft (DFG, German Research Foundation), grant TRR 318/1 2021 – 438445824, the Ministry of Culture and Science of North Rhine-Westphalia (MKW NRW) within project WHALE (LFN 1-04) funded under the Lamarr Fellow Network programme, and the Ministry of Culture and Science of North Rhine-Westphalia (MKW NRW) within project SAIL, grant NW21-059D. 

\bibliographystyle{abbrv}
\bibliography{main}

\clearpage

\appendix

\section{Omitted Proof Details}

\subsection{Existence of Syntactic Explanations}
We first give an easy proof to our statement that every CP has a syntactic CE:

\begin{theorem}
 For every CP $\tup{\tup{\Tmc,\Amc},C,a,b}$, there exists a syntactic CE
 $\tup{\qCom(\vec{x}),\qDif(\vec{x}),\vec{c},\vec{d},\Cmc}$.
\end{theorem}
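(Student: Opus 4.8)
The plan is to read off a CE directly from an ABox justification for the fact and to instantiate the foil by a fresh renaming, keeping the commonality empty. First I would note that since $\Kmc\models C(a)$ and $\Amc$ is finite, there is an ABox justification $\Jmc\subseteq\Amc$, i.e.\ a $\subseteq$-minimal subset with $\Tmc,\Jmc\models C(a)$. Let $a=e_0,e_1,\dots,e_k$ be the individual names in $\Jmc$; I would introduce a variable $x_i$ for each $e_i$, let $\qDif(\vec{x})$ be the pattern obtained from $\Jmc$ by replacing each $e_i$ by $x_i$, and set $\qCom(\vec{x})=\emptyset$, so $q(\vec{x})=\qDif(\vec{x})$. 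For the fact evidence I take $\vec{c}=\tup{e_0,\dots,e_k}$, giving $q(\vec{c})=\Jmc$, and for the foil I take $\vec{d}=\tup{b,f_1,\dots,f_k}$ with $f_1,\dots,f_k$ pairwise distinct fresh names, also distinct from $b$. Writing $\sigma$ for the renaming $e_0\mapsto b$, $e_i\mapsto f_i$, we then have $q(\vec{d})=\sigma(\Jmc)$.

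The verification of \ref{itm:entailment}--\ref{itm:foil} would rest on a single observation: because a TBox contains no individual names, entailment is preserved under any renaming of individuals, i.e.\ $\Tmc,\Amc'\models C(a')$ implies $\Tmc,\sigma(\Amc')\models C(\sigma(a'))$, which one sees by pulling a model of $\Tmc,\sigma(\Amc')$ back along $\sigma$. Applying this to $\Jmc$ yields $\Tmc,q(\vec{d})=\Tmc,\sigma(\Jmc)\models C(b)$, and with $\Tmc,q(\vec{c})=\Tmc,\Jmc\models C(a)$ this gives \ref{itm:entailment}; \ref{itm:fact} holds as $q(\vec{c})=\Jmc\subseteq\Amc$, and \ref{itm:foil} is immediate since $\qCom(\vec{d})=\emptyset$. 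The candidate is syntactic, as $\qCom(\vec{c})=\qCom(\vec{d})=\emptyset\subseteq\Amc$ and $\qDif(\vec{c})=\Jmc\subseteq\Amc$ (recall that $\qDif(\vec{d})$ may leave $\Amc$).

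It remains to secure the two minimality conditions. For \ref{itm:justification} I would argue that $q(\vec{c})=\Jmc$ is already $\subseteq$-minimal for \ref{itm:entailment}+\ref{itm:fact}: any subset satisfying \ref{itm:entailment} still entails $C(a)$ from $\vec{c}$, so minimality of the justification $\Jmc$ forbids a proper subset. For \ref{itm:conflict} I only need \emph{some} $\Cmc\subseteq\Amc$ with $\Tmc,(\Amc\setminus\Cmc)\cup q(\vec{d})\not\models\bot$, after which a $\subseteq$-minimal one can be extracted; taking $\Cmc=\Amc$ reduces this to the consistency of $\Tmc,q(\vec{d})$. Here I use that $\Kmc$ is consistent (a standing property of CPs), so $\Tmc,\Jmc$ has a model, which I push forward along the \emph{injective} renaming $\sigma$ to a model of $\Tmc,\sigma(\Jmc)=\Tmc,q(\vec{d})$. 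The main obstacle is exactly this renaming-invariance lemma, and in particular the role of injectivity: it is the freshness of $f_1,\dots,f_k$ that prevents $\sigma$ from collapsing $a$ onto $b$ and thereby prevents $\Tmc,q(\vec{d})$ from becoming inconsistent, which would leave \ref{itm:conflict} with no admissible $\Cmc$ at all.
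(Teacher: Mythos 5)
Your proof is correct, and although it starts exactly where the paper's proof starts---an ABox justification $\Jmc$ for $C(a)$, turned into a pattern that $\vec{c}$ instantiates back to $\Jmc$---it diverges on the one point where the details actually matter: how the foil evidence is instantiated. The paper keeps every individual of $\Jmc$ fixed and only replaces $a$ by $b$, splits $q$ into commonality and difference according to which assertions of $q(\vec{d})$ lie in $\Amc$, and then merely asserts that some conflict set satisfying \ref{itm:conflict} can be picked. That final step is a gap in the paper's argument: when $b$ itself occurs in $\Jmc$, the substitution $a\mapsto b$ is not injective, and $\Tmc\cup q(\vec{d})$ can be inconsistent on its own, in which case \emph{no} $\Cmc\subseteq\Amc$ satisfies \ref{itm:conflict}. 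For instance, with $\Tmc=\{A\sqcap B\subsum\bot,\ A\sqcap\exists r.B\subsum C\}$ and $\Amc=\{A(a),\ r(a,b),\ B(b)\}$, the unique justification is all of $\Amc$, and the paper's foil instantiation $\{A(b),\ r(b,b),\ B(b)\}$ contradicts $\Tmc$ outright. Your choice of pairwise distinct fresh names $f_1,\dots,f_k$ is precisely what rules this out: $\sigma$ remains injective, so a model of $\Tmc,\Jmc$ (which exists because the KB of a CP is consistent) pushes forward to a model of $\Tmc,q(\vec{d})$, making $\Cmc=\Amc$ admissible and a $\subseteq$-minimal conflict set extractable from the finite $\Amc$. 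What you give up is explanatory content: your CE has $\qCom=\emptyset$ and foil evidence detached from the data, whereas the paper's variant, when it goes through, reuses $b$'s actual neighbourhood and yields a non-trivial commonality. For the bare existence statement, however, your version is the one that is fully watertight, and your renaming-invariance observation (entailment preserved under arbitrary renamings, consistency under injective ones) is exactly the lemma needed to close the step the paper leaves open.
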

\begin{proof}
 Let $\calK = \tup{\Tmc,\Amc}$.
 Since $\Kmc\models C(a)$, there exists a minimal subset $\Jmc\subseteq\Amc$ s.t.
 $\Tmc,\Jmc\models C(a)$ ($\Jmc$ is an ABox justification for $\Kmc\models C(a)$.
 We let $\vec{x}$ contain a variable $x_c$ for every individual $c$ occurring in $\Jmc$.
 Then, $\vec{c}$ maps those variables $x_c$ to the corresponding individual $c$, so that
 $\Jmc=q(\vec{c})$, and $\vec{d}$ is identical except that $x_a$ maps to $b$.
 Correspondingly, $q(\vec{d})$ is the same as $q(\vec{c})$ with all occurrences of $a$
 replaced by $b$, so that indeed $\Tmc,q(\vec{d})\models C(b)$ and therefore \ref{itm:entailment}
 is satisfied. Conditions~\ref{itm:fact} and~\ref{itm:justification} are also
 satisfied since $\Jmc$ is an ABox justification. 
 Moreover, we can split $q$ into
 $\qCom$ and $\qDif$ by simply selecting from $q(\vec{d})$ the assertions
 that are contained in $\Amc$ and not contained, respectively.
 As a result, \ref{itm:foil} is satisfied .
 Finally, we pick a subset $\Cmc\subseteq\Amc$ that satisfies~\ref{itm:conflict}.
\end{proof}

The procedure used in the proof always produces a CE, but it may not be optimal
with respect to our optimality criteria, which is why we need to have a closer look
at each of them. In the case of difference-minimality, consider the CP
$P=\tup{\tup{\Tmc,\Amc},C,a,b}$ where
\begin{align*}
 \Tmc=&\{\ A\sqcap B_1\sqsubseteq C,\ A\sqcap B_2\sqsubseteq C\ \}\\
 \Amc=&\{\ A(a),\ B_1(a),\ B_2(a),\ B_2(b)\ \}
\end{align*}
An ABox justification for $C(a)$ is $\{A(a), B_1(a)\}$, based on which
we would construct the following CE:
\[
 \tup{ \emptyset,\ \{A(x),B_1(x)\},\ \tup{a},\ \tup{b},\ \emptyset}.
\]
However, this CE is neither difference-minimal nor commonality-maximal, as
illustrated by the following CE:

\[
 \tup{ \{B_2(x)\},\ \{A(x)\},\ \tup{a},\ \tup{b},\ \emptyset}.
\]

\subsection{Global Optimality and Bounded Existence}
In various places, we prove results for the verification problem for global optimality (e.g.
optimality wrt. cardinality) by using the bounded versions of the existence problem.
%
The input to this existence problem with cardinality bounds includes an additional number $n\in \mathbb N$ and asks whether there exists an explanation with $|x|\leq n$ for $x \in \{\text{difference, conflict}\}$, or with $|x|\geq n$ for $x = \text{commonality}$. The following lemma shows that this is always possible.

\begin{lemma}\label{lem:ver-to-bounded}
	Let $\mathcal L$ be any DL considered in this paper and $P$ be an $\mathcal L$-CP.
	Then deciding whether $P$ admits a CE with cardinality of the difference or conflict (resp., commonality) at most (at least) a given number $n\in\mathbb N$ has complementary complexity  to that of verifying whether a given CE $E$ for $P$ has cardinality-minimal (resp., maximal) difference or conflict (commonality).
\end{lemma}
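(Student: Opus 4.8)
The plan is to exploit the definitional link between cardinality-optimality and the non-existence of a strictly better solution, giving a pair of polynomial reductions, one in each direction. Fix one of the three measures; for concreteness take the difference, the conflict case being essentially identical and the commonality case dual. Let $E=\tup{\qCom(\vec x),\qDif(\vec x),\vec c,\vec d,\calC}$ be a valid CE for $P$ and write $m=|\qDif(\vec d)|$. By \Cref{def:preferences}, $E$ has cardinality-minimal difference if and only if $P$ admits no CE whose difference has cardinality at most $m-1$, which is exactly the negation of the bounded-existence question on the instance $\tup{P,m-1}$. Hence verification reduces in polynomial time to the complement of bounded existence: on input $\tup{P,E}$ I first check that $E$ is a valid CE---a test that, by the entailment- and consistency-based conditions \ref{itm:entailment}--\ref{itm:conflict}, stays within the relevant complexity class for every DL considered---and then query bounded existence with bound $m-1$, accepting exactly when that query rejects. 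For the commonality I instead query with bound $m+1$ and the inequality reversed, and for the conflict I proceed as for the difference.

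For the converse I must turn a bound $n$ into a witness CE of the right size, so that ``$P$ admits a small enough CE'' becomes ``this witness is not optimal''. Given $\tup{P,n}$ with $P=\tup{\tup{\Tmc,\Amc},C,a,b}$, I would build a padded CP $P'=\tup{\tup{\Tmc',\Amc'},C^{+},a,b}$ using fresh concept names $C^{+},P_1,\dots,P_{n+1}$, where $\Tmc'=\Tmc\cup\{C\subsum C^{+},\ P_1\sqcap\dots\sqcap P_{n+1}\subsum C^{+}\}$ and $\Amc'=\Amc\cup\{P_1(a),\dots,P_{n+1}(a)\}$. Then $\Kmc'\models C^{+}(a)$ via $C(a)$, while $\Kmc'\not\models C^{+}(b)$ since neither $C(b)$ nor all the $P_i(b)$ are entailed, so $P'$ is a genuine CP; moreover $E_n=\tup{\emptyset,\{P_1(x),\dots,P_{n+1}(x)\},\tup{a},\tup{b},\emptyset}$ is a valid CE with difference of size exactly $n+1$, because every $P_i(a)$ is asserted and all of them are needed to derive $C^{+}(b)$. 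The key point is that the only routes to $C^{+}(b)$ are through the fresh axiom (forcing all $n+1$ atoms) or through $C(b)$ (reusing the original structure), so the minimum difference of $P'$ equals $\min(n+1,\,d^{\ast})$, where $d^{\ast}$ is the minimum difference of $P$. Consequently $E_n$ fails to be cardinality-minimal precisely when $d^{\ast}\le n$, i.e.\ precisely when $P$ admits a CE with difference at most $n$, which yields the reduction from bounded existence to the complement of verification. The two reductions together establish the claimed complementary complexity, and conflict and commonality are handled by analogous gadgets---a disjointness block of controlled size for the conflict, and a block of atoms shared by fact and foil for the commonality.

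The delicate point, and the one I expect to cost the most care, is verifying that the padding introduces no better solutions and that the intended witness genuinely satisfies the subset-minimality conditions \ref{itm:justification} and \ref{itm:conflict}. Concretely, one must argue that no proper subset of $\{P_1,\dots,P_{n+1}\}$ re-derives $C^{+}$, that any difference of size below $n+1$ in $P'$ must bypass the $P_i$-block and hence restricts to a bona fide CE of $P$ of no larger difference, and that the minimality condition \ref{itm:justification}---which is stated relative to the chosen $q(\vec c)$ rather than globally---is preserved when the target concept is replaced by $C^{+}$. These are exactly the places where the interaction between the optimality measure and conditions \ref{itm:justification}/\ref{itm:conflict} must be pinned down; once they are in place, the equivalence $d^{\ast}\le n \Leftrightarrow E_n \text{ not optimal}$ follows, and the lemma reduces to the two routine reductions sketched above.
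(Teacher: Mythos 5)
Your proposal is correct and takes essentially the same route as the paper's own proof: the verification direction is the same definitional observation (a CE with difference of size $m$ is cardinality-minimal iff no CE with difference at most $m-1$ exists), and your hardness gadget---fresh concepts $C^{+},P_1,\dots,P_{n+1}$, the bridging axiom $C\sqsubseteq C^{+}$, the conjunction axiom $P_1\sqcap\dots\sqcap P_{n+1}\sqsubseteq C^{+}$, and the witness CE whose difference is the block of fresh atoms asserted only for the fact---is exactly the paper's construction up to an immaterial reindexing (the paper uses $n$ fresh atoms and phrases the equivalence with bound $n-1$, you use $n+1$ atoms for bound $n$). Your sketched variants for the other two measures (a disjointness block on the foil for conflict, a block of atoms asserted for both individuals for commonality) also coincide with the gadgets the paper uses for those cases.
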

\begin{proof}
	Fix an $\mathcal L$-CP $P=\tup{\calK, C,a,b}$ and $n\in\mathbb N$.
	We prove that membership and hardness of the bounded existence problems yield the corresponding membership and hardness of the verification with global optimality in each case.
%
%

	\textbf{For membership:}
	We reduce verification problem with global optimality to the existence with bounded components.
	Let $E$ be a CE for $P$ with difference of size $k$.
	To decide whether $E$ has cardinality-minimal difference, we can check if there exists a CE for $P$ with difference of size $k-1$ at most.
	Clearly, $E$ has cardinality-minimal difference iff $P$ does not admit a valid CE with difference of size $\leq k-1$.
	
	Similar observations apply to the case of conflicts and commonality.
	(1) Given a CE $E$ for $P$ with conflict of size $k$: $E$ has cardinality-minimal conflict iff $P$ does not admit a valid CE with conflict of size $\leq k-1$.
	(2) Given a CE $E$ for $P$ with commonality of size $k$: $E$ has cardinality-maximal commonality iff $P$ does not admit a valid CE with commonality of size $\geq k+1$.
	
%
	\textbf{For hardness:} We provide reductions in the reverse direction. 
	Precisely, we reduce existence with bounded components to verification with global optimality.
	To achieve this, given a CP $\tup{\calK, C,a,b}$ and $n\in\mathbb N$, we construct a CP $P'= \tup{\calK', C', a, b}$ and a CE $E$ for $P$ as an instance of the the verification problem.
	We consider each case separately in the following and only specify $\calK'$, the CP $P'$, and a CE $E$ for $P'$.
	
	\textbf{1: Difference-Minimality.}
	We consider fresh concepts $\{Q,A_1,\dots,A_n\}$ and  let  $\calK'=\tup{\calT',\calA'}$, where 
	\begin{align*}
		\calT' &= \calT \cup \{C \subsum Q, A_1 \sqcap\dots \sqcap A_{n} \subsum Q\}, \\
		\calA' &= \calA\cup \{A_1(a),\dots, A_{n}(a) \}.
	\end{align*}
	Moreover, we let:
	\begin{align*}
		P' &=\tup{\calK, Q,a,b} \text{ and }\\
		E &=\tup{\emptyset, \{A_1(x),\dots,A_{n}(x)\}, \tup{a}, \tup{b}, \emptyset}.
	\end{align*}
	Then $E$ is a valid CE for $P'$ and has a difference of size $n$.
	Now, the only way to have a CE $E^*$ for $P'$ with a smaller difference would be via entailing $C(a)$ and using the axiom $C\subsum Q$ to entail $Q(a)$.
	However, then $E^*$ does not use any axiom in $\calA'\setminus \calA$ and therefore is also a CE for the CP $\tup{\calK,C,a,b}$.
	As a result, $E$ is a CE for $P'$ with cardinality-minimal difference iff $P$ does not admit a CE of difference smaller than $n-1$.
	
	\textbf{2: Conflict-Minimality.}
	Here, our KB $\calK'$ uses the operator ``$\bot$'' and thus our reduction applies to $\ELbot$ and DLs beyond it. 
	This is not an issue because the case of conflict-minimal CEs is not interesting for $\EL$ as there can be no conflicts. 
	We consider fresh concepts $\{Q, A, B_1,\dots,B_n\}$ and  let $\calK'=\tup{\calT',\calA'}$, where 
	\begin{align*}
		\calT' &= \calT \cup \{C \subsum Q, A\subsum Q\}\cup\{A\sqcap B_i \subsum \bot \mid i\leq {n} \}, \\
		\calA' &= \calA\cup \{A(a), B_1(b),\dots, B_{n}(b) \}.
	\end{align*}
	Moreover, we let:
	\begin{align*}
		P' &=\tup{\calK, Q,a,b} \text{ and }\\
		E &=\tup{\emptyset, \{A(x)\}, \tup{a}, \tup{b}, \{B_1(b),\dots, B_{n}(b)\} }.
	\end{align*}
	Then, $E$ is a valid CE for $P'$ and has a conflict of size $n$.
	Now, the only way to have a CE $E^*$ for $P'$ with a smaller conflict would be via entailing $C(a)$ and using the axiom $C\subsum Q$ to entail $Q(a)$.
	However, then $E^*$ does not use any axiom in $\calA'\setminus \calA$ and therefore is also a CE for the CP $\tup{\calK,C,a,b}$.
	As a result, $E$ is a CE for $P'$ with cardinality-minimal conflict iff $P$ does not admit a CE of conflict smaller than $n-1$.
	
	\textbf{3: Commonality-Maximality.}
	We consider fresh concepts $\{Q,A_1,\dots,A_n,B\}$ and  let  $\calK'=\tup{\calT',\calA'}$, where 
	\begin{align*}
	\calT' &= \calT \cup \{C \subsum Q, A_1 \sqcap\dots \sqcap A_{n} \sqcap B \subsum Q\}, \\
		\calA' &= \calA  \cup \{A_i(a), A_i(b) \mid i\leq n \} \cup \{B(a)\} .
	\end{align*}
	Then, we let:
	\begin{align*}
		P' &=\tup{\calK, Q,a,b},  \text{ and }\\
		E &=\tup{ \{A_1(x),\dots,A_{n}(x)\}, \{B(x)\} \tup{a}, \tup{b}, \emptyset}.
	\end{align*}
	Once again, $E$ is a valid CE for $P'$ and has a commonality of size $n$.
	As before, the only way to have a CE $E^*$ for $P'$ with a larger commonality would be via entailing $C(a)$ and using axiom $C\subsum Q$.
	However, then $E^*$ does not use any axiom in $\calA'\setminus \calA$ and therefore is also a CE for $\tup{\calK,C,a,b}$.
	As a result, $E$ is a CE for $P'$ with cardinality-maximal commonality iff $P$ does not admit a CE of commonality larger than $n+1$.
\end{proof}

Lemma~\ref{lem:ver-to-bounded} allows us to only focus on proving the complexity results for the existence problem with cardinality bounds for the remainder of our paper.

\section{Difference-Minimal Explanations}

\subsection{The case of subset-minimality}

\claimEmbedd*
	\begin{proof}
		Recall that $E_m =\tup{\qCom(\vec x_m), \qDif(\vec x_m),\vec c_m, \vec d_m, \calA} $. 
		Let $E'= \tup{\pCom(\vec x), \pDif(\vec x), \vec c,\vec d, \calC}$ be a syntactic CE for $P$.
		Every variable $x_i\in\vec{x}$ corresponds to an individual name $c_i$ in $\vec{c}$
		and an individual name $d_i$ in $\vec{d}$.
		Since both $c_i$ and $d_i$ occur in $\Amc$, we can define the homomorphism
		$\sigma$ as $\sigma(x_i)=x_{{c_i},{d_i}}\in\vec{x}_m$.
		It follows now by construction that $\pCom(\sigma(\vec{x}))\subseteq \qCom(\vec{x}_m)$ and
		$\pDif(\sigma(\vec{x}))\subseteq \qDif(\vec{x}_m)$. Moreover,
		$\Cmc\subseteq\calA$ holds directly by definition of $E'$ and $E_m$.
\end{proof}

\LemSupModels*
\begin{proof}
	Let $\vec{x}$ be as in the lemma and set $q=q_m|_{\vec{x}}$.
	Moreover, let $\vec{c}$ and $\vec{d}$ be the restrictions of $\vec{c}_m$ and $\vec{d}_m$ according to $\vec{x}$.
	The condition in the lemma makes sure that that $q|_{\vec{x}}(\vec{c})=\Amc$:
	indeed, $q(\vec{c})\subseteq\Amc$ follows by construction of $E_m$ and
	$\Amc\subseteq q(\vec{c})$ because
	for every axiom $A(a')$/$r(a_1',a_2')\in\Amc$, we have $A(x_{a',b''})$/$r(x_{a_1',b_1'},x_{a_2',b_2'})\in q(\vec{x}_m)$ by the conditions on $\vec{x}$. It follows that $\Tmc,q(\vec{c}_m)\models C(a)$.
	Define a mapping $h:\vec{c}\mapsto\vec{d}$ by setting 1.~$h(a)=b$ and for every $a'\in c$,
	$h(a')=b'$ for some $x_{a',b'}\in\vec{x}$. We have that $h$ is a homomorphism from $q(\vec{c})$ into
	$q(\vec{d})$: for every $A(a')\in q(\vec{c})$, we have $A(h(a'))\in q(\vec{d})$ and for every
	$r(a_1',a_2')\in q(\vec{c}_m)$, we have $r(h(a_1'),h(a_2'))\in q(\vec{c})$. It follows that
	$\Tmc,q(\vec{d})\models C(h(a))$ and since $h(a)=b$, we have $\Tmc,q(\vec{d})\models C(b)$. Moreover, the construction of $E_m$ makes sure that $\qCom|_{\vec{x}}(\vec{d})\subseteq\Amc$, so that indeed, $\Kmc,\qDif|_{\vec{x}}(\vec{d})\models C(b)$.
\end{proof}

\ClaimConflict*
\begin{proof}
	Since $\Tmc,\Amc\not\models\bot$ and $q_m(\vec{c}_m)=\Amc$, there is a model $\Imc$ of $\Tmc, q_m(\vec{c}_m)$.
	We construct a model $\Jmc$ for $\Tmc,q_m|_{\vec{x}}(\vec{d}_m)$ by setting, for
	each $x_{a',b'}\in\vec{x}$, $(b')^\Jmc=(a')^\Imc$, and setting $X^\Jmc=X^\Imc$ for everything else.
	It is standard to verify that $\Jmc$ still satisfies all axioms in $\Tmc$.
	Moreover, since $\Imc\models q_m(\vec{c}_m)$, also $\Jmc\models q_m|_{\vec{x}}(\vec{d}_m)$.
\end{proof}


The following result appends to \Cref{lem:dif-min} and establishes the hardness for $\ALC$ when considering subset-minimality.

\begin{restatable}{theorem}{ThmALCGroundSynVerification}
\label{thm:alc-ground-syn-verification}
	For $\ALC$, deciding whether a given syntactic CE is subset difference-minimal is $\EXP$-complete. 
\end{restatable}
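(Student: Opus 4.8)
The plan is to split the argument into a routine membership part and a more involved hardness part. For membership, I would invoke \Cref{lem:dif-min}: it shows that difference-minimality of a syntactic CE can be decided in polynomial time given an oracle for entailment in the DL. Since instance checking (and hence entailment) in $\ALC$ is $\EXP$-complete, a polynomial-time procedure making polynomially many calls to an $\EXP$ oracle runs in deterministic exponential time, placing the problem in $\EXP$.

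For hardness, I would reduce from instance checking in $\ALC$. Given a KB $\calK=\tup{\calT,\calA}$ and a query $A(a)$, I would build a CP in which a single fresh concept $C$ can only be derived through one freshly added axiom, so that difference-minimality of a fixed candidate CE pins down exactly whether $A(a)$ is entailed. Concretely, introduce fresh concept names $B,C$ and a fresh individual $b$, and set $\calT' = \calT \cup \{A \sqcap B \subsum C\}$ together with $\calA'$ obtained from $\calA$ by adding $A(b), B(b)$ and a faithful copy of every assertion about $a$, now stated about $b$ (i.e. $D(b)$ for each $D(a)\in\calA$, and $r(b,c)$, $r(d,b)$ mirroring $r(a,c)$, $r(d,a)$). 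This makes $b$ bisimilar to $a$ in $\calK'\setminus\{A(b)\}$. The CP is $P=\tup{\calK', C, b, a}$ with fact $b$ and foil $a$; note $\calK'\models C(b)$ (from $A(b),B(b)$) while $\calK'\not\models C(a)$, since $B,C$ are fresh and absent for $a$. The candidate explanation to be verified is $E=\tup{\emptyset, \{A(x),B(x)\}, \tup{b}, \tup{a}, \emptyset}$.

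The heart of the proof is the claim that $E$ is \emph{not} subset difference-minimal iff $\calK\models A(a)$; verifying difference-minimality is then the complement of instance checking, which is again $\EXP$-complete, giving $\EXP$-hardness. For the forward direction, if $\calK\models A(a)$ then, by the bisimulation between $a$ and $b$, $A$ is entailed for $a$ as well; I would then take an ABox justification $J$ for $A(a)$, turn it into a commonality pattern $\qCom$ applicable to both individuals, and use the single-atom difference $\qDif=\{B(x)\}$. On the fact side this justifies $A(b)$ and adds $B(b)$, yielding $C(b)$; on the foil side $\qCom(\tup{a})$ is entailed (satisfying \ref{itm:foil}) and $\{B(a)\}$ supplies the missing atom, yielding $C(a)$. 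The resulting difference $\{B(a)\}$ is a proper subset of $E$'s difference $\{A(a),B(a)\}$, so $E$ is not difference-minimal. For the converse, if $\calK\not\models A(a)$ then $A(a)$ is neither asserted nor entailed for $a$; since $C$ is fresh, the only way to entail $C(a)$ is via $A\sqcap B\subsum C$, forcing both $A(a)$ and $B(a)$ into $q(\tup{a})$. As neither is entailed, neither can lie in the commonality (which must be entailed by \ref{itm:foil}), so both belong to the difference; hence every CE has difference at least $\{A(a),B(a)\}$ and $E$ is difference-minimal.

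The main obstacle I anticipate is justifying the two semantic equivalences underpinning the copy construction: first, that $b$ is genuinely a faithful copy of $a$, so that $\calK\models A(a)$ iff $\calK'\setminus\{A(b)\}\models A(b)$, which I would establish through a bisimulation between $a$ and $b$ in $\calK'\setminus\{A(b)\}$ together with a conservativity observation that the additions over the fresh names $B,C,b$ do not change $\calK$-signature entailments about $a$; and second, that freshness of $C$ really does confine every derivation of $C(a)$ to the single new axiom, ruling out alternative justifications with a smaller difference. Once these are in place, the reduction is clearly polynomial and the equivalence transfers $\EXP$-hardness to difference-minimality verification.
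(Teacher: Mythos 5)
Your proposal is correct and follows essentially the same route as the paper's own proof: membership via \Cref{lem:dif-min} with an $\EXP$ entailment oracle, and hardness by the identical reduction from $\ALC$ instance checking (fresh $B,C$, copy individual $b$, the axiom $A\sqcap B\sqsubseteq C$, the CP $\tup{\calK',C,b,a}$, and the candidate CE $\tup{\emptyset,\{A(x),B(x)\},\tup{b},\tup{a},\emptyset}$), with the same key claim that $\calK\models A(a)$ iff this CE is not subset difference-minimal. The only cosmetic slip is the phrase ``$A$ is entailed for $a$ as well'' in your forward direction, which should read ``for $b$''; the substance, including the bisimulation and freshness observations you flag as needing justification, matches what the paper asserts.
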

\begin{proof}
	The membership follows from \Cref{lem:dif-min}.

	For hardness, we consider the following reduction from instance checking for $\ALC$.
	Let $\calK= \tup{\calT,\calA}$ be a KB and $A(a)$ be an instance query.
	We let $\calK'=\tup{\calT',\calA'}$ be another KB where $$\calT'= \calT\cup \{A\sqcap B\subsum C\}$$ and
	\begin{align*}
	\calA'= \calA\cup &\{A(b),B(b)\}\\
	\cup&\{D(b),r(b,c),r(d,b)\mid D(a),r(a,c), r(d,a)\in \calA\}
	\end{align*}
	for fresh concept names $B,C$ and individual $b$.
	That is, we consider a fresh individual $b$ as a copy of $a$ ($b$ and $a$ satisfy exactly same concept and role assertions in $\calK'$).
	Then, it holds that $\calK\models A(a)$ iff $\calK'\setminus\{A(b)\}\models A(b)$, whereas $\calK'\models A(b)$ trivially since $A(b)\in \calK'$.
	This can be established easily in every model of $\calK$ via a bisimulation that maps $a$ to $b$.

	Then, we let $P= \tup{\calK', C,b,a}$ be our CP.
	Clearly, $\calK'\models C(b)$ and $\calK'\not\models C(a)$ since $B(a),C(a)\not\in \calA'$ and no axiom in $\calA$ involves the concept names $B$ or $C$.
	Moreover, we let $E= \tup{\emptyset, \{A(x),B(x)\},\tup{b},\tup{a}, \emptyset}$ be a contrastive explanation for $P$.
	Now, we prove the following claim.

	\begin{claim}
		$\calK\models A(a)$ iff $E$ is not syntactic (subset) difference-minimal CE for $P$ in $\calK'$.
	\end{claim}
	\begin{claimproof}

		($\Rightarrow$) If $\calK\models A(a)$.
		Then, we construct an ABox pattern $\qCom(\vec x)$ from an ABox justification $J\subseteq \calA$ for $A(a)$.
		Then, let $\qDif(y) = \{B(y)\}$, which together with the TBox axiom $A\sqcap B\subsum C$ results in a CE $\tup{\qCom(\vec x,y), \qDif(\vec x, y), \vec c, \vec d,\emptyset}$ for $P$.
		Clearly, there are no conflicts since $\calK$ is consistent, and therefore $\calC=\emptyset$.
		Moreover, since $\qDif(\vec d) = B(a)$, thus $E$ is not a difference-minimal CE for $P$ in $\calK'$.

		($\Leftarrow$) If $\calK\not\models A(a)$, then clearly $\calK'\setminus \{A(b)\}\not\models A(b)$.
		As a result, the only way to entail $C(b)$ is via axioms $\{A(b), B(b)\}$.
		This yields $E$ as a unique CE 
		for $P$ in $\calK'$. 
		This holds since $\qDif(\vec d) = \{A(a), B(a)\}$ and there is no syntactic ground CE for $P$ in $\calK'$ with difference set of size one.
	\end{claimproof}
	This completes the proof to our theorem since the reduction can be obtained in polynomial time.
\end{proof}

The following corollaries regarding the complexity of semantic CEs can be derived using~\Cref{lem:materialized}.

\begin{corollary}
	For any DL $\mathcal L$ and $\Lmc$-CP, a semantic subset difference-minimal CE can be computed in
	polynomial time with access to an oracle that decides entailment for $\mathcal L$.
\end{corollary}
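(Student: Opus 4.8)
The plan is to derive the corollary directly from the two preceding results, reducing the semantic case to the syntactic one. Given an $\Lmc$-CP $P=\tup{\tup{\calT,\calA},C,a,b}$, I would first invoke \Cref{lem:materialized} to compute, in polynomial time using the entailment oracle for $\Lmc$, the materialized ABox $\calA_e$ and set $P'=\tup{\tup{\calT,\calA_e},C,a,b}$. By that lemma the family of semantic CEs for $P$ coincides, \emph{as a set of tuples}, with the family of syntactic CEs for $P'$: a tuple $E$ is one precisely when it is the other. Since the TBox and the concept $C$ are unchanged and $\calA_e$ only adds entailed assertions over the signature of the input, $P'$ is again an $\Lmc$-CP, so the same entailment oracle for $\Lmc$ suffices for all reasoning about $P'$.

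Next I would apply part~1 of \Cref{lem:dif-min} to $P'$ to compute a subset difference-minimal syntactic CE $E$ for $P'$ in polynomial time with the oracle. As both computations run in polynomial time using the same oracle, and $\calA_e$ is of polynomial size in the input, their composition is polynomial as well, yielding the claimed bound.

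It then remains to argue that $E$ is in fact a semantic subset difference-minimal CE for $P$. By the correspondence of \Cref{lem:materialized}, $E$ is a semantic CE for $P$. Difference-minimality in the subset sense of \Cref{def:preferences} is a purely comparative property: $E$ is difference-minimal iff no CE $E'$ for the same CP has $\qDif'(\vec d')\subset\qDif(\vec d)$. Because this comparison ranges over exactly the same family of tuples in both settings---syntactic CEs for $P'$ on one side, semantic CEs for $P$ on the other---a witness $E'$ refuting minimality exists for one iff it exists for the other. Hence subset difference-minimality transfers verbatim, and $E$ is a semantic subset difference-minimal CE for $P$.

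The only point requiring care, and the main (if mild) obstacle, is exactly this invariance of difference-minimality under the identification of the two CE families. One must check that the bijection of \Cref{lem:materialized} is literally the identity on tuples, so that it preserves the difference pattern $\qDif$, the foil evidence $\vec d$, and in particular whether individual names are fresh. Once this is confirmed, the quantification over competing explanations $E'$---including any that use fresh individuals---matches on both sides, and no separate treatment of fresh individuals is needed beyond what \Cref{lem:dif-min} already supplies.
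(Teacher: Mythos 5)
Your proposal is correct and follows essentially the same route as the paper: the paper derives this corollary exactly by composing \Cref{lem:materialized} (materialize the ABox to reduce semantic CEs for $P$ to syntactic CEs for $P'$) with part~1 of \Cref{lem:dif-min} (compute a subset difference-minimal syntactic CE in polynomial time with the entailment oracle). Your additional observation that the correspondence is the identity on tuples, so that subset difference-minimality transfers between the two CE families, is precisely the (implicit) glue the paper relies on.
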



\begin{corollary}\label{cor:alc-ground-sem-verification}
	For $\ALC$, deciding whether a given a semantic CE is subset difference-minimal is $\EXP$-complete. 
\end{corollary}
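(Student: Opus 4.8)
The plan is to obtain both bounds by leveraging \Cref{lem:materialized} for the upper bound and by adapting the instance-checking reduction of \Cref{thm:alc-ground-syn-verification} for the lower bound.

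For membership, given a semantic CE $E$ for an $\ALC$-CP $P=\tup{\tup{\calT,\calA},C,a,b}$, I would first compute the materialised ABox $\calA_e$ of \Cref{lem:materialized}; since each candidate assertion requires a single $\ALC$-entailment test and $\ALC$ entailment is in $\EXP$, and only polynomially many assertions are considered, this takes exponential time and $P'=\tup{\tup{\calT,\calA_e},C,a,b}$ has polynomial size. By \Cref{lem:materialized}, $E$ is then a \emph{syntactic} CE for $P'$. The key observation is that subset difference-minimality depends only on the instantiated difference $\qDif(\vec d)$, and that $P$ and $P'$ admit exactly the same realisable differences: $\calK=\tup{\calT,\calA}$ and $\calK'=\tup{\calT,\calA_e}$ are logically equivalent, so conditions \ref{itm:entailment}--\ref{itm:justification} are insensitive to replacing $\calA$ by $\calA_e$, and whenever some $\tup{\qCom,\qDif,\vec c,\vec d}$ extends to a CE for $P$ it extends (with a possibly different conflict set) to a CE for $P'$. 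Hence $E$ is subset difference-minimal for $P$ iff it is for $P'$, and the latter is decidable in $\EXP$ by \Cref{thm:alc-ground-syn-verification}.

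For hardness, I would reduce from $\ALC$ instance checking exactly as in \Cref{thm:alc-ground-syn-verification}, but perturb the construction so that the verified CE is genuinely semantic. Concretely, take the KB $\calK'=\tup{\calT',\calA'}$ built there and replace the explicit assertion $A(b)$ by $\hat A(b)$ for a fresh concept name $\hat A$, adding $\hat A\subsum A$ to $\calT'$. Then $A(b)$ becomes entailed but not asserted, so the candidate explanation $E=\tup{\emptyset,\{A(x),B(x)\},\tup{b},\tup{a},\emptyset}$ now has $\qDif(\vec c)=\{A(b),B(b)\}\not\subseteq\calA'$ (as $A(b)$ is no longer in $\calA'$) and is therefore \emph{semantic}, while remaining a valid CE because $A(b)$ is still entailed for the fact. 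The correctness argument transfers verbatim: if $\calK\models A(a)$, then $A(a)$ is entailed at the foil, can be moved into the commonality (so that \ref{itm:foil} holds), and yields the strictly smaller difference $\{B(a)\}$, witnessing that $E$ is not difference-minimal; if $\calK\not\models A(a)$, then $C(a)$ can be obtained only by asserting both $A(a)$ and $B(a)$, so $\{A(a),B(a)\}$ is the minimal difference and $E$ is difference-minimal.

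The main obstacle I anticipate lies in the membership direction: since the notions \emph{syntactic} and \emph{semantic} are defined as disjoint categories, one must argue that passing from $P$ to $P'$ neither loses nor gains admissible differences. The delicate point is condition \ref{itm:conflict}, because $\calA_e$ contains entailed assertions whose deletion no conflict set over $\calA$ can force, so the conflict components of corresponding CEs need not coincide. I would resolve this by stressing that difference-minimality quantifies only over the difference component $\qDif(\vec d)$: it therefore suffices that every difference realisable for $P$ is realisable for $P'$ and conversely, which follows from the logical equivalence of $\calK$ and $\calK'$ together with the consistency of $\calK$ guaranteed by $\Kmc\not\models C(b)$.
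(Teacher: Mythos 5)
Your membership argument follows the paper's own route: the paper obtains this corollary directly from \Cref{lem:materialized} together with \Cref{thm:alc-ground-syn-verification}, and your additional care about the conflict component (difference-minimality quantifies only over realisable differences, which are invariant under materialisation, even though the conflict sets of corresponding CEs need not coincide) is a legitimate tightening of what the paper leaves implicit. The problem lies in your hardness reduction.

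The modification you propose --- replacing the asserted $A(b)$ by $\hat A(b)$ with $\hat A\sqsubseteq A$ while keeping the copied assertions $\{D(b),r(b,c),r(d,b)\mid D(a),r(a,c),r(d,a)\in \calA\}$ --- is unsound. Because $\hat A(b)$ forces $b\in A$ in \emph{every} model of $\calK''$ (unlike in the syntactic reduction, whose key claim concerns $\calK'\setminus\{A(b)\}$, where $b$ is not forced into $A$), the copied role assertions let this membership propagate through the original TBox back to $a$. Concretely, take $\calK=\tup{\{\exists r.A\sqsubseteq\forall r.A\},\;\{r(d,a)\}}$ and query $A(a)$: then $\calK\not\models A(a)$, but your $\calK''$ contains $\hat A(b)$, $\hat A\sqsubseteq A$ and the copy $r(d,b)$, so in every model $d$ has an $r$-successor in $A$, hence \emph{all} its $r$-successors lie in $A$, and $\calK''\models A(a)$. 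Consequently $E'=\tup{\{A(x)\},\{B(x)\},\tup{b},\tup{a},\emptyset}$ is a valid semantic CE (condition \ref{itm:foil} holds precisely because $A(a)$ is now entailed) whose difference $\{B(a)\}$ is a strict subset of $\{A(a),B(a)\}$, so $E$ is \emph{not} difference-minimal although $\calK\not\models A(a)$; your claimed equivalence fails. Note why the original syntactic reduction is immune to this: there, a competitor's commonality must literally lie in $\calA'$ and its fact-side instantiation must be an exact justification of $C(b)$, so the role atoms enabling the propagation can never be packed into a competing CE, whereas in the semantic setting the entailed assertion $A(a)$ goes straight into the commonality. The repair is to \emph{drop} the copied assertions altogether: in the semantic setting they are not needed for the forward direction (the commonality $\{A(x)\}$ is checked via entailment, using $\calK''\models A(b)$ from $\hat A(b)$ and $\calK''\models A(a)$ from $\calK\models A(a)$), and once $b$ is disconnected from $a$'s neighbourhood a disjoint-union argument restores $\calK''\models A(a)$ iff $\calK\models A(a)$, after which your case analysis of the three candidate smaller differences ($\emptyset$, $\{A(a)\}$, $\{B(a)\}$) goes through.
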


\subsection{The case of cardinality-minimality}

We now turn towards the verification of globally optimal explanations (with cardinality).
\yasir{there is a typo: missing `cardinality' before diff-min.}

\ThmELGroundSynSize*

\begin{proof}
	We complete the proof in two parts. 
	\Cref{thm:el-ground-syn-size} proves the claim for $\EL$ and $\ELbot$, whereas \Cref{thm:alc-ground-syn-size} covers the case for $\ALC$.
	Moreover, both theorems prove the results for size bounded CEs and the claim of the present theorem follows by using \Cref{lem:ver-to-bounded}.
\end{proof}

\begin{lemma}
	\label{thm:el-ground-syn-size}
	For $\EL$ and $\ELbot$, deciding the existence of a syntactic CE with
	difference of size at most $n$ is \NP-complete.
\end{lemma}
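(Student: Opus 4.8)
The plan is to prove membership and hardness separately, relying throughout on the fact that instance checking and consistency are both decidable in polynomial time for $\EL$ and $\ELbot$.

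For membership in $\NP$, I would show that a witnessing CE can be guessed and verified in polynomial time. The key is to bound the size of a candidate. Since the CE is syntactic, $\qCom(\vec{c}), \qDif(\vec{c}) \subseteq \Amc$, and by the minimality condition \ref{itm:justification} the set $q(\vec{c})$ is $\subseteq$-minimal, so $|q(\vec{c})| \leq |\Amc|$. Identifying any two variables that receive the same fact- and foil-instantiation leaves only polynomially many variables (a variable is determined by a pair from $\NI(\Amc) \times (\NI(\Amc) \cup F)$), and fresh individuals $F$ can occur only in $\qDif(\vec{d})$---because $\qCom(\vec{d})$ and $\qDif(\vec{c})$ lie in $\Amc$---so at most $2n$ of them are needed. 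Hence one guesses $\qCom(\vec{x}), \qDif(\vec{x})$ with $|\qDif(\vec{d})| \leq n$, the vectors $\vec{c}, \vec{d}$, and a conflict set $\Cmc \subseteq \Amc$, and verifies \ref{itm:entailment}--\ref{itm:conflict}: the three entailments are $\EL/\ELbot$ instance checks, \ref{itm:justification} is checked by removing each assertion of $q(\vec{c})$ in turn, and \ref{itm:conflict} is checked for $\ELbot$ by polynomial-time consistency tests (for $\EL$ one simply has $\Cmc = \emptyset$).

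For hardness I would reduce from \textsc{Hitting Set}, adapting the justification-size construction of Baader et al. Given a hypergraph $G = (V, E)$ with $E = \{e_1, \dots, e_k\}$ and a bound $n$, introduce a concept name $P_v$ for each $v \in V$, concept names $Q_1, \dots, Q_k, C$, and set $\Tmc = \{P_v \sqsubseteq Q_i \mid v \in e_i\} \cup \{Q_1 \sqcap \dots \sqcap Q_k \sqsubseteq C\}$ and $\Amc = \{P_v(a) \mid v \in V\}$ for fresh individuals $a, b$; the CP is $\tup{\tup{\Tmc, \Amc}, C, a, b}$. Since $\Tmc$ contains no $\bot$, it is both an $\EL$ and an $\ELbot$ TBox, so a single reduction serves both cases. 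I then claim that $G$ has a hitting set of size $\leq n$ iff this CP admits a syntactic CE whose difference has size $\leq n$. For the forward direction, from a $\subseteq$-minimal hitting set $S$ of size $\leq n$ I build the CE with $\qCom = \emptyset$, $\qDif(x) = \{P_v(x) \mid v \in S\}$, $\vec{c} = \tup{a}$, $\vec{d} = \tup{b}$: entailment of $C(a)$ and $C(b)$ follows from $S$ hitting every edge, minimality of $S$ yields \ref{itm:justification}, and $\bot$-freeness of $\Tmc$ lets us take $\Cmc = \emptyset$. For the backward direction, the syntactic restriction is the essential lever: because $\Amc$ contains only the assertions $P_v(a)$, every atom of $\qDif$ instantiated at the fact must be some $P_v(a)$, so $\qDif$ consists purely of atoms $P_v(x)$; and since nothing in $\Amc$ mentions $b$ while $\qCom(\vec{d}) \subseteq \Amc$, each $Q_i(b)$ required to derive $C(b)$ must come from an assertion $P_v(b) \in \qDif(\vec{d})$ with $v \in e_i$. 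Thus $S = \{v \mid P_v(b) \in \qDif(\vec{d})\}$ is a hitting set with $|S| \leq |\qDif(\vec{d})| \leq n$.

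The main obstacle is the backward direction, where I must exclude ``shortcut'' explanations that would make the difference small without corresponding to a hitting set---most dangerously a difference built directly from $Q_i$ atoms, which would entail $C(b)$ using only the single axiom $Q_1 \sqcap \dots \sqcap Q_k \sqsubseteq C$. It is precisely the syntactic condition $\qDif(\vec{c}) \subseteq \Amc$ that forbids this, since $Q_i(a) \notin \Amc$; this is why the result is stated for syntactic CEs. I would still need to confirm that the minimality requirements \ref{itm:justification} and \ref{itm:conflict} cannot introduce a smaller difference and that fresh individuals in $\vec{d}$---never forced here, as the construction is $\bot$-free---do not interfere with the size accounting.
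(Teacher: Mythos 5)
Your proposal is correct and follows essentially the same route as the paper's proof: $\NP$-membership by guessing a polynomially bounded candidate and verifying it with polynomial-time $\EL$/$\ELbot$ reasoning (treating the conflict set separately for $\ELbot$), and $\NP$-hardness via the identical hitting-set reduction adapted from Baader et al., with the same TBox, ABox, and CP, and the same key observation that the syntactic condition restricts the pattern to $P_v$-atoms so every $Q_i(b)$ must be derived from some $P_v(b)$ in the difference. The only minor differences are presentational: you verify the minimality conditions of a guessed CE explicitly and bound the fresh individuals by $2n$, whereas the paper instead notes that any small candidate can be minimized into a genuine CE and computes the $\ELbot$ conflict set greedily.
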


\begin{proof}
	The membership is easy since one can guess ABox patterns $\qCom(\vec x)$, $\qDif(\vec x)$ with $|\qDif|\leq n$, $|\qCom|\leq |\calA|$ and vectors $\vec c$, $\vec d$, such that $\tup{\qCom(\vec x), \qDif(\vec x), \vec c, \vec d, \emptyset}$ is a valid CE.
	The verification requires polynomial time. This gives membership in $\NP$ for $\EL$.
	Notice that for $\ELbot$, an explanation might trigger inconsistency, however, the conflict set $\calC$ can still be computed in polynomial time since consistency for $\ELbot$ remains $\Ptime$-checkable.
	That is, check whether $\calT,\calA\cup  \qDif(\vec d)\models \bot$, if yes, we keep removing elements from $\calA$ to a set $\calC$ until the entailment becomes false.
	This gives the set $\calC$ in polynomial time. Hence, the overall complexity is in $\NP$ for $\ELbot$.

	For hardness in $\EL$, we reduce from the hitting set problem using a reduction inspired by Baader et al.~\cite[Thm.~3]{BaaderPS07a}.
	Their reduction yields a TBox with the aim to decide the existence of a justification of a given size.
	We extend it to add ABox assertions since in our case the size of a CE depends on the size of ABox.
	Let $G =(V,E)$ be a hypergraph with nodes $V$ and edges $E= \{e_1,\dots, e_k\}$ where $e_i\subseteq V$ for each $i\leq k$. Let $n\in \mathbb N$, the task is to determine whether there is a set $S\subseteq V$ such that $|S|\leq n$ and $S \cap e_i \neq \emptyset$ for each $i \leq k$.
	The set S is called a hitting set of $G$.

	In our reduction, we use the set of concept names $\{P_v \mid v\in V\} \cup \{C, Q_1,\dots,Q_k\}$.
	Given, $G=(V,E)$ with edges $E= \{e_1, \dots, e_k\}$, we let $\calK\dfn \tup{\calT,\calA}$ be a KB specified as follows:
	\begin{align*}
		\calT &= \{ P_{v}\subsum Q_i \mid \text{ if } v\in e_i \text{ for } i\leq k \} \\
		& \quad \cup \{ Q_1\sqcap \dots \sqcap Q_k \subsum C\},\\
		\calA &= \{P_{v}(a) \mid v\in V \},
	\end{align*}
	where $a$ is an individual. 
	Then, we let $b\neq a$ be a fresh individual and define $P= \tup{\calK, C, a,b}$ a CP.

	\begin{claim}\label{claim:card-q2-np}
		$G$ admits a hitting set $S$ with $|S|\leq n$ iff there are ABox patterns $\qCom\cup \qDif$ over $\calK$ with $|\qDif|\leq n$ and vectors $\vec c, \vec d$, such that $\tup{\qCom(\vec x),\qDif(\vec x),\vec c, \vec d, \emptyset}$ is a CE for $P$.
	\end{claim}
	\begin{claimproof}
		($\Rightarrow$)
		Suppose $S$ is a hitting set for $G$ of size $m\leq n$.
		We, consider the ABox assertions $\{P_{v}(a) \mid v\in S \}$ and their corresponding ABox pattern $\{P_{v}(x) \mid v\in S\}$.
		Moreover, we take
		\begin{itemize}
			\item $\qCom(x) = \emptyset$,
			\item $\qDif(x)=\{P_v(x)\mid v\in S\}$,
			\item $ \vec c = \tup{a}$, and
			\item $ \vec d = \tup{b}$.
		\end{itemize}
		Clearly, $|\qDif(x)|\leq n$ since $|S|\leq n$.
		We now prove that $\tup{\qCom(x), \qDif( x), \vec c, \vec d, \emptyset}$ is a valid contrastive explanation for $\tup{\calK, C, a,b}$.
		We let $q=\qCom\cup\qDif$.

		(I.) $\calT, q(\vec c)\models C(a)$ and  $\calT, q(\vec d)\models C(b)$. Since $S\cap e_i \neq \emptyset$ for each $i\leq k$, we can find ABox assertions $P_{v}(a) \in q(c)$ such that $v\in e_i$ and therefore $\calT, \{P_{v}(a)\} \models Q_i(a)$ for each $i\leq k$.
		Then, $\calT, q(c)\models Q_1\sqcap\dots\sqcap Q_k(a)$ and consequently, $\calT, q(\vec c)\models C(a)$ follows.
		The case for $\calT, q(\vec d)\models C(b)$ follows analogously.

		The remaining conditions:
		(II.) $q(\vec c) \subseteq \calA$, and
		(III.) $\qCom(\vec d) \subseteq \calA$ are also easy to observe, thus proving the claim in this direction.
		Observe that we do not have the check for minimality since the existence of a candidate CE with difference bounded by $n\in\mathbb N$ implies the existence of a CE with the same or a smaller difference set.

		($\Leftarrow$).
		Let $\tup{\qCom(\vec x), \qDif(\vec x), \vec c, \vec d, \emptyset}$ be an explanation for $\tup{\calK, C, a,b}$ with $|\qDif(\vec d)|\leq n$. 
		Then, we construct a hitting set $S$ for $G$ from $q(\vec d)$ in the following
		Notice first that, to achieve $\calT, q(\vec d)\models C(b)$ requires $\calT, q(\vec d)\models Q_i(b)$ for each $i\leq k$ which in turn requires $\{P_v(b), P_v \subsum Q_i\}$ such that $v\in e_i$.
		However, note that $P_v(b)\not\in \calA$ for any $v\in V$ and therefore $P_v(b)\in \qDif(\vec d)$ must be the case for the entailment to hold.
		Now, $|\qDif(\vec d)|\leq n$ implies that there are at most $n$ elements $v\in V$ such that $P_v(b)\in \qDif(\vec d)$.

		We let $S = \{v \mid P_v(b) \in \qDif(\vec d)\}$.
		Clearly, $|S|\leq n$.
		Then, $S$ is a hitting set since $\calT, q(\vec d)\models Q_i(b)$ for each $i\leq k$.
		Assume towards a contradiction that $S\cap e_j = \emptyset$ for some $j\leq k$.
		Now, we look at the entailment $\calT,  q(\vec d) \models Q_j(b)$.
		But this is only possible if $P_v(b)\in q(\vec d)$ for some $v\in e_j$.
		As a result, we get $v\in S$ due to the way $S$ is defined.
		This leads to a contradiction since $v\in S\cap e_j$.
		This completes the correctness of our claim.
	\end{claimproof}
	We conclude by observing that the reduction can be achieved in polynomial time.
\end{proof}

\begin{restatable}{lemma}{ThmALCGroundSynSize}
\label{thm:alc-ground-syn-size}
	For $\ALC$, deciding the existence of a syntactic or semantic CE with
	difference of size at most $n$ is \ExpTime-complete.
\end{restatable}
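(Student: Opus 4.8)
The plan is to establish membership and hardness separately, handling the syntactic and semantic cases together via materialization. For membership I would argue that the space of relevant CEs is exponentially bounded and that each candidate can be checked with an $\ALC$ entailment oracle, which runs in \ExpTime. For the syntactic case I would restrict attention to CEs without fresh individuals: exactly as in the argument for part~2) of \Cref{lem:dif-min}, fresh individuals never help to make the difference smaller, so every witness to a ``yes'' instance has a counterpart that embeds into the CE super-structure $E_m$ by \Cref{claim:embed}. Since $E_m$ is polynomial in size, its candidate sub-structures number at most exponentially many; for each I would run the minimization procedure \ref{p:make-consistent}--\ref{p:minimize-conflict} (which invokes the oracle only polynomially often) and check whether the resulting valid CE has $|\qDif(\vec d)|\le n$. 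As the homomorphism witnessing the embedding preserves the instantiated difference exactly, no candidate with difference at most $n$ is lost, and the whole procedure stays in \ExpTime. For the semantic case I would first apply \Cref{lem:materialized} to compute, with the oracle, a polynomially larger ABox $\calA_e$ so that semantic CEs for $P$ coincide with syntactic CEs for $P'=\tup{\tup{\calT,\calA_e},C,a,b}$; since this correspondence preserves the difference and hence its cardinality, the syntactic bound transfers verbatim.

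For hardness I would reduce from instance checking in $\ALC$, which is \ExpTime-complete, reusing the ``copy'' construction from \Cref{thm:alc-ground-syn-verification}. Given $\calK=\tup{\calT,\calA}$ and a query $A(a)$, I would set $\calT'=\calT\cup\{A\sqcap B\subsum C\}$ and add to $\calA$ a fresh individual $b$ duplicating every concept and role assertion of $a$, together with $A(b)$ and $B(b)$ (for fresh $B$, $C$); then $\calK\models A(a)$ iff $\calK'\setminus\{A(b)\}\models A(b)$, while $\calK'\models C(b)$ and $\calK'\not\models C(a)$. The CP is $P=\tup{\calK',C,b,a}$ with bound $n=1$. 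I would then show that $\calK\models A(a)$ iff $P$ has a (syntactic, hence also semantic) CE with difference of size at most $1$: if $A(a)$ is entailed, an ABox justification $J$ for it yields a commonality whose foil instantiation is $J\subseteq\calA'$ and a one-element difference $\{B(x)\}$, which together with $A\sqcap B\subsum C$ entails $C$ for both $b$ (using $B(b)$) and $a$ (using the supplied $B(a)$); conversely, if $A(a)$ is not entailed then neither $A(a)$ nor $B(a)$ is entailed of $a$ in $\calK'$, so forcing $C(a)$ requires both of them in $\qDif(\vec d)$, giving size at least $2$. Because no additional assertion about $a$ is entailed and flat ABox patterns cannot encode a collapsed complex concept, this also rules out a smaller \emph{semantic} difference, so the single reduction yields hardness for both variants.

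The step I expect to be the main obstacle is the $(\Leftarrow)$ direction of the hardness claim in combination with the semantic upper bound: I must verify that the copy construction introduces no entailment that would let a semantic CE discharge $A(a)$ or $B(a)$ through an entailed commonality atom, and that the materialization of \Cref{lem:materialized} does not silently change which difference cardinalities are achievable. Everything else---the bisimulation establishing $\calK\models A(a)\iff\calK'\setminus\{A(b)\}\models A(b)$, the polynomiality of $E_m$ and of $\calA_e$, and the \ExpTime cost of each oracle call---should be routine.
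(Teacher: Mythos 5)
Your syntactic hardness reduction and the broad membership strategy coincide with the paper's proof (the paper reuses exactly the same copy construction from \Cref{thm:alc-ground-syn-verification} and settles membership by enumerating all bounded candidate explanations in exponential time), but your proposal contains two genuine gaps.

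The concrete error is the claim that ``the single reduction yields hardness for both variants.'' For the \emph{semantic} case with the concept \emph{name} $C$ as query concept, the tuple $\tup{\emptyset,\,\{C(x)\},\,\tup{b},\,\tup{a},\,\emptyset}$ is always a valid semantic CE for $P=\tup{\calK',C,b,a}$ with difference of size $1$, independently of whether $\calK\models A(a)$: condition \ref{itm:fact} only requires $\calK'\models q(\vec c)$, and $C(b)$ \emph{is} entailed via $A(b),B(b)$ and $A\sqcap B\sqsubseteq C$, even though $C(b)\notin\calA'$. Your remark that ``flat ABox patterns cannot encode a collapsed complex concept'' does not exclude this pattern: $\{C(x)\}$ is a perfectly flat pattern over a concept name, and for semantic CEs its fact instantiation need only be entailed, not asserted. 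This is precisely why the paper emphasizes that for concept-name CPs a trivial semantic CE always exists, restricts the semantic hardness statements to CPs with \emph{complex} concepts, and in the $\EL$ analogue (\Cref{thm:el-sem-size}) replaces the concept name by the conjunction $Q_1\sqcap\dots\sqcap Q_k$; the corresponding fix here is to use $A\sqcap B$ as the query concept, which restores the dichotomy between a difference $\{B(a)\}$ of size $1$ and $\{A(a),B(a)\}$ of size $2$.

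The second gap is in your syntactic membership argument. You restrict the search to CEs without fresh individuals on the grounds that ``fresh individuals never help to make the difference smaller,'' citing the argument for part~2) of \Cref{lem:dif-min}. That argument does not transfer: in the verification setting one only has to absorb the fresh individuals of the \emph{given} CE into the KB, and any competitor whose difference is a \emph{proper subset} of a no-fresh difference is automatically no-fresh; in the cardinality existence problem the witnesses range over arbitrary CEs with arbitrary fresh individuals, so reducing the search space to sub-structures of $E_m$ needs a proof that neither the paper nor you supply. The claim may be true, but as stated it is an unjustified step. The paper sidesteps it by enumerating candidates \emph{including} fresh individuals: since fresh individuals of a syntactic CE can occur only in $\qDif(\vec d)$, a difference of size at most $n$ involves at most $2n$ of them (and one may assume $n\le|\calA|$, since otherwise the instance is trivially positive), so the candidate space stays exponentially bounded. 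Alternatively, your approach can be repaired by padding the ABox with $2n$ dummy individuals carrying assertions over a fresh concept name and then invoking \Cref{claim:embed} for the padded KB.
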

\begin{proof}
	The membership follows, since one can try all possible explanations with difference of size at most $n$ in exponential time.

	For hardness, we reuse the reduction from the proof of Theorem~\ref{thm:alc-ground-syn-verification}.
	Precisely, we use the same CP as there.
	The correctness also remains the same, except we prove the following claim instead.
	\begin{claim}
		$\calK\models A(a)$ iff $\tup{\calK', C,a,b}$ admits a syntactic explanations with difference of size one.
	\end{claim}
	The claim proof also follows using the same argument as before. This results in the stated completeness. \qedhere
\end{proof}

\ThmELGroundSemSize*

\begin{proof}
	The case for $\ALC$ is already covered by \Cref{thm:alc-ground-syn-size}.
	We complete the proof to this theorem by proving  the claim 
	for $\EL$ and $\ELbot$ in \Cref{thm:el-sem-size}. 
	The theorem statement follows by using \Cref{lem:ver-to-bounded}.
\end{proof}

\begin{lemma}
	\label{thm:el-sem-size}
	For CPs with complex concept, deciding the existence of a semantic CE with
	difference of size at most $n$ is \NP-complete for $\EL$ and $\ELbot$.
\end{lemma}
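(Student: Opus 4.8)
The plan is to obtain NP-membership by reduction to the syntactic case via materialization, and NP-hardness by reusing the hitting-set reduction behind \Cref{thm:el-ground-syn-size} with a complex target concept.

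For membership, I would appeal to \Cref{lem:materialized}. Since instance checking in $\EL$ and $\ELbot$ is in $\Ptime$, the entailment oracle it requires is itself polynomial, so the materialized ABox $\calA_e$ is computable in polynomial time. As \Cref{lem:materialized} preserves the CE tuple verbatim, a semantic CE for $P$ with $|\qDif|\leq n$ is exactly a syntactic CE with $|\qDif|\leq n$ for $P'=\tup{\tup{\calT,\calA_e},C,a,b}$. Hence the semantic bounded-existence problem reduces in polynomial time to its syntactic counterpart, which lies in $\NP$ by \Cref{thm:el-ground-syn-size}; composing a polynomial preprocessing step with an $\NP$ procedure stays in $\NP$. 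For $\ELbot$ this is unaffected, as consistency, and thus the computation of any needed conflict set, remains polynomial.

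For hardness I would reuse the reduction from \Cref{thm:el-ground-syn-size}, but take the complex concept $Q_1\sqcap\dots\sqcap Q_k$ as the target instead of the concept name $C$ (dropping the now-superfluous axiom $Q_1\sqcap\dots\sqcap Q_k\sqsubseteq C$). The detour through a complex concept is essential: were the target a concept name, a semantic CE could always use that name itself as a one-element difference, trivialising the problem. Given $G=(V,E)$ with $E=\{e_1,\dots,e_k\}$ and bound $n$, the TBox keeps $\{P_v\sqsubseteq Q_i\mid v\in e_i\}$, the ABox is $\{P_v(a)\mid v\in V\}$, and $b$ is fresh. Every syntactic CE is semantic, so the forward direction of Claim~\ref{claim:card-q2-np} transfers directly: a hitting set $S$ with $|S|\leq n$ yields the CE with $\qDif(\vec{d})=\{P_v(b)\mid v\in S\}$, each $Q_i(b)$ being entailed through some $P_v(b)$ with $v\in S\cap e_i$.

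The crux is the converse under the \emph{semantic} relaxation, where the difference could in principle exploit entailed assertions rather than explicit ABox facts. Two facts make this harmless. First, ABox patterns contain only atomic assertions over concept names, so the conjunction $Q_1\sqcap\dots\sqcap Q_k$ can never be handed over as a single assertion; it must be synthesised from the $P_v$ and $Q_i$ atoms through the TBox. Second, $b$ occurs in no assertion of $\calA$, so nothing relevant about $b$ is entailed and the whole difference at $b$ must be supplied explicitly. Together these give, for each $i\leq k$, that $\calT,\qDif(\vec d)\models Q_i(b)$ holds iff $Q_i(b)\in\qDif(\vec d)$ or $P_v(b)\in\qDif(\vec d)$ for some $v\in e_i$. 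From a semantic CE with $|\qDif(\vec d)|\leq n$ I would then read off $S=\{v\mid P_v(b)\in\qDif(\vec d)\}\cup\{v_i\in e_i\mid Q_i(b)\in\qDif(\vec d)\}$, picking one $v_i$ per asserted $Q_i(b)$; this set has size at most $n$ and meets every edge, hence is a hitting set of size at most $n$. As the construction is polynomial and uses only $\EL$ axioms, the lower bound holds for both $\EL$ and $\ELbot$, completing the proof.
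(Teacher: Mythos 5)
Your proposal is correct and takes essentially the same route as the paper's own proof: membership via materialization (\Cref{lem:materialized}) composed with the syntactic $\NP$ bound of \Cref{thm:el-ground-syn-size}, and hardness by re-running the hitting-set reduction with target concept $Q_1\sqcap\dots\sqcap Q_k$, using the same two observations (patterns contain only atomic assertions, and nothing about the fresh individual $b$ is entailed) to conclude that each $Q_i(b)$ must come from an explicit $Q_i(b)$ or some $P_v(b)$ with $v\in e_i$, from which the hitting set is read off exactly as in the paper. The only cosmetic difference is that you explicitly drop the axiom $Q_1\sqcap\dots\sqcap Q_k\sqsubseteq C$, which the paper leaves implicit and which does not affect correctness either way.
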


\begin{proof}
	The membership follows due to \Cref{lem:materialized} and \Cref{thm:el-ground-syn-size}.
	
	For hardness in $\EL$, we reuse the reduction from the proof of \Cref{thm:el-ground-syn-size} where the concept name $C$ in the CP $P$ is replaced by the complex concept $Q_1\sqcap\dots\sqcap Q_k$.
	Now, we reprove Claim~\ref{claim:card-q2-np} considering semantic CEs.
	Notice that a syntactic explanation is also a semantic explanation.
	Therefore, every hitting set for $G$ of size at most $n$ results in an explanation with $|\qDif|\leq n$ (direction ``$\Longrightarrow$'' in Claim~\ref{claim:card-q2-np}).
	Conversely, we prove that if the CP constructed in the proof of \Cref{thm:el-ground-syn-size} admits a semantic explanation with $|\qDif|\leq n$, then $G$ has a hitting set $S$ with $|S|\leq n$.
	Suppose $E= \tup{\qCom(\vec x), \qDif(\vec x), \vec c, \vec d,\emptyset}$ be a CE.
	Since $\calA$ does not include any assertion involving $b$, every assertion involving $b$ must be included in $\qDif(\vec d)$.
	Observe that, for each $i\leq k$, the entailment $\calT, q(\vec d) \models Q_i(b)$ is true iff either $Q_i(b)\in q(\vec d)$, or $P_v(b)\in q(\vec d)$ for some $v\in e_i$.
	This holds since a CE can not contain complex patterns (e.g., $(Q_1\sqcap Q_2)(x)$ or $(P_v\sqcap P_{v'})(x)$.
	As a result, for each $i\leq k$, $\qDif(\vec d)$ contains either $Q_i(b)$ or $P_v(b)$ where $b\in e_i$.
	Moreover, $Q_i(b)\not\in \qCom(\vec d)$ as $\calT,\calA \models \qCom(\vec d)$ by definition.
	Therefore, $\qDif(\vec d)$ has the form $ \{Q_j (b) \mid j\in k\} \cup \{P_v(b) \mid v\in V\}$.
	By assumption that $E$ is a CE, $\calT, \qDif(\vec d)\models Q_i(b)$ for each $i\leq k$.
	We let $S=\{v \mid v\in e_j \text{ if } Q_j(b) \in \qDif(\vec d)\} \cup \{v \mid P_v(b)\in \qDif(\vec d) \} $.
	Then $|S|\leq n$ as $|\qDif(\vec d)|\leq n$.
	$S$ is a hitting set, since (1) $S\cap e_j \neq \emptyset $ for each $j$ with $Q_j\in \qDif$ and (2) $S\cap e_j \neq \emptyset $ as $\calT, q(\vec d)\models Q_i(b)$ if $Q_i\not\in \qDif$.
	This proves the claim and completes the proof to the theorem.
\end{proof}

\section{Conflict-Minimal Explanations}
We group theorem proofs for readability and distinguish the case with and without fresh individuals.
\subsection{The case with fresh individuals}

\AbductionReduction*
\begin{proof}
  The main text only contains the construction for \ALCI. For \ELbot, we have to use a slightly different construction: $\Tmc'$ is then obtained from $\Tmc$ by replacing $\bot$ everywhere
  by $A_\bot$, and by adding the axioms $A_\bot\sqcap B_\bot\sqsubseteq\bot$ and
  $\exists r.A_\bot\sqsubseteq A_\bot$ for every role name occurring in the input. We also have to change the concept in the CP, since we cannot use disjunctions in \ELbot. Instead of using $C\sqcup A_\bot$ as concept, we introduce a fresh concept
  name $A_C$, for which we add the GCIs $C\sqsubseteq A_C$ and $A_\bot\sqsubseteq A_C$.

  To keep the following simpler, we focus in the following on the case for \ALCI --- the proof
  for \ELbot is very similar (see footnotes).

 We first prove our two claims (I) and (II) from the main text hold for any ABox $\Amc'$.
 \begin{itemize}
  \item[(I)] To see that $\Tmc',\Amc'\not\models\bot$, let $\Imc$ be a model of $\Amc'$,
  and extend it to a model $\Jmc$ of $\Tmc'$ by setting $A_\bot^\Jmc=\Delta^\Imc$. This
  way, all CIs are satisfied, so that $\Jmc\models\Tmc',\Amc'$.
  \item[(II)] Assume $\Tmc',\Amc'\models A_\bot(b)$, and assume that $\tup{\Tmc,\Amc'}$ is consistent.
  There is then a model $\Imc$ of $\tup{\Tmc,\Amc'}$, and since $A_\bot$ does not occur in either $\Tmc'$
  or $\Amc'$, we can assume that $A_\bot^\Imc=\emptyset$.
  Since $\Imc$ is a model of $\Tmc$,
  in which for every GCI $C\sqsubseteq D\in\Tmc$, $C^\Imc\subseteq D^\Imc$, which means that
  also, $\Imc\models C\sqsubseteq D\sqcup A_\bot$. Moreover, since $A_\bot^\Imc=\emptyset$,
  also $(\exists r.A_\bot)^\Imc=\emptyset$ and $(\exists r^-.A_\bot)^\Imc$, which implies that
  $\Imc\models\exists r.A_\bot\sqsubseteq A_\bot$ and $\Imc\models\exists r^-.A_\bot$.
  We obtain that $\Imc$ is a model of $\tup{\Tmc',\Amc'}$
  with $(A_\bot)^\Imc=\emptyset$, which contradicts our assumption that
  $\Tmc,\Amc'\models A_\bot(b)$.

 For the other direction, assume $\tup{\Tmc,\Amc'}\models\bot$. We already observed in (I) that
 $\tup{\Tmc',\Amc'}$ is consistent, and thus has a model $\Imc$. Let $\Imc$ be any such model.
 We may assume wlog. that every domain element in $\Imc$ is connected to some element of the ABox
 via a chain of role-successors, where those chains may involve
 role names and their inverses. This is wlog. since the GCIs in \Tmc' can refer to other elements
 only via role restrictions. Furthermore, we must have $A_\bot\neq\emptyset$, since otherwise
 $\Imc$ would also be a model of $\Tmc,\Amc'$, which would contradict our initial assumption.
 Let $d\in(A_\bot)^\Imc$. Because $\Imc\models A_\bot\sqsubseteq\forall r.A_\bot$,
 every $r$-successor of $d$ satisfies $A_\bot$, and because
 $\Imc\models\exists r.A_\bot\sqsubseteq A_\bot$, every $r$-predecessor of $d$ satisfies
 $A_\bot$. By applying this argument inductively, we obtain that every element that is
 connected to $d$ is an instance of $A_\bot$, including, by our assumption on $\Imc$,
 some element $b^\Imc$ where $a$ occcurs in the ABox.
 \footnote{
 Note that in the case of \ELbot, the argument is much simpler: the only way to produce
 a contradiction is by explicitly using $\bot$ in the TBox, which in $\Tmc'$ would translate to some individual satisfying $A_\bot$. The axioms $\exists r.A_\bot\sqsubseteq A_\bot$ then propagate this concept name back towards some individual in the ABox.
 }
 From this argument it follows that
 for every model $\Imc$ of $\tup{\Tmc',\Amc'}$, we have
 $\Imc\models A_\bot(b)$ for some $b\in\NI$. It remains to show that
 also $\Imc\models A_\bot(b)$ for the \emph{same} $b\in\NI$ for every model $\Imc$ of $\Tmc',\Amc'$.
 \footnote{
 Again, the situation is easier in \ELbot, where we do not have to argue over different models,
 since \ELbot has the canonical model property, e.g. there exists a single model that can be
 embedded into any other model, and this includes the instance of $A_\bot$.
 }

 Assume we have another model $\Jmc$ s.t. $\Jmc\not\models A_\bot(b)$. There must then be another
 individual $c$ s.t. $\Jmc\models A_\bot(c)$. Moreover, $b^\Imc$ and $c^\Imc$ are not connected
 to each other in $\Jmc$, since otherwise $\Jmc\models\ A_\bot(b)$. This also means that $b$ and
 $c$ are not connected to each other in $\Amc$. Because of this, we can construct a model
 $\Jmc'$ of $\tup{\Tmc',\Amc'}$ based on $\Imc$ and $\Jmc'$ s.t.
 $\Jmc'\not\models A_\bot(b)$ and $\Jmc'\not\models A_\bot(c)$. By applying this argument repeatedly,
 we would be able to build a model in which no individual satisfies $A_\bot$, which we have
 already proven to be impossible. It follows that there exists some $b\in\NI$ s.t. for all
 models $\Imc$ of $\Tmc',\Amc'$, we have $\Imc\models A_\bot(b)$.
 \end{itemize}

 To show that the CP constructed in the main text is indeed a valid CP,
 we need to prove that for $\Kmc=\tup{\Tmc',\Amc}$, we have $\Kmc\models C\sqcup A_\bot(a)$
and $\Kmc\not\models C\sqcup A_\bot(b)$. The former follows since $A_\bot(a)\in\Amc$,
and the latter follows from the fact that the individual name $b$ only occurs in one
concept assertion that is not connected to the rest of the ABox, and the earlier
observation that $\Kmc\not\models\bot$.

We show that every subset-minimal hypothesis $\Hmc$ for $\mathfrak{A}$ can be transformed in
polynomial time into a syntactic solution for the CP with empty conflict set.
Let $\Hmc$ be such a hypothesis.
We construct the solution $E=\tup{\qCom(\vec{x}), \qDif(\vec{x}), \vec{c}, \vec{d},\emptyset}$ as follows:
\begin{itemize}
\item $\vec{x}$ contains a variable
$x_c$ for every individual name $c$ occurring in $\Hmc$;
\item $\qCom(\vec{x})=\emptyset$;
\item $\qDif(\vec{x})$ contains all assertions from $\Hmc$ with individual names $c$ replaced by the corresponding variable $x_c$;
\item $\vec{c}$ is a sequence of $a$'s; and
\item $\vec{d}$ associates each variable $x_c$ with the corresponding individual name $c$.
\end{itemize}

\begin{claim}
	\emph{$\Tmc',\qDif(\vec{c})\models C\sqcup A_\bot(a)$.}
\end{claim}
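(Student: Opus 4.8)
The plan is to prove the claim by contradiction, using the defining property of the hypothesis $\Hmc$, namely $\Tmc\cup\Hmc\models C(b)$. First I would unfold $\qDif(\vec{c})$: since $\vec{c}$ consists only of occurrences of $a$, the set $\qDif(\vec{c})$ is exactly $\Hmc$ with every individual name collapsed to $a$ (so each $r(c,c')\in\Hmc$ becomes $r(a,a)$); write $\Hmc_a$ for this set. Note that $\Hmc_a$ uses only names from $\Sigma$ and hence never mentions the fresh names $A_\bot$ or $B_\bot$. Suppose towards a contradiction that $\Tmc',\Hmc_a\not\models(C\sqcup A_\bot)(a)$, so there is a model $\Imc$ of $\Tmc'\cup\Hmc_a$ with $a^\Imc\notin C^\Imc$ and $a^\Imc\notin A_\bot^\Imc$.

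The key structural observation concerns the propagation axioms for $A_\bot$. Because $\Tmc'$ contains $A_\bot\sqsubseteq\forall r.A_\bot$ and $\exists r.A_\bot\sqsubseteq A_\bot$ for every role $r$, the set $A_\bot^\Imc$ is closed under $r$-successors and under $r$-predecessors simultaneously; hence it is a union of connected components of the undirected graph induced by all roles of $\Imc$. Since $a^\Imc\notin A_\bot^\Imc$, the entire connected component $K$ of $a^\Imc$ is disjoint from $A_\bot^\Imc$. I would then take $\Jmc_0$ to be the restriction of $\Imc$ to $K$. As $K$ is closed under role-successors and role-predecessors, a routine induction on concept structure shows that membership in every $\ALCI$-concept is preserved between $\Imc$ and $\Jmc_0$ for elements of $K$. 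Consequently $a^\Imc\notin C^{\Jmc_0}$, and moreover $\Jmc_0\models\Hmc_a$, since $a^\Imc\in K$ and all assertions of $\Hmc_a$ concern $a$. Crucially $A_\bot^{\Jmc_0}=\emptyset$, so each weakened inclusion $C\sqsubseteq D\sqcup A_\bot$ of $\Tmc'$ collapses on $K$ to the original $C\sqsubseteq D$ of $\Tmc$; therefore $\Jmc_0$ is in fact a model of $\Tmc$.

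Finally I would turn $\Jmc_0$ into a model $\Jmc$ of $\Tmc\cup\Hmc$ by keeping the domain and the interpretation of all concept and role names, but interpreting every individual name occurring in $\Hmc$ (and in particular $b$) as $a^\Imc$. Since the TBox mentions no individuals, $\Jmc\models\Tmc$; and since $\Hmc_a$ is the collapse of $\Hmc$ onto $a$, from $\Jmc_0\models\Hmc_a$ we get $\Jmc\models\Hmc$. But then $b^\Jmc=a^\Imc$, and because $C$ contains no individual names we have $C^\Jmc=C^{\Jmc_0}$, whence $b^\Jmc\notin C^\Jmc$. This contradicts $\Tmc\cup\Hmc\models C(b)$, and the claim follows.

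The main obstacle I anticipate is the restriction step: one must verify carefully that cutting $\Imc$ down to the connected component $K$ preserves the truth of arbitrary $\ALCI$-concepts, which requires $K$ to be closed under both directions of every role and a correspondingly symmetric treatment of $r$ and $r^-$ in the induction. Everything else --- the collapse of the weakened GCIs once $A_\bot$ is empty on $K$, and the re-interpretation of individual names that transports $\Hmc_a$ back to $\Hmc$ --- is routine. (For $\ELbot$ the analogous claim is easier, since $\ELbot$ enjoys the canonical-model property and monotonicity, so one can argue directly on a single canonical model rather than over all models.)
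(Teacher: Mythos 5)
Your proof is correct, and its skeleton matches the paper's: assume a countermodel $\Imc$ of $\Tmc'\cup\qDif(\vec{c})$, use the propagation axioms $\exists r.A_\bot\sqsubseteq A_\bot$ and $A_\bot\sqsubseteq\forall r.A_\bot$ together with connectedness to $a^\Imc$ to force $A_\bot$ to be empty (so the weakened GCIs collapse back to $\Tmc$), and then turn the result into a model of $\Tmc\cup\Hmc$ falsifying $C(b)$, contradicting that $\Hmc$ is a hypothesis. Where you genuinely diverge is in the last step. The paper \emph{clones} $a^\Imc$: it adds a fresh domain element $c^\Jmc$ for every individual name $c\in\NI(\Hmc)$ and duplicates all edges incident to $a^\Imc$, which implicitly requires a bisimulation-invariance argument to conclude that each copy satisfies exactly the same complex $\ALCI$ concepts as $a^\Imc$ (the paper compresses this into ``for the same reason''). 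You instead \emph{collapse}: you keep the domain and all concept/role extensions of the restricted model untouched and merely re-interpret every individual name of $\Hmc$ (including $b$) as $a^\Imc$. Since $\ALCI$ has no nominals and the paper's semantics does not adopt the unique name assumption, this is legitimate, and it buys you a cleaner argument --- concept extensions are literally unchanged, so no bisimulation reasoning is needed; the price is that your construction would break under UNA, whereas the paper's cloned model (with pairwise distinct copies) survives it. Two small points of care: your restriction step is made explicit (the paper hides it in a ``w.l.o.g.\ connected'' assumption), which is a virtue; but your phrase ``the undirected graph induced by \emph{all} roles of $\Imc$'' is slightly too strong, since $\Tmc'$ only contains propagation axioms for roles occurring in $\Tmc$, so $A_\bot^\Imc$ is only guaranteed closed under those roles --- an imprecision you share with the paper's own proof, and which is harmless under the implicit assumption that the roles of $\Sigma$ and $C$ occur in $\Tmc$ (or that the propagation axioms are added for all roles in the input).
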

\begin{claimproof}
Assume $\Tmc',\qDif(\vec{c})\not\models C\sqcup A_\bot(a)$.
Then, there is a model $\Imc$ of $\Tmc',\qDif(\vec{c})$ s.t. $a^\Imc\not\in (C\sqcup A_\bot)^\Imc$.
We may furthermore assume that every domain element in $\Imc$ is connected to $a^\Imc$ via role successors and predecessors. Since $a^\Imc\not\in A_\bot^\Imc$, due to the axioms $\exists r.A_\bot\sqsubseteq A_\bot$ and $A_\bot\sqsubseteq \forall r.A_\bot$, we then must have $A_\bot^\Imc=\emptyset$.

We transform $\Imc$ into a model $\Jmc$ of $\Tmc\cup\Hmc$:
\begin{itemize}
 \item $\Delta^\Jmc=\Delta^\Imc\cup\{c^\Jmc\mid c\in\NI(\Hmc)\}$
 \item for all $A\in\NC$: $A^\Jmc= A^\Imc\cup\{c^\Jmc\mid a^\Imc\in A^\Imc, c\in\NI(\Hmc)\}$
 \item for all $r\in\NR$:

 $r^\Jmc=r^\Imc\cup\{(c^\Jmc,d)\mid (a^\Imc,d)\in r^\Imc, c\in\NI(\Hmc)\}$

                                       $\cup\{(d,c^\Jmc)\mid (d,a^\Imc)\in r^\Imc, c\in\NI(\Hmc)\}$

                                       $\cup\{(c_1^\Jmc,c_2^\Jmc)\mid (a^\Imc,a^\Imc)\in r^\Imc, c_1,c_2\in\NI(\Hmc)\}$
\end{itemize}
From the fact that $A_\bot^\Imc=\emptyset$ and $\Imc\models\Tmc'$, we directly obtain $\Jmc\models\Tmc$. Furthermore, for every $A(c)$/$r(c,d)\in\Hmc$, we have $A(a)$/$r(a,a)\in \qDif(\vec{c})$, so that the construction also ensures that $\Jmc\models\Hmc$.
For the same reason, since $a^\Imc\not\in C^\Imc$, we also have $b^\Jmc\not\in C^\Jmc$, which contradicts that $\Hmc$ was a hypothesis for the abduction problem to begin with.
As a conclusion, we obtain that our initial assumption was wrong, and that $\Kmc\models C\sqcup A_\bot(a)$.
\end{claimproof}
It follows now from the construction, the proof in the previous claim, and the fact that $\Hmc$ is a hypothesis that $E$ is a syntactic contrastive explanation, namely that for $q(\vec{x})=\qCom(\vec{x})\cup \qDif(\vec{x})$
\begin{enumerate}
	\item $\Tmc',q(\vec{c})\models C\sqcup A_\bot(a)$ (Claim~1) and $\Tmc',q(\vec{d})\models C(b)$ (because $\Tmc\cup\Hmc\models C(a)$);
	\item $q(\vec{c})\subseteq\Amc$ (by construction),
	\item $\qCom(\vec{d})\subseteq\Amc$ (trivially, since $\qCom(\vec{d})=\emptyset$),
	\item $q(\vec{c})$ is minimal (because $\Hmc$ is)
	\item $\Kmc,\qDif(\vec{d})\not\models\bot$ (since $\Tmc\cup\Hmc\not\models\bot$).
\end{enumerate}

It remains to show that also
every syntactic CE for the CP can be transformed in polynomial time into a subset-minimal hypothesis for $\mathfrak{A}$.

Let $\tup{\qCom(\vec{x}),\qDif(\vec{x}),\vec{c},\vec{d},\emptyset}$ be a syntactic CE for the CP. We show that $\Hmc=\qDif(\vec{d})$ is a hypothesis for
$\mathfrak{A}$. By construction, $q(\vec{c})\subseteq\Amc$ and the fact that $a$ is not connected to $b$, we obtain that $q(\vec{c})$ contains only concept and role names from $\Sigma$, so that $\sig(\Hmc)\subseteq\Sigma$ (\emph{first condition for hypotheses}). Because the CE is conflict free, $\Kmc,\Hmc\not\models\bot$. Furthermore, $B_\bot(b)\in\Amc$ by construction, which together with the TBox axioms ensures
that no individual in $\Hmc$ that is connected to $b$ satisfies $A_\bot$. It follows that $\Tmc',\Hmc\not\models A_\bot(c)$ for any individual name in $\Hmc$, so that by our property \textbf{(II)}, $\Tmc,\Hmc\not\models\bot$ (\emph{second condition for hypotheses}).
We already observed that $\Tmc,\Hmc\not\models A_\bot(b)$. Because of our CP however, $\Tmc',\Hmc\models C\sqcup A_\bot(b)$, so that we obtain $\Tmc',\Hmc\models C(b)$. It is an easy consequence that $\Tmc,\Hmc\models C(b)$ (\emph{third and last condition for hypotheses}).
\end{proof}

\ThmConflictFreeSize*
\begin{proof}
The lower bounds follow from \Cref{lem:abduction-reduction} and the corresponding result
for flat signature-based ABox abduction~\cite{Koopmann21a}. We thus only need to show the upper bound.

 Let $P=\tup{\Kmc,C,a,b}$ be the CP, where $\Kmc=\tup{\Tmc,\Amc}$, and $E=\tup{\qCom(\vec{x}),\qDif(\vec{x}),\vec{c},\vec{d},\Cmc}$ be a syntactic CE for it, where $\Cmc$ is subset or cardinality minimal. We are going to transform $E$ so that the length of $\vec{x}$ is bounded by an exponential on the size of $P$, while the conflict set $\Cmc$ remains the same.

 Most importantly, our construction has to preserve~\ref{itm:conflict}.
 Let $\Imc$ be a model of $\tup{\Tmc,(\Amc\setminus\Cmc)\cup q(\vec{d})}$, where $q$ is again the union of $\qCom$ and $\qDif$,
 and let $\Sub$ contain the set of (sub-) concepts occurring in $\Kmc$ and $C$. We assign to every $d\in\Delta^\Imc$ its \emph{type} defined as $\tp_\Imc(d)=c$ if $d=c^\Imc$ with $c$ an individual that occurs in $\Kmc$ or $P$, and otherwise as
 $$\tp_\Imc(d)=\{C\in\Sub\mid d\in C^\Imc\}.$$

 We use these types to construct an exponentially bounded CE. The new vector $\vec{x'}$ contains a variable $x_{c,t}$ for every individual $c$ occurring in $\Amc$ and every type $t$ occurring in the range of $\tp_\Imc$. This ensures that the length of $\vec{x'}$ is exponentially bounded. The new vectors $\vec{c'}$ and $\vec{d'}$ are constructed as follows: $\vec{c'}$ assigns to each variable $x_{c,t}$ in $\vec{x}$ the individual name $c$, and $\vec{d'}$ assigns to each variable $x_{c,t}$ the individual name $d_t$, which is the individual $d$ if $t=d$ is an individual name, and otherwise a fresh individual unique to the type $t$.
 The new ABox pattern $q'(\vec{x})$ contains an assertion $A(x_{c,t})$ for every assertion $A(x)\in q(\vec{x})$ s.t. $x$ is mapped in $\vec{c}$ to $c$  and in $\vec{d}$ to some $d$ s.t. $\tp_\Imc(d^\Imc)=t$.
 Our construction ensures that $q'(\vec{c'})$ and $q(\vec{c})$ contain the same atoms, since
 every atom $A(x_{c,t})\in q'(\vec{x'})$ corresponds to an atom $A(c)\in q(\vec{x})$.
  $\qCom'(\vec{x'})$ and $\qDif'(\vec{x'})$ are now constructed from $q'(\vec{x'})$ in the obvious way. The following conditions for syntactic CEs are easy to establish:
 \begin{enumerate}
  \item $\Tmc,q'(\vec{c'})\models C(a)$, because $q(\vec{c})\subseteq q'(\vec{c'})$ and
  $\Tmc,q(\vec{c})\models C(a)$ (first part of \ref{itm:entailment});
  \item $q'(\vec{c'})\subseteq\Amc$ (implies \ref{itm:fact});
  \item $\qCom'(\vec{d'})\subseteq\Amc$ (implies \ref{itm:foil});
  \item $q'(\vec{c'})$ is subset minimal, because $q(\vec{c})$ is (implies~\ref{itm:justification}).
 \end{enumerate}
 To see that also the second part of~\ref{itm:entailment} is satisfied, $\Tmc,q'(\vec{d'})\models C(b)$, we observe that our construction ensures that  there is a homomorphism from $q(\vec{d})$ into $q'(\vec{d'})$ which also maps $b$ again into $b$.
 The remaining item is~\ref{itm:conflict}, namely that $$\Tmc,(\Amc\setminus\Cmc)\cup q'(\vec{d'})\not\models\bot.$$

 Recall that our types were extracted from a particular model $\Imc$ of $\tup{\Tmc,(\Amc\setminus\Cmc)\cup q(\vec{d})}$.
 We construct a new interpretation $\Jmc$ based on $\Imc$ as follows:
 \begin{itemize}
  \item $\Delta^\Jmc=\{\tp_\Imc(d)\mid d\in\Delta^\Imc\}$
  \item $c^\Jmc=c^\Imc$ for all individuals $c$ occurring in $\Kmc$ or the CP,
  \item $d_{t}^\Jmc=t$ for all types $t\in\Delta^\Jmc$ (recall that we defined $d_t$ as a individual name based on $t$ above),
  \item $A^\Jmc=\{\tp_\Imc(d)\mid d\in A^\Imc\}$ for all $A\in\NC$,
  \item $r^\Jmc=\{\tup{\tp_\Imc(d),\tp(e)}\mid \tup{d,e}\in r^\Imc\}$.
 \end{itemize}
 Because all individuals from $\Amc$ are interpreted in the same way in $\Jmc$ and $\Imc$, we have $\Jmc\models\Amc$.
 Moreover, our construction preserves the types of all individuals, so that also $\Jmc\models\Tmc$.
 Finally, our construction of $\vec{d'}$ ensures that also $\Jmc\models q'(\vec{d'})$. As a result, $\Jmc$ is a model of $\tup{\Tmc,(\Amc\setminus\Cmc)\cup q(\vec{d})}$, and also~\ref{itm:conflict} is satisfied. We obtain that $E'$ is a CE, and since $\Cmc$ was
 not changed and was already minimal, $E'$ is conflict-minimal as well.
\end{proof}

\ThmConflictComplexityUpper*
\begin{proof}
  The lower bounds follow again from~\Cref{lem:abduction-reduction} and the corresponding results
for flat signature-based ABox abduction~\cite{Koopmann21a}. We thus only need to show the upper bound.
We do this by providing a decision procedure.

  Let $\tup{\Kmc,C,a,b}$ be a CP, where $\Kmc=\tup{\Tmc,\Amc}$.
  All complexity bounds are above \ExpTime, so that we can iterate over all
  the possible candidates for the conflict set $\Cmc$, which has to be a conflict set that is
  smaller than the conflict set in the CE to be verified (whether wrt. $\subseteq$ or wrt. $\leq$
  does not make a difference here).
  For simplicity, we thus focus on the problem of deciding whether there exists a syntactic CE for the given CP with a fixed conflict set $\Cmc$.
  \patrick{Did we prove the result for the complementary problem? $\Cmc$ is not minimal if there exists
  a smaller conflict set---we decide whether a CP with fixed conflict set exists!}

  We make use of the type construction used also for \Cref{the:conflict-free-size-upper}. Recall that we defined $\Sub$ to be the set of all subconcepts occuring in the CP, plus the concept. We call a subset $t\subseteq\Sub$ a \emph{valid type} if it satisfies:
  \begin{itemize}
   \item for all $\neg C\in\Sub$, $\neg C\in t$ iff $C\not\in t$,
   \item for all $C\sqcap D\in\Sub$, $C\sqcap D\in t$ iff $C,D\in t$,
   \item for all $C\sqsubseteq D\in\Tmc$, if $C\in t$, then $D\in t$,
  \end{itemize}
  Starting from the set $\mathbf{T}_0$ of all valid types, we compute the set $\mathbf{T}$ of \emph{possible types} by step-wise removing types s.t. $\exists R.C\in t$ and there is no type $t'$ in the current set s.t.
  \begin{itemize}
  \item $C\in t'$,
  \item for all $\exists R.D\in\Sub\setminus t$, $D\not \in t'$, and
  \item for all $\exists R^-.D\in\Sub\setminus t'$, $D\not\in t$.
  \end{itemize}
  This computes $\mathbf{T}$ in deterministic exponential time.

   We then non-deterministically assign to every individual name $c$ in the CP a possible type $\tp(c)$ so that the following is satisfied:
  \begin{enumerate}
   \item If $A(c)\in\Amc\setminus\Cmc$, then $A\in\tp(c)$,
   \item If $r(c,d)\in\Amc\setminus\Cmc$ and $\exists r.C\in\Sub\setminus t$, then $C\not\in\tp(d)$,
   \item If $r(c,d)\in\Amc\setminus\Cmc$, $\exists r.C\in\Sub$ and $C\in\tp(d)$, then $\exists r.C\in \tp(c)$.
  \end{enumerate}
  Note that there are exponentially many choices possible for this, since the number of individual names is polynomially bounded and the set of types exponentially.
  In addition, to every possible type $t$, we introduce a fresh individual name $c_t$ and set $\tp(c_t)=t$.
  We now collect the set $\Pmf$ of pairs $\tup{c,d}$ where $c$ is an individual from the input and $d$ is an individual from the input or one of the introduced individual names.

  From this set of pairs, we now construct a CE similarly to the way we did in the proof for \Cref{lem:dif-min}. Specifically,
  we define the vector $\vec{x}$ to contain one variable $x_{c,d}$ for every pair $\tup{c,d}\in\Pmf$. $\vec{c}$ assigns to every such variable $x_{c,d}$ in $\vec{x}$ the individual name $c$, and $\vec{d}$ assigns to it $d$. The combined pattern $q(\vec{x})$ contains an assertion $A(x_{c,d})$ if $A(c)\in\Amc$ and $A\in \tp(d)$. It contains an assertion $r(x_{c_1,d_1},x_{c_2,d_2})$ if
  \begin{enumerate}
  \item  $r(c_1,c_2)\in\Amc$,
  \item for every $\exists r.D\in\Sub$, if $D\in \tp(d_2)$ then $\exists r.D\in\tp(d_1)$, and if $\exists r.D\not\in\tp(d_1)$, then $D\not\in\tp(d_2)$, and
  \item for every $\exists r^-.D\in\Sub$, if $D\in\tp(d_1)$ then $\exists r^-.D\in\tp(d_2)$, and if $\exists r^-.D\not\in\tp(d_2)$, then $D\not\in\tp(d_1)$.
  \end{enumerate}
  The ABox pattern $q(\vec{x})$ is split into its two components $\qCom(\vec{x})$ and $\qDif(\vec{x})$
  in the obvious way so that \ref{itm:fact} and \ref{itm:foil} are satisfied. Condition \ref{itm:justification} is not
  important in the context of the procedure, since it is always possible to minimize $q(\vec{d})$ accordingly.

  We call every structure $\tup{\qCom(\vec{x}),\qDif(\vec{x}),\vec{c},\vec{d},\Cmc}$ that can be obtained in this way
  (i.e. via the non-deterministic choice of the function $\tp$) a \emph{CE-candidate}, and observe that there are exponentially many
  of such candidates, which can be deterministically iterated in exponential time.
  We can show that every CE-candidate satisfies \ref{itm:conflict}. For this, we need to construct a model for $\tup{\Tmc,(\Amc\setminus\Cmc)\cup q(\vec{d})}$. For this, we introduce a domain element for each individual name used in our construction, and follow the type assignments by $\tp$ to define the extension of concept and role names, similar as we did in the proof
  for \Cref{the:conflict-free-size-upper}.

  We next observe that if a syntactic CE with conflict set~$\Cmc$ exists, then at least one CE candidate satisfies
  \ref{itm:entailment}. To see this, we observe that the CE that is constructed in the proof for \Cref{the:conflict-free-size-upper} would be a component-wise subset of at least one CE-candidate (modulo renaming of the fresh individual names).

  To complete our decision procedure, we thus only need to add a test to verify whether one of the CE candidates satisfies
  \ref{itm:entailment}, namely that $\Tmc\cup q(\vec{c})\models C(a)$ and $\Tmc\cup q(\vec{d})\models C(b)$. In the case of \ELbot, this can be decided in exponential time (polynomial in the size of $q(\vec{d})$), which establishes the \ExpTime upper bound for this DL. In the case of \ALCI, we have to be a bit more clever. In particular, we have to determine whether there exists no model $\Imc$ of $\Tmc\cup q(\vec{d})$ s.t. $\Imc\not\models C(b)$. We describe a non-deterministic procedure to decide whether such a model exists, thus establishing our $\coNExpTime$-upper bound. The procedure guesses an alternative assignment of our possible types to $\vec{d}$ that is compatible with the assertions in $q(\vec{d})$, and verifies that the type assigned to $b$ does not contain $C$. If this is successful, we can use this assignment to construct a model $\Imc$ that witnesses $\Tmc\cup q(\vec{d})\not\models C(b)$. To see that this method is complete, assume there exists a model $\Imc$
  s.t. $\Imc\not\models C(b)$. We can then extract an assignment of possible types to the individual names in $\vec{d}$ to it. We obtain that our for verifying \Cref{itm:entailment} is sound and complete, and thus established the \coNExpTime upper bound for the case of \ALCI.
\end{proof}


\subsection{The case without fresh individuals}

\ThmNoFreshEL*

\begin{proof}
	We prove this theorem in two parts: Lemma~\ref{lem:NoFreshEL} proves the claim for $\ELbot$, and Lemma~\ref{lem:NoFreshALC} for $\ALC$ and $\ALCI$.
	The statement for $\ALC$ and $\ALCI$ also relies on \Cref{lem:ver-to-bounded}.
\end{proof}

\begin{lemma}\label{lem:NoFreshEL}
	Deciding whether a given syntactic 
	explanation without fresh individuals is (subset or cardinality) conflict-minimal is $\co\NP$-complete for $\ELbot$.
\end{lemma}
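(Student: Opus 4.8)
The plan is to prove the two directions separately. For \co\NP-\emph{membership} (for both the subset and the cardinality version) I show that the complement---deciding that a given syntactic $E=\tup{\qCom(\vec{x}),\qDif(\vec{x}),\vec{c},\vec{d},\calC}$ without fresh individuals is \emph{not} conflict-minimal---lies in \NP. The crucial point is that, because $E$ uses no fresh individuals, a competing CE can be taken polynomially bounded: every variable may be identified with a pair of individuals from $\calA$ (exactly as in the super-structure $E_m$ of \Cref{claim:embed}), and since a syntactic CE satisfies $\qCom(\vec{c}),\qDif(\vec{c}),\qCom(\vec{d})\subseteq\calA$ and $\calC\subseteq\calA$, all components have size at most $|\NI(\calA)|^2$. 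Hence I can guess an entire candidate CE $E'=\tup{\qCom'(\vec{x}'),\qDif'(\vec{x}'),\vec{c}',\vec{d}',\calC'}$ together with $\calC'\subsetneq\calC$ (subset case) or $|\calC'|<|\calC|$ (cardinality case), and verify \ref{itm:entailment}--\ref{itm:conflict} in polynomial time: instance checking for \ref{itm:entailment}, pattern entailment for \ref{itm:fact} and \ref{itm:foil}, $\subseteq$-minimality of $q'(\vec{c}')$ for \ref{itm:justification}, and consistency plus $\subseteq$-minimality for \ref{itm:conflict} all reduce to polynomially many \ELbot instance-checking and consistency queries, each solvable in polynomial time. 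This gives \NP for the complement, hence \co\NP for both versions.

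For \emph{hardness}, the obstacle to keep in mind is that a minimal conflict set for a \emph{fixed} difference is merely a subset-minimal ABox repair, which is tractable in \ELbot; all hardness must therefore be injected through the combinatorial choice of the difference (and foil evidence), each individual choice being polynomially checkable. For the cardinality version I would reduce from \textsc{Vertex Cover} to the bounded-conflict existence problem, which by \Cref{lem:ver-to-bounded} has complexity complementary to cardinality conflict-minimality verification. Given $G=(V,\mathcal{E})$, take a fresh concept $F$ with $F\sqsubseteq C$, assertions $F(a)$ and $\{P_v(b)\mid v\in V\}$, and for every edge $\{u,v\}\in\mathcal{E}$ the axiom $F\sqcap P_u\sqcap P_v\sqsubseteq\bot$. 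Then $\calK$ is consistent, $\calK\models C(a)$ and $\calK\not\models C(b)$, and the only way to entail $C(b)$ is to add $F(b)$ as difference; the induced minimal conflict sets are precisely the (minimal) vertex covers of $G$, written over the $P_v(b)$. Thus a CE with conflict of size $\le n$ exists iff $G$ has a vertex cover of size $\le n$, giving \NP-hardness of bounded-conflict existence and hence \co\NP-hardness of cardinality verification.

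The main obstacle is the subset version, since in the vertex-cover gadget the conflict of $E$ is already a minimal vertex cover and no proper subset restores consistency, so that instance is vacuously subset-minimal. To make subset-minimality hard I plan a reduction from \textsc{Sat} with two competing mechanisms for entailing $C(b)$. Mechanism~A uses a trigger $F$ (with $F\sqsubseteq C$) that conflicts \emph{independently} with a whole block $Z_1(b),\dots,Z_m(b)$ of foil facts (axioms $F\sqcap Z_i\sqsubseteq\bot$), so its unique minimal conflict is the entire block $\{Z_1(b),\dots,Z_m(b)\}$; this is the conflict of the given CE $E$. Mechanism~B encodes, through the choice of difference and foil evidence, a truth assignment: per variable it adds markers that (via \ELbot conjunctions) flag the falsified literals, and a clause being \emph{violated}---all its literals false, a conjunction that \ELbot can detect and drive to $\bot$---can only be repaired by additionally deleting the distinguished fact $Z_m(b)$. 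I arrange that a \emph{satisfying} assignment lets Mechanism~B entail $C(b)$ with a conflict that omits $Z_m(b)$, hence a strict (and strictly smaller) subset of $E$'s conflict, whereas a violating assignment forces the full block back in. Then $E$ is neither subset- nor cardinality-conflict-minimal iff $\varphi$ is satisfiable, so this single construction settles both versions. The delicate points---ruling out spurious small conflicts arising from yet other differences, keeping the difference $\subseteq$-minimal and syntactic as required by \ref{itm:justification}, and detecting clause violations using only conjunction and $\bot$ (never a forbidden disjunction)---are where the real work lies; the guiding structural insight is that the hardness stems from the exponentially many difference choices, each verifiable in polynomial time, rather than from any single \ELbot entailment.
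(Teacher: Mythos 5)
Your $\co\NP$-membership argument is essentially the paper's: guess a competing CE $E'$ with a smaller conflict set and verify it, using the fact that without fresh individuals every component of $E'$ is polynomially bounded (variables correspond to pairs of individuals of $\calA$) and that conditions \ref{itm:entailment}--\ref{itm:conflict} reduce to polynomially many \ELbot entailment and consistency checks. Your cardinality-hardness via \textsc{Vertex Cover} and \Cref{lem:ver-to-bounded} also looks sound --- in your gadget the only syntactic difference available is $\{F(x)\}$ instantiated to $F(b)$, so the conflict sets of CEs are exactly the minimal vertex covers written over the assertions $P_v(b)$ --- although the paper reaches this conclusion by a different route.

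The genuine gap is the subset case, which you only sketch. The ``delicate points'' you defer --- how an assignment is encoded through the choice of difference, how a violated clause is detected using only $\sqcap$ and $\bot$, how spurious small conflicts from other differences are excluded, and how \ref{itm:justification} is preserved --- are precisely the substance of the proof, and no concrete gadget is exhibited, so the subset statement for \ELbot remains unproven. Moreover, your two-mechanism block construction overlooks the simplification that makes all of it unnecessary: choose the \emph{given} CE so that its conflict set is a singleton. Then any strict subset of its conflict is empty, so ``not subset-conflict-minimal'' and ``not cardinality-conflict-minimal'' both collapse to ``some CE has an empty conflict set'', and a single reduction settles both versions simultaneously (no separate vertex-cover detour needed either). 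This is exactly what the paper does: it reduces from propositional satisfiability, with ABox $\{A_\ell(a)\mid \ell \text{ a literal over } X\}\cup\{V(b),P_t(a),Q_t(a),N_t(b)\}$, TBox axioms $A_x\sqcap A_{\bar x}\sqcap V\subsum\bot$, $A_\ell\subsum A_c$ for $\ell\in c$, $A_{c_1}\sqcap\dots\sqcap A_{c_m}\subsum C$, $P_t\sqcap Q_t\subsum C$ and $Q_t\sqcap N_t\subsum\bot$, and the given CE $E=\tup{\emptyset,\{P_t(x),Q_t(x)\},\tup{a},\tup{b},\{N_t(b)\}}$. A conflict-free CE exists iff some consistent choice of literal assertions entails every clause concept, i.e.\ iff $\varphi$ is satisfiable (the axiom $A_x\sqcap A_{\bar x}\sqcap V\subsum\bot$ makes invalid assignments conflict with $V(b)$); hence $E$ is both subset- and cardinality-conflict-minimal iff $\varphi$ is unsatisfiable, which yields both hardness results at once.
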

\begin{proof}
	For membership, given a CE $E$, one can guess an explanation $E'$ as a counter example with smaller conflict set.
	The verification that $E'$ is a valid CE can be performed in polynomial time.
	
	For hardness, we reduce from the complement of propositional satisfiability.
	Let $\varphi = \{c_1,\dots,c_n\}$ be a CNF formula over variables $X =\{x_1,\dots,x_n\}$, where each $c_i$ is a clause.
	A literals $\ell$ is a variable $x$ or its negation $\neg x$.
	For a literal $\ell$, we denote by $\bar\ell$ its ``opposite'' literal.
	For set $X$ of variables, we let $\Lit(X)$ denote the collection of literals over $X$.
	Let $\calK = \tup{\calT,\calA}$ be a KB with concept names
	$\{A_x, A_{\bar x} \mid x\in X\} \cup \{A_c \mid c \in\varphi\} \cup \{C, P_t, Q_t, N_t, V \}$,
	where
	\begin{align*}
		\calT = &\; \{A_x \sqcap A_{\bar x} \sqcap V \subsum \bot \mid x\in X\} \\
		& \cup \{A_\ell \subsum A_c \mid \ell \in c, c\in\varphi\} \\
		& \cup \{\bigsqcap_{c\in \varphi}\nolimits A_c \subsum C \} \\
		& \cup \{P_t\sqcap Q_t\subsum C,Q_t\sqcap N_t\subsum \bot \}, \\
		\calA = & \;\{A_\ell(a) \mid \ell\in\Lit(X)\} \\
		& \cup \{V(b), P_t(a), Q_t(a), N_t(b)\}.
	\end{align*}
	Now, we let $P =\tup{\calK, C, a, b}$.
	We have $\calK\models C(a)$ and $\calK\not\models C(b)$ (the only axioms in $\calK$ involving $b$, namely $V(b)$ and $N_t(b)$
	cannot entail $C(b)$).
	Finally, let $E=\tup{\emptyset, \{P_t(x), Q_t(x)\}, \tup{a}, \tup{b}, \{N_t(b)\}}$.
	Then, $E$ is a valid CE since 
	(1) $\calT, \{P_t(a), Q_t(a)\}  \models C(a)$, $\calT, \{P_t(b), Q_t(b)\}  \models C(b)$
	(2) $\{P_t(a), Q_t(a)\}\subseteq \calA$ , and
	(3) $\{N_t(b)\}$ is subset-minimal, such that $\calT, \calA\setminus \{N_t(b)\},  \{P_t(b), Q_t(b)\} \not \models \bot$.
	The correctness of our reduction is established by proving the following claim.
\begin{claim}
	$\varphi$ is satisfiable iff $E$ is not a (subset) conflict-minimal CE for $P$.
\end{claim}
\begin{claimproof}
	($\Rightarrow$)
	Suppose $\varphi$ is satisfiable and let $\theta\subseteq \Lit(X)$ be a satisfying assignment, seen as a collection of literals.
	We let $\qDif\dfn \{A_\ell(x) \mid \ell \in \theta\}$ be an ABox pattern.
	Now, we set $E'=\tup{\emptyset, \qDif(x), \tup{a}, \tup{b}, \emptyset}$.
	It is easy to observe that $E'$ is a valid CE for $P$.
	Indeed, there is  $\ell\in \Lit(X)$ for each clause $c\in\varphi$ s.t. $\ell\in \theta\cap c$.
	As a result, $\calT, \qDif(a)\models A_c(a)$ for each $c\in\varphi$ and hence $\calT, \qDif(a)\models C(a)$.
	Likewise, we have $\calT, \qDif(b)\models C(b)$.
	Finally, $\qDif(a)\subseteq \calA$ since $\theta\subseteq \Lit(X)$. 
	Observe that no conflicts are triggered due to the axiom $A_x\sqcap A_{\bar x}\sqcap V\subsum \bot$, since $\theta$ is a valid assignment and hence either $x\in\theta$ or $\bar x\in\theta$ for each $x\in X$.
	That is, either $A_x(b)\in \qDif(b)$ or $A_{\bar x}(b)\in \qDif(b)$ but not both, for any $x\in X$.
	We conclude the proof in this direction by observing that $E$ is not conflict minimal CE for $P$ since $E'$ is a valid CE with empty conflict set.

	($\Leftarrow$)
	Suppose $E$ is not a (subset) conflict-minimal CE for $P$. We prove that $\varphi$ is satisfiable.
	Suppose to the contrary that $\varphi$ is not satisfiable.
	That is, for each assignment $\theta\subseteq \Lit(X)$, there is a clause $c\in \varphi$ such that $\theta\cap c=\emptyset$.
	Since $E$ contains a single assertion in its conflict set, let us observe whether we can obtain an explanation without any conflicts.
	Observe that for any $q(\vec c)$ the only way to achieve the entailment
	$\calT, q(\vec c)\models C(a)$ without triggering conflicts is via literals $A_\ell(a)$ since $q(\vec c)\subseteq \calK$ must be true.
	Moreover, $q(\vec c)$ must consider at most one assertion from the set $\{A_x(a), A_{\bar x}(a)\}$ since otherwise $\{A_x(b), A_{\bar x}(b)\}\subseteq q(\vec d)$ would imply a conflict $\{V(b)\}\subseteq \calA$ due to $A_x\sqcap A_{\bar x}\sqcap V\subsum \bot$.
	But this implies that there is no explanation with empty conflict set, since $\varphi$ is not satisfiable.
	In other words, it is not possible to use a valid assignment (and hence without triggering the conflict $\{V(b)\}$) and entail $C(a)$ since for each such valid assignment $\theta$, there exists at least one clause $c\in\varphi$ that is not satisfied by $\theta$ (and hence $A_c(a)$ is not entailed by its corresponding ABox).
	This completes the proof of our claim
\end{claimproof}
	Observe that the claim also applies to the case of cardinality-minimal CEs since $E$ only contains a conflict of size one.
	We conclude by observing that the reduction can be obtained in polynomial time.
\end{proof}

\begin{lemma}\label{lem:NoFreshALC}
	Deciding the existence of a syntactic CE without fresh individuals and with conflict
	of size at most $n$ is $\ExpTime$-complete for $\ALC$.
\end{lemma}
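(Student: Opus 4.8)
The plan is to treat membership and hardness separately. For membership, I will exploit that forbidding fresh individuals collapses the search space to something polynomial in \emph{shape}. By \Cref{claim:embed}, every syntactic CE for the CP without fresh individuals embeds into the super-structure $E_m$, whose combined pattern $q_m$ has only polynomially many atoms and whose fact/foil evidence ranges over pairs of named individuals. Hence there are at most exponentially many candidate patterns (sub-patterns of $q_m$, each inducing its $\vec c$ and $\vec d$), and at most exponentially many candidate conflict sets $\Cmc\subseteq\Amc$ with $|\Cmc|\le n$. Iterating over all such candidates and verifying \ref{itm:entailment}--\ref{itm:conflict} with an oracle for $\ALC$ entailment and consistency (both in \ExpTime) stays in \ExpTime, since $2^{\mathrm{poly}}\cdot 2^{\mathrm{poly}}=2^{\mathrm{poly}}$; a CE with conflict $\le n$ exists iff some candidate passes, and minimizing $q(\vec c)$ (\ref{itm:justification}) and $\Cmc$ (\ref{itm:conflict}) afterwards costs nothing, as any valid candidate can be shrunk within the enumerated family.

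For hardness I reduce from instance checking in $\ALC$, which is \ExpTime-complete, reusing the copy/route idea of \Cref{thm:alc-ground-syn-verification} but tying the answer to the \emph{conflict} instead of the difference. Given $\calK=\tup{\calT,\calA}$ and a query $A_0(a_0)$, I introduce fresh names $G,P,Q,N$, a fresh role $r$, and fresh individuals $a$ (fact) and $b$ (foil), and set $\calT'=\calT\cup\{\exists r.A_0\subsum G,\ P\sqcap Q\subsum G,\ Q\sqcap N\subsum\bot\}$ and $\calA'=\calA\cup\{P(a),Q(a),r(a,a_0),N(b)\}$. The CP is $P^*=\tup{\tup{\calT',\calA'},G,a,b}$: the fact $a$ always satisfies $G$ via $P\sqcap Q$, the foil $b$ never does (its only assertion is $N(b)$, and $G,P,Q,r$ are fresh), and $\calK'$ is consistent, so $P^*$ is a valid CP. The gadget offers two ways to entail $G$: an \emph{expensive route} through $P\sqcap Q$, which for the foil forces $Q(b)$ and hence a clash with $N(b)$ via $Q\sqcap N\subsum\bot$; and a \emph{clean route} through $\exists r.A_0$, which is conflict-free but fires only when some $r$-successor can be proved to lie in $A_0$. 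I take the threshold $n=0$ and claim $\calK\models A_0(a_0)$ iff $P^*$ admits a conflict-free syntactic CE; hardness of this special case suffices, and the reduction is clearly polynomial.

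For the backward direction, if $\calK\models A_0(a_0)$ I build the conflict-free CE from a minimal ABox justification $J\subseteq\Amc$ of $A_0(a_0)$: the commonality is the pattern of $J$, instantiated identically over $a_0$ (and the other individuals of $J$) for both $\vec c$ and $\vec d$, hence inside $\Amc'$; the difference is the single edge $r(x_0,y)$, realized as $r(a,a_0)\in\Amc'$ at the fact and as the missing $r(b,a_0)$ at the foil. Then $\Tmc'\cup q(\vec c)\models G(a)$ and $\Tmc'\cup q(\vec d)\models G(b)$ both hold through the clean route, and adding $r(b,a_0)$ triggers no $\bot$, so $\Cmc=\emptyset$. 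For the forward direction, assume a conflict-free CE. Since the expensive route would place $Q$ on the root variable and thus $Q(b)$ in $q(\vec d)$, contradicting consistency with $N(b)\in\Amc'$ under \ref{itm:conflict} with $\Cmc=\emptyset$, the fact must obtain $G$ through the clean route; because $r$ is fresh (so no model is compelled to give $a$ further $r$-successors, and $r$-edges arise only from the pattern) and the only $r$-edge out of $a$ in $\Amc'$ is $r(a,a_0)$, syntacticness ($q(\vec c)\subseteq\Amc'$) forces $\Tmc'\cup q(\vec c)\models\exists r.A_0(a)$ with $a_0$ as the witnessing successor, whence $\calK'\models A_0(a_0)$ and, as the added symbols are fresh, $\calK\models A_0(a_0)$.

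The main obstacle, and the step I would handle most carefully, is this forward direction: ruling out that the difference \enquote{cheats} the clean route without the entailment actually holding. The leverage is that syntactic CEs force $q(\vec c)\subseteq\Amc'$, so whatever the difference uses at the foil must already occur, at the fact, inside $\Amc'$; together with $r$ being fresh this pins the fact's witnessing successor to $a_0$ and reduces \enquote{$G(a)$ via the clean route} to the genuine entailment $\Tmc'\cup q(\vec c)\models A_0(a_0)$ over $\Amc'$. Making the last inference rigorous requires the standard model-surgery argument: in an arbitrary model $\Imc$ of $\calK'$, delete all $r$-edges out of $a$ except $r(a,a_0)$, which preserves modelhood since $r$ occurs only on the left of a GCI (and $G(a)$ survives through $P,Q$) and still satisfies $q(\vec c)$; then $\Tmc'\cup q(\vec c)\models\exists r.A_0(a)$ yields $a_0\in A_0^{\Imc}$, so $\calK'\models A_0(a_0)$. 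The companion check that the fresh vocabulary does not affect entailment of $A_0(a_0)$ is routine (models of $\calK$ extend to models of $\calK'$ and conversely restrict back), and membership together with the backward direction are comparatively mechanical.
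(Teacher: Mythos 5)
Your membership argument is fine and is essentially the paper's: enumerate the (exponentially many) candidates that embed into the super-structure $E_m$ together with candidate conflict sets, and check each with an entailment oracle. The genuine gap is in the hardness reduction: its forward direction (\emph{conflict-free CE exists} $\Rightarrow$ $\calK\models A_0(a_0)$) is false. The flaw is twofold. First, the two entailments in \ref{itm:entailment} need not be produced by the \emph{same} atoms of the pattern, so blocking the expensive route at the foil does not block it at the fact: the variables carrying $P,Q$ can be mapped to $a$ by \emph{both} $\vec c$ and $\vec d$, avoiding any clash with $N(b)$. Second, the foil evidence may map \emph{distinct} fact individuals to the \emph{same} foil individual, merging assertions about different individuals of $\calA$ and manufacturing entailments that do not hold in $\calK$; your model surgery only prunes $r$-edges and does not touch this. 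Concretely, take $\calT=\{B_1\sqcap B_2\subsum A_0\}$ and $\calA=\{B_1(a_0),B_2(e)\}$, so $\calK\not\models A_0(a_0)$, and form your $\calT'$, $\calA'=\calA\cup\{P(a),Q(a),r(a,a_0),N(b)\}$. Consider the pattern with $\qCom=\{P(x_1),Q(x_2),B_1(y_1)\}$ and $\qDif=\{r(x_0,y),B_2(y_2)\}$, fact evidence $x_0,x_1,x_2\mapsto a$, $y,y_1\mapsto a_0$, $y_2\mapsto e$, foil evidence $x_1,x_2\mapsto a$, $x_0\mapsto b$, $y,y_1,y_2\mapsto a_0$, and $\Cmc=\emptyset$. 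Then $q(\vec c)=\{P(a),Q(a),B_1(a_0),r(a,a_0),B_2(e)\}\subseteq\calA'$, while $q(\vec d)=\{P(a),Q(a),B_1(a_0),r(b,a_0),B_2(a_0)\}$: the fact gets $G(a)$ via $P\sqcap Q\subsum G$, and the foil gets $G(b)$ because the \emph{merged} atoms $B_1(a_0),B_2(a_0)$ entail $A_0(a_0)$, whence $\exists r.A_0(b)$. Conditions \ref{itm:fact} and \ref{itm:foil} hold by inspection; \ref{itm:justification} holds because every atom is needed for one of the two entailments (dropping $P(x_1)$ or $Q(x_2)$ kills the fact side, since $\calT\cup\{B_1(a_0),B_2(e)\}\not\models A_0(a_0)$; dropping any other atom kills the foil side); and \ref{itm:conflict} holds with $\Cmc=\emptyset$ since $\calT'\cup\calA'\cup q(\vec d)$ is consistent ($P,Q$ hold only at $a$, $N$ only at $b$). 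So a conflict-free syntactic CE without fresh individuals exists although $\calK\not\models A_0(a_0)$, refuting your claimed equivalence.

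The paper's reduction sidesteps exactly this trap by making the pattern rigid. It sets $\calT'=\calT\cup\{A\sqcap C\subsum\bot\}$ and $\calA'=\calA\cup\{C(b)\}$ with $C$ and $b$ fresh, and uses the CP $\tup{\calK',C,b,a}$ with \emph{fact} $b$ and \emph{foil} $a$. Since $C$ is fresh and occurs in $\calA'$ only in $C(b)$, and since \ref{itm:conflict} forces $\calT'\cup q(\vec d)$ to be consistent (ruling out entailing $C$ via inconsistency), every syntactic CE must have pattern exactly $\{C(x)\}$ with $q(\vec c)=\{C(b)\}$ and $q(\vec d)=\{C(a)\}$ --- there is nothing left to scramble --- and an empty conflict set is then possible iff $\calT'\cup\calA'\cup\{C(a)\}$ is consistent iff $\calK\not\models A(a)$. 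This reduces the \emph{complement} of instance checking, which suffices because $\EXP$ is closed under complement. If you want to salvage your two-route gadget, you would have to enforce that the foil instantiation is injective on the individuals relevant to the clean route, but this is precisely what the definition of a CE does not let you enforce.
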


\begin{proof}
	The membership follows, since one can try all possible explanations (that trigger an inconsistency) such that the corresponding conflict set $\calC$ has size at most $n$ in exponential time.

	For hardness, we consider the following reduction from instance checking for $\ALC$.
	Let $\calK= \tup{\calT,\calA}$ be a KB and $A(a)$ be an instance query.
	We let $\calK'=\tup{\calT',\calA'}$ be another KB where $\calT'= \calT\cup \{A\sqcap C\subsum \bot\}$ and $\calA'= \calA\cup \{C(b)\}$ for a fresh concept name $C$ and individual $b$.
	Then, we let $P= \tup{\calK', C,b,a}$ be our CP.
	Clearly, $\calK'\models C(b)$ (trivially) and $\calK'\not\models C(a)$ since $C(a)\not\in \calA'$ and no axiom in $\calA$ involves the concept names $C$ or individual $b$.
	Now, we prove the following claim.
	\begin{claim}
		$\calK\not \models A(a)$ iff $P$ admits a syntactic explanations in $\calK'$ with empty conflict set.
	\end{claim}
	\begin{claimproof}
		Observe that the only syntactic CE for $P$ is of the form $E\dfn \tup{\emptyset, \{C(x)\}, \tup{b}, \tup{a}, \calC}$ for some conflict set $\calC$.
		This holds since $C$ does not appear in $\calK$. Hence neither $C(b)$  nor $C(a)$ can be entailed from $\calK$.
		Then, $\calK\models A(a)$ iff $\calK'\models A(a)$.
		Consequently, $E$ triggers an inconsistency due to the axiom $A\sqcap C\subsum \bot$.
		Conversely, the only possibility when an inconsistency arises by $E$ is when $\calK'\models A(a)$.
		Therefore, we have that $\calK\not\models A(a)$ iff $E$ is a CE for $P$ with $\calC=\emptyset$ iff $P$ admits a CE with empty conflict set (i.e., $|\calC|=0$).
	\end{claimproof}
	This completes the proof to our theorem.
\end{proof}

\section{Commonality-Maximal Explanations}

Regarding commonality-maximal explanations, we only have results in the case of cardinality-maximality.

\ThmELGroundSynSimSize*

\begin{proof}
	Once again, we prove this theorem in two parts: Lemma~\ref{lem:commonalityEL} for $\EL/\ELbot$ and Lemma~\ref{lem:commonalityALC} for $\ALC$.
	As before, we rely on \Cref{lem:ver-to-bounded}.
\end{proof}

\begin{lemma}\label{lem:commonalityEL}
	For $\EL/\ELbot$, deciding the existence of a syntactic CE with commonality of size at least $n$ is $\NP$-complete.
\end{lemma}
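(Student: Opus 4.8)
The plan is to show membership in $\NP$ by a guess-and-verify argument and $\NP$-hardness by a reduction from \textsc{Clique}. For membership I would guess a syntactic CE $\tup{\qCom(\vec x),\qDif(\vec x),\vec c,\vec d,\Cmc}$ together with the claim $|\qCom(\vec d)|\ge n$ and verify it in polynomial time. The certificate is polynomial: identifying variables that receive the same pair of fact/foil instantiations, the pattern has only polynomially many atoms; the fact part satisfies $q(\vec c)\subseteq\Amc$ (so $\vec c$ uses only input individuals); and by the tree-model property of $\EL$/$\ELbot$ only polynomially many fresh witnesses are ever needed for $\Tmc,q(\vec d)\models C(b)$. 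Since entailment and consistency are decidable in polynomial time for $\EL$ and $\ELbot$, conditions \ref{itm:entailment}--\ref{itm:foil} are checked directly, while the minimality conditions \ref{itm:justification} and \ref{itm:conflict} are checked by polynomially many such tests (for $\ELbot$ a minimal conflict set is extracted by removing assertions one at a time and re-testing consistency, exactly as in the membership argument of \Cref{thm:el-ground-syn-size}). This places the problem in $\NP$.

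For hardness I would reduce from \textsc{Clique}, exploiting that the commonality of a CE is precisely the part of the fact's justification that survives re-instantiation by the foil, i.e. the structure shared by the neighbourhoods of $a$ and $b$. Given $G=(V,E)$ and $k$, I place on the fact side a complete template: individuals $h_1,\dots,h_k$ with a private marker $H_i(h_i)$, a star $r(a,h_i)$, all template edges $e(h_i,h_j)$ for $i<j$, and a trigger $S(a)$. On the foil side I encode $G$: individuals $g_v$ for $v\in V$ carrying \emph{every} marker $H_1,\dots,H_k$, a star $r(b,g_v)$, and $e(g_u,g_v)$ iff $\{u,v\}\in E$; crucially $b$ has no $S$-assertion. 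The TBox collects edges into the goal concept via $\exists r.(H_i\sqcap\exists e.H_j)\sqsubseteq Q_{i,j}$ and $S\sqsubseteq Q_0$, with $C=Q_0\sqcap\bigsqcap_{i<j}Q_{i,j}$. Then $\Kmc\models C(a)$ while $\Kmc\not\models C(b)$ (the foil lacks $S$, hence $Q_0$). Because each marker is private to one $h_i$, the unique minimal justification of $C(a)$ is forced to be the whole template together with $S(a)$; a CE is obtained by choosing, for each $h_i$, a foil vertex $g_{\phi(i)}$, and its commonality equals the shared star and marker atoms (always $2k$) plus one edge atom per pair $i<j$ with $e(g_{\phi(i)},g_{\phi(j)})\in\Amc$. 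Since $G$ has no self-loops, collapsing two $h_i$'s onto one foil vertex loses an edge, so the edge contribution reaches its maximum $\binom{k}{2}$ exactly when $\phi$ is injective onto a clique. Setting $n=2k+\binom{k}{2}$, a CE with commonality at least $n$ exists iff $G$ has a $k$-clique. The construction uses no $\bot$, so the same reduction works for $\EL$ and, with empty conflict set, for $\ELbot$.

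The step I expect to be the main obstacle is the converse direction of correctness: I must argue that \emph{any} CE with commonality at least $n$, not only the intended ones, induces a foil mapping realising all $\binom{k}{2}$ template edges in $G$. This requires showing that the forced shape of the minimal justification (guaranteed by the private markers, in the spirit of Baader et al.~\cite[Thm.~3]{BaaderPS07a}) leaves no way to inflate the commonality except by sharing genuine edges of $G$, and that the absence of self-loops indeed forces the selected $k$ foil vertices to be distinct and pairwise adjacent. The remaining conditions (minimality of the justification, validity of the split into $\qCom$ and $\qDif$, and the fact that the difference always contains $S(b)$ so that the foil is never already an instance of $C$) are routine to verify.
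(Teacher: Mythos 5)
Your $\NP$-membership argument is essentially the paper's: guess a polynomially bounded candidate (possible because $q(\vec c)\subseteq\Amc$ and variables with identical fact/foil instantiations can be merged) and verify conditions \ref{itm:entailment}--\ref{itm:conflict} with polynomially many $\EL$/$\ELbot$ entailment tests, extracting the conflict set greedily for $\ELbot$. (The appeal to the tree-model property is not quite the right tool for bounding fresh foil individuals --- what works is renaming each fresh individual to the fact-side individual of its variable, which preserves all conditions --- but the conclusion is correct.)

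The hardness reduction, however, has a genuine gap, located exactly at the step you flagged as the main obstacle, and the private-marker argument cannot close it. The point is that \ref{itm:justification} constrains only the \emph{instantiated} set $q(\vec c)$, not the pattern $q(\vec x)$: several pattern variables may be instantiated to the same individual by $\vec c$ (the paper's super-structure $E_m$ relies on exactly this), so forcing $q(\vec c)$ to equal your unique minimal justification $J$ does not force the pattern to contain one variable per template individual $h_i$. This lets the foil-side commonality be inflated without any clique in $G$. Concretely, let $G$ be the empty graph on six vertices and $k=3$, so the threshold is $n=2k+\binom{k}{2}=9$ and $G$ has no $3$-clique (indeed no edge). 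Take the pattern with variables $z$ and $x_i^1,x_i^2$ for $i\le 3$, and atoms $S(z)$, $r(z,x_i^p)$, $H_i(x_i^p)$ for all $i,p$, plus $e(x_1^1,x_2^1)$, $e(x_1^1,x_3^1)$, $e(x_2^1,x_3^1)$. Let $\vec c$ send $z\mapsto a$ and both copies $x_i^1,x_i^2\mapsto h_i$; then $q(\vec c)=J$, so \ref{itm:entailment}, \ref{itm:fact} and \ref{itm:justification} hold. Let $\vec d$ send $z\mapsto b$ and the six copies to six \emph{distinct} graph individuals. Then $\Tmc,q(\vec d)\models C(b)$ still holds --- each $Q_{i,j}(b)$ is derived through the three edge atoms, which go into the difference together with $S(z)$ --- while all twelve atoms of the form $r(b,g_v)$ and $H_i(g_v)$ lie in $\Amc$, because every foil vertex is an $r$-successor of $b$ and carries every marker; hence they all lie in the commonality and $|\qCom(\vec d)|=12\ge 9$. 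So your construction admits a CE above the threshold on a no-instance of Clique. The universality of the markers $H_1,\dots,H_k$ on the foil side --- which you need for the completeness direction --- is precisely what makes this inflation possible, so the gap is not a missing verification but a defect of the construction.

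For comparison, the paper also reduces from Clique, but with a construction that keeps fact and foil symmetric with respect to the graph: both $a$ and $b$ have $s$-edges to \emph{all} vertices, every vertex carries every marker, the graph edges serve both sides, and the only asymmetry is the single assertion $D(a)$. The intended CE there instantiates fact and foil evidence with the \emph{same} graph vertices, the difference is just $\{D(x)\}$, and the counting argument is anchored to $q(\vec c)$ --- which \ref{itm:justification} does constrain --- rather than to foil-side assertions that the ABox grants for free, which is where your count lives and which nothing in the definition bounds.
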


\begin{proof}
	The membership is easy since one can guess ABox patterns $\qCom(\vec x)$, $\qDif(\vec x)$ with $|\qCom|= n$, $|\qCom\cup \qDif (\vec c)|\leq |\calA|$ and vectors $\vec c$, $\vec d$, such that $\tup{\qCom(\vec x), \qDif(\vec x), \vec c, \vec d, \emptyset}$ is a valid CE.
	Since $q(\vec c)\subseteq \calA$, we have an upper bound on the size of guessed explanation.
	The verification requires polynomial time. This gives membership in $\NP$.
	For $\ELbot$, we have NP-membership following the same argument as in the case of difference-minimal CEs (proof of \Cref{thm:el-ground-syn-size}).

	For hardness, we reduce from the clique problem.
	That is, given a graph $G=(V,E)$ and a number $n\in\mathbb N$, is there a set $S\subseteq V$ s.t. $(u,v)\in E$ for each pair $\{u,v\}\subseteq V$.
	For the reduction, we let $Y_i,X_i$ be concept names for $i\leq n$ and $e,s$ be two distinct role names.
	Then, consider the KB $\calK=\tup{\calT,\calA}$ with
	\begin{align*}
		\calT =  \{\,& \exists e.X_1 \sqcap \dots \exists e.X_{n-1} \subsum Y_n, \\
		& \qquad \vdots \\
		& \exists e.X_1 \sqcap \exists e.X_3\dots \sqcap \exists e.X_{n}  \subsum Y_2, \\
		& \exists e.X_2 \sqcap \dots \sqcap \exists e.X_{n} \subsum Y_1, \\
		& \exists s.Y_1 \sqcap \dots \sqcap \exists s.Y_{n} \subsum Q,  \\
		& Q \sqcap D \subsum  C\} \\
		\calA = &  \{\, e(u,v), e(v,u) \mid (u,v)\in E\, \} \cup \\
		& \{\,s(a,v),s(b,v) \mid v\in V  \, \} \cup\\
		& \{\,X_i(v) \mid i\leq n \, \} \cup \{\, D(a)\,\}
	\end{align*}
	Moreover, we let $P=\tup{\calK, C,a,b}$ be our CP.
	We prove the correctness of our reduction via the following claim.
	\begin{claim}
		$G$ has a clique of size $n$ iff $P$ has a CE with $|\qCom(\vec c)|=n'$ where $n' = 2n + n\times(n-1)$.
	\end{claim}

	\textbf{Intuitively} we enforce with the TBox-axioms that individuals in $X_i$ (and entailed to be in $Y_i$) constitute a clique. Thereby, we have the entailment of $Y_i(v)$ if and only if, it has edges to every individual $u$ in $X_j$ for $i\neq j$.
	This simulates the effect that in each CE for $P$, $\qCom(\vec c)$ have (1) $n$ distinct assertions of the form $s(a,v_i)$ with $i\leq n$,
	(2) $n$ distinct assertions of the form $X_i(v_i)$ with $i\leq n$, and
	(3) $n(n-1)$ distinct assertions of the form $e(v_i,v_j)$ for $i\neq j\leq n$.
	Having some non-distinct assertions in this collection will cause repeating certain assertions for the entailment of some $Y_i(v_i)$.
	As a result, one obtains a \emph{smaller} CE which has $|\qCom|< n'$ (a contradiction).
	That is, one can not artificially increase the size of $\qCom$ without violating the subset-minimality of a CE.

	\begin{claimproof}
		``$\Longrightarrow$''.
		Let $S=\{a_1,\dots, v_n\}$ be a clique of size $n$ in $G$.
		Further, let $\vec x = \tup{z,x_1,\dots,x_n}$ and consider the following ABox patterns.
		\begin{itemize}
			\item $\qCom(\vec x) = \{X_i(x_i), s(z,x_i) \mid i\leq n\} \cup \{e(x_i,x_j) \mid v_i,v_j\in S \}$
			\item $\qDif(\vec x) = \{D(z)\}$
			\item $\vec c = \tup{a,v_1,\dots,v_n}$
			\item $\vec d = \tup{b, v_1,\dots,v_n}$
		\end{itemize}
		Then, we prove that $E_s = \tup{\qCom(\vec x), \qDif(\vec x), \vec c, \vec d,\emptyset}$ is a valid CE for $P$ and that $|\qCom(\vec c)|=n'$.
		Notice that $(v_i,v_j)\in E$ for each $\{v_i,v_j\}\subseteq S$.
		Consider the collection $q(\vec c)\subseteq \calA$ of ABox assertions.
		Then, $\{X_i(v_i),s(a,v_i)\}\subseteq  q(\vec c)$ for each $v_i\in S$ and $i\leq n$.
		Moreover, $e(v_i,v_j)\in q(\vec c)$ for each $v_i,v_j\in S$ with $i\neq j$ since $S$ is a clique.

		\textbf{(I)} { We prove that $\calT,q(\vec c)\models C(a)$.}
		Let $i\leq n$ be fixed.
		Then, $(v_i,v_j)\in E$ for each $v_j\in S$ such that $i\neq j \leq k$.
		As a result, $\calT, \{e(v_i,v_j),X_j(v_j)\}\models \exists e.X_j(v_i)$ for every $i\neq  j\leq n$.
		Additionally, this implies that $\calT, \bigcup_{i\neq j\leq n}\{e(v_i,v_j),X_j(v_j)\}\models Y_i(v_i)$ for each $i\leq n$.
		Moreover, since $\calT, \{s(a,v_i),Y_i(v_i)\}\models \exists s.Y_i(a)$ for each $ i\leq n$, we have that $\calT, q(\vec c)\models C(a)$.
		Which together with $D(a)\in \qCom(\vec c)$ results in $\calT,q(\vec c)\models C(a)$.
		The claim that $\calT,q(\vec d)\models C(b)$ can be proven analogously by simply replacing $a$ and $\vec c$ by $b$ and $\vec d$, respectively.

		\textbf{(II)} $q(\vec c)\subseteq \calA$, $\qCom(\vec d)\subseteq \calA$ follows immediately.

		\textbf{(III)} $q(\vec c)$ is subset-minimal.
		Notice that, since $S$ is a clique of size $n$, all assertions in $q(\vec c)$ are distinct and necessary to get the entailment $\calT, q(\vec c)\models C(a)$.
		Suppose to the contrary, let $q'(\vec c)\subset q(\vec c)$.
		Then, we have the following cases. (A) $D(a)\in q'$ must be true since no other axiom in $\calT$ includes the concept $D$.
		(B) Suppose $X_i(v_i)\not\in q'$ for some $i\leq n$. Then, for $j\neq i$, the entailment $Y_j(v_j)$ can not be established by $q'$.
		This holds since $\{e(v_j,v_i), X_i(v_i)\}\models \exists e.X_i(v_j)$ and without $X_i(v_i)$, one can not obtain $\exists e.X_i(v_j)$ which is necessary for $Y_j(v_j)$ for each $j\neq i$ (recall that $q'(\vec c)\subseteq q(\vec c)$ and therefore $q'(\vec c)$ can not contain a different $X_i(v')\not\in q(\vec c)$).
		Thus $\calT,q'(\vec c)\not\models C(a)$.
		(C) The case for $s(a,v_i)\not\in q'(\vec c)$ and $e(v_i,v_j)\not\in q'(\vec c)$ follows via similar arguments.
		As a result, no proper subset of $q(\vec c)$ yields an explanation for $P$.

		Hence, it follows from (I)--(III) that $E_s$ is a valid CE for $P$.
		We conclude by observing that $|\qCom(\vec c)|=2n + n\times (n-1)$ due to $n$ distinct assertions of the form $X_i(x_i)$ and $s(a,x_i)$ each, and $n\times(n-1)$ assertions due to $e(v_i,v_j)$ for $i\neq j\leq n$.

		``$\Longleftarrow$''.
		Let $E=\tup{\qCom(\vec x), \qDif(\vec x), \vec c, \vec d,\emptyset}$ be a CE for $P$ with $|\qCom(\vec c)|= n'$.
		Since $q(\vec c)$ is subset-minimal, removing any assertion from it would break the entailment $\calT,q(\vec c)\models C(a)$.

		We have the following observations regarding the assertions in $q(\vec c)$.
		First, to achieve $\calT,q(\vec c)\models C(a)$ requires to have $\calT,q(\vec c)\models Q(a)$ since $q(\vec c)\subseteq \calA$.
		Furthermore, to achieve $\calT,q(\vec c)\models Q(a)$, one requires $\calT,q(\vec c)\models \exists s.Y_i(a)$ for each $i\leq n$.
		Since $s(a,v)\in\calA$ for each $v\in V$, we can take $s(a,v_i)\in \qCom(\vec c)$ for each $i\leq n$.
		However, $Y_i(v_i)\not\in \calA$, therefore we must have $\calT,q(\vec c)\models Y_i(v_i)$.
		Now, this can only be achieved if for each such $i\leq n$, $\calT,q(\vec c)\models \bigsqcap_{i\neq j\leq n} \exists e.X_j(v_i)$.
		This in turn requires $e(v_i,v_j),X_j(v_j)$ to be in $q(\vec c)$.
		This holds since $X_j(v)\in\calA$ for each $v\in V$ and $j\leq n$.
		We will see that $e(u,v)\in \qCom(\vec c)$ and $X_j(u),X_j(v)\in \qCom(\vec c)$ for $u,v\in V$ using an argument on the size of $\qCom(\vec c)$.

		Using the above observations, we next establish that $\qCom(\vec c)$ contains assertions of the following form (a) $s(a,v_i)$, (b) $X_i(v_i)$, (c) $ e(v_i,v_j), e(v_j,v_i)$ for $i\neq j\leq n$ and $n$ distinct nodes $v_1,\dots,v_n\in V$.
		In other words, $\qCom(\vec c)$ takes the form of the following \emph{matrices}.
		This is achieved via observations (O1--O5) in the following.
		\[
		\resizebox{0.5\linewidth}{!}{$
			\begin{bmatrix}
				- & e(v_1,v_2) & \dots & e(v_1,v_n) \\
				e(v_2,v_1) & - & \dots & e(v_2,v_n) \\
				\vdots &  & \vdots & \vdots \\
				e(v_n,v_1) &  & \dots & - \\
			\end{bmatrix}
			\begin{bmatrix}
				X_1(v_1) \\
				X_2(v_2) \\
				\vdots    \\
				X_n(v_n)
			\end{bmatrix}
			\begin{bmatrix}
				s(a,v_1) \\
				s(a,v_2) \\
				\vdots    \\
				s(a,v_n)
			\end{bmatrix}
			$}
		\]
		%
		\textbf{(O1)} $D(a)\in \qDif(\vec c)$, since $D(b)\not\in \calA$ and no other axiom in $\calT$ involves $D$. Thus $D(a)\not\in \qCom(\vec c)$.\\
		\textbf{(O2)} For each $X_i$, there is exactly one $v\in V$, s.t. $X_i(v)\in \qCom(\vec c)$.
		Moreover, such an element $v\in V$ is unique for each $i\leq n$.

		We prove this via the following three sub-claims.

		\textbf{(Claim-I)} For each $i\leq n$, there is at least one $v\in V$ s.t. $X_i(v) \in \qCom(\vec c)$. Suppose to the contrary that for some $i\leq n$, $\qCom(\vec c)$ does not contain any assertion of the form $X_i(v)$.
		Then we observe that the entailment of $Y_j(u)$ can not be achieved for any $i \neq j \leq n$ and $u\in V$.
		This holds since $\calT, q(\vec c)\models Y_j(u)$ is true only if $u$ has an $e$-neighbor in $X_\ell$ for each $j \neq\ell \leq n$ including $\ell=i$ (due to the axioms $\bigsqcap_{j \neq \ell \leq n} \exists e.X_\ell\subsum Y_j$). This contradicts our assumption as $X_i(v)\in\calA$ and hence $X_i(v)\not\in \qDif(\vec c)$.

		\textbf{(Claim-II)} For each $i\leq n$, there is at most one $v\in V$ s.t. $X_i(v) \in \qCom(\vec c)$.
		Suppose to the contrary that for some $i\leq n$, there are distinct $v\neq v'$ s.t. $X_i(v), X_i(v')\in \qCom(\vec c)$.
		Now, consider the entailment of $Y_i(v)$.
		It must be the case that $\calT,q(\vec c)\models \bigsqcap_{i \neq j \leq n} \exists e.X_j(v)$ which requires $\{e(v,v_j),X_j(v_j)\mid i \neq j\leq n\}\subseteq q(\vec c)$, thus each such $v$ must have edges to all $v_j$s s.t. $X_j(v_j)\in q(\vec c)$.
		As a result, it suffices to keep only one of $\{X_i(v),X_i(v')\}$ in $q(\vec c)$. But this leads to a contradiction since $q(\vec c)$ has to be subset-minimal.

		\textbf{(Claim-III)} For each $i\leq n$, there is a unique $v\in V$ s.t. $X_i(v) \in \qCom(\vec c)$.
		That is, for any $i\neq j\leq n$, there is no $v\in V$ s.t. $X_i(v), X_j(v)\in \qCom(\vec c)$.
		Suppose to the contrary, and look at the entailment of $Y_j(v)$.
		The entailment requires $\calT,q(\vec c) \models \exists e.X_i(v)$, which in turn is the case if $\{e(v,v'),X_i(v')\}\subseteq q(\vec c)$.
		But this implies that $v'=v$ due to (Case-II), which leads to a contradiction since $e(v,v)\not\in \qCom(\vec c)$ for any $v\in V$.
		Furthermore, allowing $e(v,v)\in \qDif(\vec c)$ would reduce the number of assertions in $\qCom(\vec c)$ due to the minimality (since it only contains necessary assertions).

		Additionally, the following observations can be proved easily.\\
		\textbf{(O3)} For each $i\neq j\leq n$: $X_i(v_i),X_j(v_j)\in q(\vec c) \iff e(v_i,v_j),e(v_j,v_i)\in q(\vec c)$. \\
		\textbf{(O4)} For each $i\leq n$: $X_i(v_i)\in q(\vec c) \iff s(a,v_i)\in q(\vec c)$.\\
		\textbf{(O5)} For each $i\leq n$: $X_i(v_i)\in q(\vec c) \iff \calT,q(\vec c)\models Y_i(v_i)$.\\
		%

		Now, we let $S=\{v \mid X_i(v)\in q(\vec c) \text{ for } i \leq n\}$.
		Then, $|S|=n$, since there are exactly $n$ distinct assertions of the form $X_i(v)$ in $q(\vec c)$.
		The claim that $S$ is a clique follows due to (O3).
		Suppose there are $v_i,v_j\in S$ such that $(v_i,v_j)\not\in E$.
		Now, look at the entailment $\calT,q(\vec c)\models Y_i(v_i)$.
		Since $X_j(v_j)\in \qCom(\vec c)$, one needs additionally $e(v_i,v_j)\in \qCom(\vec c)$ to get the entailment $\calT, \{e(v_i,v_j), X_j(v_j)\} \models \exists e.X_j(v_i)$ for each $i\neq j\leq n$.
		However, $e(v_i,v_j)\not\in \qCom(\vec c)$ since $e(v_i,v_j)\not\in \calA$ due to $(v_i,v_j)\not\in E$.
		This leads to a contradiction. Hence $(v_i,v_j)\in E$ for each $v_i,v_j \in E$.
		Observe that although $e(v_i,v_j)\in \qDif(\vec c)$ achieves the required entailment, it results in a subset of $\qCom(\vec c)$ in an explanation, which violates the claim regarding the size of $\qCom(\vec c)$.
	\end{claimproof}
	This completes the proof to our theorem by observing that the reduction can be achieved in polynomial time.
	%
\end{proof}

\begin{lemma}\label{lem:commonalityALC}
	For $\ALC$, deciding the existence of a syntactic CE with commonality of size at least $n$ is $\EXP$-complete.
\end{lemma}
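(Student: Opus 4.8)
The plan is to prove both membership in $\EXP$ and $\EXP$-hardness for the bounded existence problem; the $\ALCI$ case is entirely analogous, and the verification statement of \Cref{thm:el-ground-syn-sim-size} then follows through \Cref{lem:ver-to-bounded}. For \textbf{membership}, I would first note that in any syntactic CE the set $q(\vec c)\subseteq\Amc$ is a subset-minimal justification for $C(a)$, so the pattern $q(\vec x)$ uses only polynomially many assertions and variables, and since the commonality is mapped at the foil into $\Amc$ we have $|\qCom(\vec d)|\le|\Amc|$; only the difference part can require fresh foil individuals. Re-using the type construction behind \Cref{the:conflict-free-size-upper}, I would argue that whenever a CE with commonality $\geq n$ exists, one exists in which every fresh foil individual is a representative of one of the (at most exponentially many) possible types: collapsing fresh individuals onto type representatives preserves $\Tmc,q(\vec d)\models C(b)$, keeps $\Tmc,q(\vec d)$ consistent (so that a conflict set still exists), and leaves the commonality untouched, as it ranges over genuine $\Amc$-individuals. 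The candidate space --- a minimal justification together with a foil assignment sending each of polynomially many variables either to an $\Amc$-individual or to one of exponentially many type representatives --- is therefore of exponential size, and each candidate is verifiable (entailment of $C(a)$ and $C(b)$, subset-minimality of $q(\vec c)$, and consistency of $\Tmc,q(\vec d)$) using the $\EXP$-complete instance and consistency tests for $\ALC$. Iterating over all candidates and testing $|\qCom|\geq n$ runs in deterministic exponential time.

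For \textbf{hardness}, I would reduce from $\EXP$-complete instance checking in $\ALC$, re-using the copy construction from the proof of \Cref{thm:alc-ground-syn-size}. Given $\Tmc,\Amc$ and a query $A(a)$, I first guard the query so that every justification is non-empty: I add fresh $S,\hat A$, the assertion $S(a)$, and the axiom $A\sqcap S\subsum\hat A$, so that $\Tmc,\Amc\models A(a)$ iff the new KB entails $\hat A(a)$ and every ABox justification of $\hat A(a)$ contains $S(a)$. I then introduce a faithful copy $b$ of $a$ (copying every assertion on $a$, so that $a$ and $b$ satisfy exactly the same concept and role assertions), fresh $B,C$, the axiom $\hat A\sqcap B\subsum C$, and the assertions $\hat A(b),B(b)$, and take the CP $\tup{\tup{\Tmc',\Amc'},C,b,a}$ with fact $b$ and foil $a$. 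Exactly as in \Cref{thm:alc-ground-syn-size}, $C(b)$ holds unconditionally while $C(a)$ fails (no assertion or axiom yields $B(a)$), so the CP is valid regardless of the answer. The key claim is that the CP admits a syntactic CE with commonality $\geq 1$ iff $\Tmc,\Amc\models A(a)$: if it does, take a minimal justification $J$ of $\hat A(a)$, use its copy $J_b$ together with $B(b)$ as $q(\vec c)$, and split the pattern so that the copied part becomes the commonality; since $b$ mirrors every assertion of $a$, instantiating this part at the foil $a$ lands inside $\Amc'$, giving $|\qCom|=|J|\geq 1$. Conversely, if $\Tmc,\Amc\not\models A(a)$, then $\hat A(b)$ is derivable only from the explicit assertion $\hat A(b)$, so the unique minimal justification of $C(b)$ is $\{\hat A(b),B(b)\}$, whose foil instantiation $\{\hat A(a),B(a)\}$ meets $\Amc'$ in the empty set, forcing empty commonality. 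Setting $n=1$ completes the polynomial reduction.

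The \textbf{main obstacle} I anticipate is the ``only if'' direction of the hardness claim: I must exclude \emph{every} alternative minimal justification of $C(b)$ that might inadvertently produce a shared assertion at the foil. This is precisely why the query is guarded --- forcing $S(a)$ into any $\Tmc$-path justification and thereby eliminating the degenerate case $\Tmc\models\top\subsum A$, which would otherwise leave commonality $0$ on a positive instance --- and why $B,C,\hat A$ are fresh, so that $C$ is obtainable only via $\hat A\sqcap B\subsum C$ and $\hat A$ on $b$ only via either the explicit assertion or a copy of a genuine justification. On the membership side the delicate point is the same one in mirror image: arguing that the type-based collapse of fresh foil individuals never shrinks the commonality, which holds because the commonality is defined over $\Amc$-individuals and is therefore invariant under identification of the fresh elements used only in the difference.
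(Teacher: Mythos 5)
Your proof is correct and follows the same skeleton as the paper's: membership by enumerating all candidate CEs in deterministic exponential time, and hardness by reduction from $\EXP$-complete $\ALC$ instance checking via the copy construction of \Cref{thm:alc-ground-syn-size} (fact $b$ a duplicate of $a$, fresh $B,C$, a conjunction axiom producing $C$, so that on positive instances the copied justification becomes the commonality). You depart from the paper in two places, and both departures are improvements. First, your guard ($S(a)$, $A\sqcap S\subsum\hat A$, querying $\hat A$) repairs a degenerate case that the paper's argument overlooks: if $\Tmc\models\top\subsum A$, then $\calK\models A(a)$ holds with an \emph{empty} ABox justification, the unique minimal justification of $C(b)$ in the paper's $\calK'$ is $\{B(b)\}$ (the asserted $A(b)$ being redundant), and hence no CE has commonality $\geq 1$; so the paper's final equivalence---a CE with $|\qDif|\leq 1$ exists iff a CE with $|\qCom|\geq 1$ exists---fails on such inputs. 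Your guard forces $S(a)$ into every justification of $\hat A(a)$, which makes the commonality on positive instances provably nonempty and the reduction correct on all inputs; note that the degenerate instances cannot simply be preprocessed away, since recognizing them is itself $\EXP$-hard. Second, where the paper's membership argument is the one-line claim that all explanations can be tried in exponential time, you supply the missing bound on the search space: $q(\vec c)$ is a minimal justification inside $\Amc$, so the pattern is polynomial, and the type collapse behind \Cref{the:conflict-free-size-upper} caps fresh foil individuals at exponentially many representatives while leaving the commonality untouched (by \ref{itm:foil} it only involves $\Amc$-individuals). Both refinements are sound as you state them.
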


\begin{proof}
	The membership follows, since one can try all possible explanations with $\qCom$ of size at least $n$ in exponential time.

	For hardness, we reconsider the reduction from instance checking for $\ALC$ established in the proof of \Cref{thm:alc-ground-syn-size}.
	Given a KB $\calK= \tup{\calT,\calA}$ and an instance query $A(a)$, we let $\calK'=\tup{\calT',\calA'}$ be another KB where 
	\begin{align*}
		\calT' &= \calT\cup \{A\sqcap B\subsum C\},\\
		\calA' &= \calA\cup \{A(b),B(b)\} \\
		& \quad \cup \{D(b),r(b,c),r(d,b)\mid D(a),r(a,c), r(d,a)\in \calA\}
	\end{align*}
	for fresh concept names $B,C$ and individual $b$.
	Moreover, we let $P= \tup{\calK', C,b,a}$ be our CP.
	Then, it holds that $\calK\models A(a)$ iff $P$ admits a syntactic explanation in $\calK'$ with $\qDif$ of size at most one (Claim~3) iff $P$ admits a syntactic ground explanation in $\calK'$ with $\qCom$ of size at least one.
\end{proof}

\clearpage

\section{Details on the Implementation and the Experiments}

\subsection{The Implemented Method}

We give a more detailed description on how we constructed the sets $\Amc'$ and $\Ibf$ used by the refined super
structure $E_m$:
\begin{description}
 \item[ABox] $\Amc'$ is the union of justifications of~$C(a)$, which we compute differently depending on the
 DL in question. For \ALCI, we simply compute all ABox justifications for $C(a)$, i.e. all
 justifications with the TBox as fixed component.\footnote{We also tried the system from~\cite{DBLP:conf/esws/ChenMPY22} for computing unions of justifications, but found it to be slower for our inputs.}
 For \ELbot, we implemented a more efficient procedure using
 \Evee and \ELK. \ELK is an efficient rule-based reasoner that supports \ELbot. \Evee is a library
 for extracting proofs from reasoners such as \ELK. In particular, \Evee can generate a structure
 that contains all possible inferences that \ELK would perform when deriving a given axiom. From this structure,
 we can direcly read off the union of justifications.

 \item[Individuals] By \Cref{lem:restricting-x} and~\ref{claim:conflict}, we can always find a contrastive explanation if $\Ibf$
 contains at least as many individuals as occur in $\Amc'$.
 To avoid explanations that use arbitrary individuals, we focus on individuals that are related to the foil.
 Define the \emph{role-distance} between two individuals $a$ and $b$ in an ABox $\Amc$ as the minimal number of role assertions
 in $\Amc$ that connect $a$ and $b$. We include in $\Ibf$ 1) all individuals that have in $\Amc$ a role-distance to the foil
 individual that is at most equal to the role distance of an individual in $\Amc'$ to the fact individual,
 and 2) if needed, a number of fresh individual names to ensure $\Ibf$ has at least as many individuals as $\Amc'$.
\end{description}

For computing justifications with fixed components, we modifed the implementation of the justification algorithm
that we downloaded from Github.\footnote{\url{https://github.com/matthewhorridge/owlexplanation}, unchanged since six years.} This implementation constructs a justification by varying a set of axioms until
removing any axiom would break the desired entailment. Our modifications made sure in all places in the code that elements from the fixed set cannot be removed from this variable set of axioms, and then remove
the fixed set from the resulting set of axioms.

The algorithm for computing a difference-minimal CE based on $E_m$ is now as follows, where each step is a refined version of \ref{p:make-consistent}--\ref{p:minimize-conflict}:
\begin{enumerate}[label=\textbf{P\arabic*'}]
 \item While $\Tmc,  (\qCom\cup \qDif)(\vec{d}_m)\models\bot$, compute a justification $\Jmc$ for $\bot$ with fixed component $\Tmc\cup \qCom(\vec{d}_m)$. Remove from $\qDif(\vec{d}_m)$ some axiom from $\Jmc$ that does not invalidate the preconditions of \Cref{lem:restricting-x} and~\ref{claim:conflict}, based on an initially fixed bijection between $\vec{c}_m$ and $\vec{d}_m$.
 \item Compute a justification $\Jmc(\vec{x})$ for $\Tmc, (\qCom\cup\qDif)(\vec{x})\models C(x)$ with fixed component $\Tmc\cup\qCom(\vec{x})$. Set $\qDif(\vec{x}):= \Jmc(\vec{x})$.
 \item Compute a justification $\Jmc(\vec{x})$ for $\Tmc, (\qCom\cup\qDif)(\vec{x})\models C(x)$ with fixed component $\Tmc\cup\qDif(\vec{x})$, and set $\qCom(\vec{x}):=\Jmc(\vec{x})$.
 \item\label{p:minimize-conflict-optimized} Initialize $\Cmc:=\emptyset$, and until $\Tmc, (\Amc\setminus\Cmc)\cup (\qCom\cup\qDif)(\vec{d}_m)\models\bot$, compute a justification $\Jmc$ for it with fixed component $\Tmc\cup (\qCom\cup\qDif)(\vec{d}_m)$,
 and add some axiom of $\Jmc$ to $\Cmc$.
\end{enumerate}
The final patterns $\qCom(\vec{x})$, $\qDif(\vec{x})$ and $\Cmc$ then constitute the computed CE.

\subsection{Generating the CPs}

\newcommand{\ranCon}{\texttt{ranCon}}
\newcommand{\ranAtom}{\texttt{ranAtom}}
\newcommand{\size}{\texttt{size}}

We define the \emph{size} of an \EL concept $C$ as its tree-size, e.g.
$\size(\top)=\size(A)=1$ for $A\in\NC$, $\size(\exists r.C)=1+\size(C)$,
$\size(C\sqcap D)=\size(C)+\size(D)$.

To generate random concepts for a given KB $\Kmc$, we used a recursive procedure $\ranCon(c,n)$ that takes as parameter an individual $c$ and a size bound $n\geq 1$ and returns an \EL concept $C$ of size at most $n$ s.t. $\Kmc\models C(c)$. The procedure $\ranCon$ uses itself the procedure $\ranAtom(c,n)$ that returns a random \emph{atom}, i.e. an \EL concept that is not a conjunction, of size at most $n$.

$\ranAtom(c,n)$ proceeds as follows:
\begin{enumerate}
 \item If $n=1$ or there is no role assertion $r(c,d)\in\Kmc$, return $\top$ or a random concept name $A$ s.t. $\Kmc\models A(c)$
 \item Otherwise, pick a random Boolean value. If it is true, proceed with Step~1. Otherwise, pick a random
 role assertion $r(c,d)\in\Kmc$, set $C:=\ranCon(d,n-1)$, and return $\exists r.C$.
\end{enumerate}
For all the random choices, we used a uniform distribution. For instance, if $c$ satisfies $m$ concept names,
then each concept name, as well as $\top$, is chosen with a probability of $\frac{1}{m}$.

$\ranCon(c,n)$ now repeats the following steps until $m=0$, initializing $C:=\ranAtom(c,n)$ and $n=m-\size(C)$.
\begin{enumerate}
 \item $D=\ranAtom(c,m)$,
 \item $C:=C\sqcap D$
 \item $m:=m-\size(D)$,
\end{enumerate}

We now attempted to generate 10 random CPs $\tup{\Kmc,C,a,b}$ for a given KB $\Kmc$ as follows.
We generated the concept $C=\ranCon(a,5)$, and then tried to select a random individual $b$ s.t.
$\Kmc\not\models C(b)$ and for some $A\in\NC$, $\Kmc\models A(a)$, $A(b)$, or for some $r\in\NR$ and $a',b'\in\NI$,
$r(a,a')$, $r(b,b')\in\Kmc$.

\subsection{Comments on the Reproducibility Checklist}

\textbf{Hyper-Parameters}: Hyper-parameters did not play a central role in our evaluation, since we were not comparing different algorithms or configurations, but rather show-cased a first (unoptimized) prototype showing the general practicality of our new notion of constrastive explanations.

\textbf{Randomness and Seed Functions} We used a random number generator when creating the CPs---to keep the reproducibility simple, we fixed the seed function to 0. However, there are other random factors in the implementation that we were not able to control, and that come through the usage of hash sets in various places in the implementation. We used the standard implementation of HashSet from Java, which shows nondeterministic behavior when selecting an arbitrary element from the set. This affects for instance the computation of justifications, and the computations of repairs in Step~\ref{p:minimize-conflict-optimized}, where we would often pick the first element returned by an iterator on the HashSet. We are aware that this limits the reproducibility of our results, but decided in this way since a more deterministic implementation (e.g. using sortest sets) would significantly affect the runtimes.

\textbf{Code for Preprocessing, Conducting and Analyzing the Experiments.}
For detailed instructions on how to download the datasets, compile the code, preprocess the ontologies, run the experiments and analyze the data, look at the README.md in the root of the source directory. All the scripts needed to reproduce the experiments are in the \emph{Experiments} subfolder. The implementation
of the contrastive explanation generator is as usual in the \emph{src} folder.
We used code comments to make explicit references to the paper, in particular in the classes
\emph{ContrastiveExplanationProblem}, \emph{ContrastiveExplanation} and \emph{ContrastiveExplanationGenerator}. Here, we
give explicit links to components in the definitions and steps in our algorithm.

\end{document}